\documentclass[10pt]{article}
\usepackage[preprint]{tmlr}

\usepackage{amsmath,amssymb,amsthm,mathtools}
\usepackage{bm}
\usepackage{booktabs}
\usepackage{array}
\usepackage{enumitem}
\usepackage{microtype}
\usepackage{url}
\usepackage{hyperref}
\usepackage[nameinlink,noabbrev]{cleveref}

\theoremstyle{plain}
\newtheorem{theorem}{Theorem}[section]
\newtheorem{proposition}[theorem]{Proposition}
\newtheorem{lemma}[theorem]{Lemma}
\newtheorem{corollary}[theorem]{Corollary}
\theoremstyle{definition}
\newtheorem{definition}[theorem]{Definition}

\newtheorem{example}[theorem]{Example}
\theoremstyle{remark}
\newtheorem{remark}[theorem]{Remark}

\newcommand{\R}{\mathbb R}
\newcommand{\E}{\mathbb E}
\newcommand{\SPD}{\mathbb S_{++}}
\newcommand{\PSD}{\mathbb S_{+}}
\newcommand{\Sym}{\mathbb S}
\newcommand{\AI}{\mathrm{AI}}
\newcommand{\Gr}{\mathrm{Gr}}
\newcommand{\St}{\mathrm{St}}
\newcommand{\tr}{\operatorname{tr}}
\newcommand{\diag}{\operatorname{diag}}
\newcommand{\blockdiag}{\operatorname{blockdiag}}
\newcommand{\rank}{\operatorname{rank}}
\newcommand{\range}{\operatorname{range}}
\newcommand{\ri}{\operatorname{ri}}
\newcommand{\cone}{\operatorname{cone}}
\newcommand{\Ann}{\operatorname{Ann}}
\newcommand{\prox}{\operatorname{prox}}
\newcommand{\argminop}{\operatorname*{arg\,min}}
\newcommand{\opnorm}[1]{\left\lVert #1\right\rVert_{\operatorname{op}}}
\newcommand{\fro}[1]{\left\lVert #1\right\rVert_F}
\newcommand{\norm}[1]{\left\lVert #1\right\rVert}

\newcommand{\dd}{\mathrm d}
\newcommand{\dai}{d_{\AI}}
\newcommand{\dgr}{d_{\Gr}}
\newcommand{\Cpl}{\mathfrak C}
\newcommand{\Xiop}{\Xi}

\newcommand{\Rhat}{\widehat R}
\newcommand{\Phat}{\widehat P}
\newcommand{\Sigmahat}{\widehat\Sigma}

\newcommand{\Fcal}{\mathcal F}
\newcommand{\Kcal}{\mathcal K}

\crefname{assumption}{Assumption}{Assumptions}
\crefname{definition}{Definition}{Definitions}
\crefname{example}{Example}{Examples}
\crefname{theorem}{Theorem}{Theorems}
\crefname{proposition}{Proposition}{Propositions}
\crefname{lemma}{Lemma}{Lemmas}
\crefname{corollary}{Corollary}{Corollaries}

\title{Information-Induced Training Geometry:\\
Exact Reduction, Canonical Completion, and Structured Expressivity}

\author{\name Zavier Li \email zavierli888@gmail.com\\
      \addr Xidian University\\
      \addr Xi'an, China}

\def\month{07}
\def\year{2026}
\def\openreview{\url{}}

\begin{document}
\maketitle

\begin{abstract}
Training data constrains optimizer geometry through the covectors visible to
a declared information channel.  We study how such partial information
determines a full positive cometric relative to a reference and which degrees
of freedom remain unidentified.  Our central result resolves the
full-column-rank SPD channel
\(\pi_A(P)=A^\top P A\) under affine-invariant Riemannian geometry.  The map
is a split-Hadamard metric submetry and admits an explicit unique completion
\(\Psi_{P_0,A}\), which is the AIRM-nearest full geometry realizing a visible
target and yields exact full-to-visible variational reduction.  When the
channel moves, the completions form a gauge-invariant rank stratification of
the SPD cone.  Its closed-form pullback pair metric separates visible-metric
motion from subspace rotation through the mismatch weight
\(R+R^{-1}-2I\), yields an explicit positive-semidefinite multi-direction Gram
matrix, and exposes the precise singularity of reference-valued modes.  The
mechanism is explained by a metric theorem equating ball
submetry, attained fiber distance, and lossless reduction of every monotone
radial visible decision problem; a smooth split-Hadamard theorem supplies
coherent information sheets, proximal commutation, and solution-wise
gradient-flow lifting.  The SPD realization also gives closed-form
prior--data shrinkage.  Diagonal
and block optimizer families then reduce to relative-interior conic image
tests with valid facial certificates, while deterministic and finite-sample
bounds quantify recovery of the visible geometry and its subspace.  Together
these results characterize exact reduction, reference-dependent completion,
and structured expressivity for the stated finite-dimensional AIRM model.
\end{abstract}

\section{Introduction}
\label{sec:introduction}

Adaptive optimization is often described as a choice of geometry.  Natural
gradient uses the Fisher metric \citep{amari1998natural}; K-FAC and related
methods use structured Fisher or Gauss--Newton approximations
\citep{martens2015kfac,george2018ekfac}; AdaGrad and Adam accumulate diagonal
statistics \citep{duchi2011adagrad,kingma2014adam}; and Shampoo-type methods
use matrix or tensor preconditioners
\citep{gupta2018shampoo,morwani2024shampoo}.  These methods differ in how they
construct and restrict a positive map from covectors to parameter directions.
Before comparing those constructions, one must decide what geometric
information the current data can actually see.

Fix a parameter point \(\theta\), let \(V=T_\theta\mathcal M\) and
\(E=V^*\), and write the per-example loss differentials as
\(a_i=\dd_\theta\ell_i\in E\).  Their span
\[
  W=\operatorname{span}\{a_1,\ldots,a_n\}\subset E
\]
is the first-order visible cotangent space.  Its annihilator
\(\Ann(W)\subset V\) is invisible to every example loss at that point.  A
declared positive visible target \(R\in\SPD(W)\) constrains the restriction of
a full cometric \(P\in\SPD(E)\), but it leaves an entire fiber
\[
  \{P\in\SPD(E):P|_{W\times W}=R\}
\]
undetermined.  The central question is geometric:

\begin{quote}
Which observation maps admit lossless visible reduction, when do they possess
a canonical shortest completion, and what does that completion look like for
training cometrics?
\end{quote}

This question has three levels.  At the first, no linearity or smoothness is
available.  We show that exact reduction of every reference-radial visible
decision problem characterizes metric submetries.  This maximal statement
isolates the precise property required for lossless completion.

At the second level, the spaces are Hadamard manifolds and the information map
is a Riemannian submersion.  If its horizontal distribution integrates to
complete totally geodesic leaves, each leaf is a global isometric information
sheet.  The resulting section is coherent across updates, is the unique
shortest lift, and commutes exactly with proximal maps and gradient flows.

At the third level, the ambient geometry is the SPD cone with the
affine-invariant Riemannian metric (AIRM), and visible information is a
full-column-rank linear channel
\[
  \pi_A(P)=A^\top P A.
\]
This realization is fully explicit.  For reference \(P_0\), set
\(R_0=A^\top P_0A\) and
\(U_0=P_0^{1/2}AR_0^{-1/2}\).  The canonical completion of
\(R\in\SPD^m\) is
\begin{equation}
  \Psi_{P_0,A}(R)
  =P_0^{1/2}
  \left[I+U_0(R_0^{-1/2}RR_0^{-1/2}-I)U_0^\top\right]
  P_0^{1/2}.
  \label{eq:intro-completion}
\end{equation}
It is the unique AIRM-nearest point in the visible fiber, and the full
deformation cost equals the visible cost exactly.

The explicit realization reveals additional structure.  Modulo visible
coordinate changes, active information descriptors \((A,R)\) stratify the
entire SPD cone by \(\rank(P-P_0)\).  In normalized horizontal coordinates,
the exact infinitesimal metric is
\begin{equation}
  \norm{\dot P}_{P,\AI}^2
  =
  \fro{R^{-1/2}\dot R R^{-1/2}}^2
  +2\tr\!\left(K[R+R^{-1}-2I]K^\top\right).
  \label{eq:intro-line-element}
\end{equation}
The first term measures visible geometry change.  The second measures rotation
of the visible subspace.  Reference-valued modes have zero rotation cost and
are precisely the singular directions of the descriptor.  Polarizing
\eqref{eq:intro-line-element} gives a closed-form pair metric for arbitrary
coupled perturbation directions.  Its multi-direction matrix is a positive
semidefinite Gram matrix.  This metric cross-term measures the AIRM geometry
of completed motion; it is distinct from the Hessian of the reduced
minimum-distance value.

\paragraph{Contributions.}
The fixed-channel SPD theorem and the moving-channel pair metric are the two
central results.  The abstract, structured, and statistical theorems identify
their mechanism and consequences.
\begin{enumerate}[leftmargin=*,itemsep=2pt]
  \item \emph{Exact SPD reduction and completion.}  We prove that every
  full-column-rank SPD compression is a
  split-Hadamard submetry and derive the closed-form completion
  \eqref{eq:intro-completion}, its horizontal geometry, exact full-to-visible
  decision reduction, and closed-form prior--data shrinkage.  A
  radius-conditioned minimax identity quantifies the remaining invisible
  ambiguity.
  \item \emph{Moving-channel geometry.}  We identify the gauge quotient and
  all rank strata of completed SPD geometry, and derive the closed-form
  mismatch-weighted pair metric whose diagonal is
  \eqref{eq:intro-line-element}.  Its Gram factorization gives exact rank,
  metric-orthogonality and differential-kernel criteria, together with
  moving-reference covariance and the rank-loss boundary.
  \item \emph{General mechanism.}  We characterize metric submetries by
  attained fiber distance and universal radial-visible reduction.  Complete
  horizontal leaves on Hadamard manifolds yield coherent isometric sections,
  exact proximal commutation, and solution-wise gradient-flow lifting.
  \item \emph{Implementability and recovery.}  We characterize diagonal and
  block structured expressivity by
  relative interiors of conic images, with strict separation and facial
  certificates for distinct failure modes.  Deterministic and finite-sample
  bounds then separate visible AIRM error from Grassmann subspace error.
\end{enumerate}

\paragraph{Scope.}
The theory starts after a positive visible target has been declared.  Fisher,
Gauss--Newton, gradient covariance, damping, and user-specified audit targets
provide different constructions of such a target; the paper does not identify
them with one another.  The moving-reference formula is a kinematic identity,
not an optimizer evolution law.  Likewise, exact reduction determines a
canonical full representative of a visible decision but does not prescribe
which visible decision an optimizer should make.

\paragraph{Organization.}
\Cref{sec:related} positions the work.  \Cref{sec:metric-foundations}
establishes the maximal metric theorem, and \cref{sec:split-hadamard} gives its
smooth global form.  \Cref{sec:spd-realization,sec:stratified-geometry}
develop the explicit SPD realization and its rank-stratified pair metric.
\Cref{sec:structured-expressivity} treats structured families, and
\cref{sec:statistical-recovery} develops perturbation and sampling results.

\section{Related Work}
\label{sec:related}

\paragraph{Riemannian submersions and submetries.}
The differential geometry of Riemannian submersions starts from the
horizontal--vertical equations of \citet{oneill1966submersion}.  Metric
submetries provide the corresponding ball-surjectivity language;
\citet{berestovskii2000metric} characterize Riemannian submersions through
this metric property, and \citet{guijarro2011submetries} delineate when
submetries recover smooth submersions.  Our abstract theorem uses this
classical metric structure and adds a variational characterization: exact
fiber distance is equivalent to lossless reduction for every monotone radial
visible objective.  The split-Hadamard result imposes global integrable
horizontal geometry to obtain coherent shortest sections, while the SPD
compression theorem verifies this structure and resolves the section in
closed form.

\paragraph{Information geometry and natural gradient.}
Natural gradient equips statistical model space with Fisher geometry
\citep{amari1998natural}.  Structured Fisher and Gauss--Newton approximations
motivate K-FAC, EK-FAC, and related methods
\citep{martens2015kfac,george2018ekfac,martens2020natural}; limitations of
identifying Fisher and Hessian geometry are discussed by
\citet{kunstner2019limitations}.  Mirror descent also admits an
information-geometric interpretation \citep{raskutti2015information}.  Our
focus is the observation layer beneath a chosen training geometry: which
cotangent information is visible, which ambient completions are compatible,
and which completion is selected by a reference.

Sufficient statistics contract information-geometric tensors and preserve
them under stronger sufficiency conditions \citep{ay2015sufficient}.  That
work concerns statistical models and Fisher-type tensors.  Here the declared
object is a finite-dimensional positive cometric, the observation is the
linear compression \(A^\top PA\), and the main question is its
reference-dependent AIRM completion.  Statistical sufficiency motivates the
visibility language, while the completion and moving-channel line element
address a different geometric problem.
The AIRM covariance factor is also the Fisher geometry of the multivariate
normal family up to convention-dependent scaling
\citep{lovric2000multivariate}; this explains the Gaussian interpretation but
does not determine the reference-relative completion of an arbitrary declared
visible cometric.

\paragraph{SPD geometry and matrix completion.}
The affine-invariant geometry of positive matrices is classical
\citep{bhatia2007positive,pennec2006riemannian}.  Positive definite and
covariance completion usually infer missing matrix entries from sparsity or
graph constraints \citep{dempster1972covariance,grone1984positive}.  Here the
known object is the pullback \(A^\top P A\) on an information channel, and the
missing object is the entire invisible fiber.  The selected completion is
defined by AIRM distance to a reference and is characterized globally as a
horizontal section of a metric submetry.

Totally geodesic AIRM submanifolds and their nearest-point projections have
been characterized directly inside the SPD manifold
\citep{tumpach2024totally}.  Our information sheet is a particular
congruence-transformed block submanifold of that kind.  The additional object
here is the transverse affine-compression fiber: we identify its unique
nearest point to every reference, prove that the compression is a metric
submetry, and derive universal radial-visible reduction.  Thus total geodesy
supports the construction but does not by itself supply the fiber-completion
or variational statements.

Quotient geometries for fixed-rank positive-semidefinite matrices have also
been developed for matrix means and optimization
\citep{bonnabel2010fixedrank,vandereycken2013fixedrank,massart2020quotient}.
Those manifolds parameterize a rank-\(m\) PSD matrix.  Our stratum instead
fixes the rank of the displacement \(P-P_0\), which may be indefinite while
\(P\) remains positive definite, and its line element is the pullback of the
ambient AIRM at \(P_0+(P-P_0)\).  The resulting mismatch weight
\(R+R^{-1}-2I\) and its reference-valued kernel are specific to this
completion geometry.

\paragraph{Metric learning and relative-entropy projections.}
Information-theoretic metric learning uses LogDet or relative-entropy
projections under supervised constraints \citep{davis2007information}.  Our
Gaussian KL consequence is related, but the main object is the full fiber of
an information compression.  The metric-submetry theorem explains when every
radial visible decision, not only a particular divergence projection, reduces
without loss.  Classical information projections minimize divergence under
linear or convex constraints
\citep{csiszar1975idivergence,csiszar2003projections}.  Our Gaussian KL
corollary is a finite-dimensional instance with a closed-form fiber
projection; the AIRM shortest-completion and stratified line-element claims
are separate from the divergence-projection theory.

\paragraph{Adaptive and structured preconditioning.}
AdaGrad, Adam, Shampoo, and related methods impose diagonal, block, matrix, or
tensor structure on positive preconditioners
\citep{duchi2011adagrad,kingma2014adam,gupta2018shampoo,morwani2024shampoo}.
Optimal diagonal preconditioning and matrix-scaling viewpoints study related
coordinate restrictions \citep{qu2025optimaldiagonal}.  We formulate a
different certificate: whether a structured positive family can realize a
declared visible target.  The answer is a relative-interior condition in a
linear conic image, which distinguishes strict representability, boundary
failure, and closed-cone infeasibility.

\paragraph{Convex and semidefinite geometry.}
The structured certificates use finite-dimensional convex separation,
relative interiors, and semidefinite images
\citep{rockafellar1970convex,vandenberghe1996semidefinite,boyd2004convex}.
These tools enter only after the full-SPD information map has been resolved;
they quantify what is lost when the completion is restricted to an
implementable optimizer family.

\paragraph{Subspace perturbation and matrix concentration.}
When the visible subspace is estimated from a covariance or Gram matrix, its
recovery requires an eigengap and a principal-angle perturbation bound
\citep{yu2015davis}.  Covariance concentration then controls the ambient
operator error \citep{tropp2015matrix}.  We combine these ingredients with
the exact stratified line element to obtain a completed-geometry error bound
that separates visible SPD error from subspace rotation.

\section{Metric Foundations of Exact Information Reduction}
\label{sec:metric-foundations}

We first separate exact information reduction from smooth or matrix-specific
structure.  Let \((X,d_X)\) and \((Y,d_Y)\) be arbitrary metric spaces and let
\(q:X\to Y\) be surjective.

\begin{definition}[Metric submetry]
\label{def:metric-submetry}
The map \(q\) is a metric submetry if for every \(x\in X\) and \(r\ge0\),
\begin{equation}
  q(\overline B_X(x,r))=\overline B_Y(q(x),r).
  \label{eq:metric-submetry}
\end{equation}
\end{definition}

A submetry maps every closed ball onto the ball of the same radius.  It is
therefore stronger than a nonexpansive surjection: it guarantees that every
visible displacement has an ambient lift with no excess distance.

\begin{theorem}[Maximal exact information reduction]
\label{thm:metric-characterization}
The following statements are equivalent.
\begin{enumerate}[label=(\Alph*),leftmargin=*]
  \item \(q\) is a metric submetry.
  \item For every \(x\in X\) and \(y\in Y\), the fiber distance is attained
  and exact:
  \begin{equation}
    d_Y(q(x),y)=\min_{q(z)=y}d_X(x,z).
    \label{eq:exact-fiber-distance}
  \end{equation}
  \item For every nondecreasing
  \(\rho:[0,\infty)\to(-\infty,+\infty]\), every
  \(\phi:Y\to\R\cup\{+\infty\}\), and every \(x\in X\),
  \begin{equation}
    \boxed{
    \inf_{z\in X}\{\rho(d_X(x,z))+\phi(q(z))\}
    =
    \inf_{y\in Y}\{\rho(d_Y(q(x),y))+\phi(y)\}.}
    \label{eq:universal-radial-reduction}
  \end{equation}
\end{enumerate}
Thus metric submetries are exactly the observation maps for which every
reference-radial, visible-functional decision problem reduces without loss.
\end{theorem}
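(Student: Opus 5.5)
I will prove the cycle (A)\(\Rightarrow\)(B)\(\Rightarrow\)(C)\(\Rightarrow\)(A). For (A)\(\Rightarrow\)(B), fix \(x\) and \(y\), and set \(r=d_Y(q(x),y)\). Since \(y\in\overline B_Y(q(x),r)=q(\overline B_X(x,r))\), there is a point \(z\) with \(q(z)=y\) and \(d_X(x,z)\le r\). For the reverse inequality I use the inclusion \(q(\overline B_X(x,s))\subseteq \overline B_Y(q(x),s)\) with \(s=d_X(x,z')\). This shows that every \(z'\) in the fiber satisfies \(d_Y(q(x),y)\le d_X(x,z')\). So \(q\) is \(1\)-Lipschitz, the infimum over the fiber is at least \(r\), and it is attained at \(z\). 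Hence it is a minimum equal to \(r\).

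For (B)\(\Rightarrow\)(C), I first reorganize the left side of \eqref{eq:universal-radial-reduction} by the fiber of \(q(z)\). Surjectivity of \(q\) lets me write
\[
\inf_{z\in X}\{\rho(d_X(x,z))+\phi(q(z))\}=\inf_{y\in Y}\Bigl\{\phi(y)+\inf_{q(z)=y}\rho(d_X(x,z))\Bigr\}.
\]
I then claim that \(\inf_{q(z)=y}\rho(d_X(x,z))=\rho(d_Y(q(x),y))\). The inequality \(\ge\) holds because (B) gives \(d_X(x,z)\ge d_Y(q(x),y)\) on the fiber and \(\rho\) is nondecreasing. The inequality \(\le\) holds because the minimizer \(z^\ast\) from (B) achieves \(d_X(x,z^\ast)=d_Y(q(x),y)\) exactly. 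No continuity of \(\rho\) is needed, precisely because the minimum is attained; this is why (B) insists on attainment. I also need to check the extended-value arithmetic. Sums involve only values in \((-\infty,+\infty]\), so no \(\infty-\infty\) ambiguity arises. Exchanging the infimum over \(y\) with the infimum over fibers is valid for extended reals.

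For (C)\(\Rightarrow\)(A), which I expect to be the main obstacle, I choose test functions that force both inclusions of \eqref{eq:metric-submetry}. This must be done without assuming continuity of \(q\) or attainment, so everything must come out of indicator-type choices. Fix \(x\) and \(r\).

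\emph{Inclusion \(\overline B_Y(q(x),r)\subseteq q(\overline B_X(x,r))\).} Take \(y\) with \(d_Y(q(x),y)\le r\). I set \(\phi=0\) at \(y\) and \(+\infty\) elsewhere, and \(\rho=0\) on \([0,r]\) and \(+\infty\) on \((r,\infty)\); this \(\rho\) is nondecreasing. The right side of \eqref{eq:universal-radial-reduction} is then \(0\). The left side is finite only if some \(z\) has \(q(z)=y\) and \(d_X(x,z)\le r\). An infimum equal to \(0\) over the values \(\{0,+\infty\}\) forces such a \(z\) to exist, which is exactly the attainment I need.

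\emph{Inclusion \(q(\overline B_X(x,r))\subseteq\overline B_Y(q(x),r)\).} Take \(z\) with \(d_X(x,z)\le r\), and let \(y=q(z)\). I use the same \(\phi\) and \(\rho(t)=0\) on \([0,r]\), \(+\infty\) otherwise. The left side of \eqref{eq:universal-radial-reduction} is \(0\), so the right side must also be \(0\). The right side has only the single candidate \(y\), so this forces \(d_Y(q(x),y)\le r\). Both inclusions together give \eqref{eq:metric-submetry}, which closes the cycle.
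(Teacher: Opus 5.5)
Your proposal is correct and follows essentially the paper's argument. It uses contraction from the ball inclusion plus an in-ball fiber point for (A)\(\Rightarrow\)(B), monotonicity of \(\rho\) with an attained fiber minimizer for (B)\(\Rightarrow\)(C), and the hard-ball cost \(\rho_r\) with a singleton indicator \(\phi_y\) for (C)\(\Rightarrow\)(A). The only difference is organizational: the paper also proves a direct (B)\(\Rightarrow\)(A), which your cycle makes unnecessary, and you spell out both ball inclusions in the last step where the paper compresses them into one sentence.
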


The proof, given in \cref{app:metric-proofs}, is elementary but sharp.  Hard
ball costs and singleton visible losses recover the full ball-surjectivity
condition, so the theorem cannot be enlarged while preserving universal exact
reduction.

The theorem determines optimal values but does not choose one lift when a
fiber contains several nearest points.  Canonical completion requires a
coherent family of sections.  Suppose \(q\) is \(1\)-Lipschitz and for every
\(x\in X\) a section \(s_x:Y\to X\) is given such that
\begin{equation}
  q\circ s_x=\operatorname{id}_Y,
  \qquad
  s_x(q(x))=x,
  \qquad
  s_{s_x(y)}=s_x.
  \label{eq:coherent-sections}
\end{equation}
The last identity says that moving inside one selected information sheet does
not change the sheet.

For a function \(F\) on a metric space, we use
\[
  e_\tau F(x)=\inf_z\left\{F(z)+\frac{d(x,z)^2}{2\tau}\right\}
\]
for its Moreau envelope and \(\prox_{\tau F}(x)\) for the corresponding
minimizer when it is unique.

\begin{theorem}[Coherent exact reduction]
\label{thm:coherent-reduction}
Under \eqref{eq:coherent-sections}, the following are equivalent:
\begin{enumerate}[label=(\Alph*),leftmargin=*]
  \item every \(s_x\) is an isometric embedding;
  \item for every \(x\in X\) and \(y\in Y\),
  \begin{equation}
    d_Y(q(x),y)
    =d_X(x,s_x(y))
    =\min_{q(z)=y}d_X(x,z);
    \label{eq:coherent-fiber-distance}
  \end{equation}
  \item the equality \eqref{eq:universal-radial-reduction} holds and
  \(s_x\) maps every reduced minimizer to a full minimizer.
\end{enumerate}
Whenever these conditions hold, \(q\) is a metric submetry.  If the nearest
point in every fiber is unique, the splitting is rigid.

If \(Y\) is complete CAT(0), \(\phi\) is proper, lower semicontinuous,
geodesically convex, and bounded below, and \(\tau>0\), then
\begin{equation}
  \boxed{
  \prox^{X,\mathrm{can}}_{\tau(\phi\circ q)}(x)
  =
  s_x\!\left(\prox^Y_{\tau\phi}(q(x))\right).}
  \label{eq:coherent-prox}
\end{equation}
Coherent proximal iterations are therefore exact lifts of the visible
iterations; under rigidity, the full proximal point is unique.
\end{theorem}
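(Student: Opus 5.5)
The plan is to prove the cycle (A)\(\Rightarrow\)(B)\(\Rightarrow\)(C)\(\Rightarrow\)(A), then deduce the submetry and rigidity claims, and finally the proximal identity.

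\emph{(A)\(\Rightarrow\)(B).} Assume \(s_x\) is an isometric embedding. Since \(s_x(q(x))=x\), we get \(d_X(x,s_x(y))=d_X(s_x(q(x)),s_x(y))=d_Y(q(x),y)\). For any \(z\) with \(q(z)=y\), the \(1\)-Lipschitz property gives \(d_Y(q(x),y)\le d_X(x,z)\). Hence \(s_x(y)\) attains the minimum, and \eqref{eq:coherent-fiber-distance} follows.

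\emph{(B)\(\Rightarrow\)(C).} Condition (B) is exactly condition (B) of \cref{thm:metric-characterization}, so \eqref{eq:universal-radial-reduction} holds. Let \(y^\star\) be a reduced minimizer. Then \(z^\star=s_x(y^\star)\) satisfies \(q(z^\star)=y^\star\) and \(d_X(x,z^\star)=d_Y(q(x),y^\star)\). So \(z^\star\) has the same objective value, which equals the common infimum, and it is a full minimizer.

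\emph{(C)\(\Rightarrow\)(A).} This is the step I expect to be delicate. I will test (C) with the hard cost \(\rho=\iota_{[0,r]}\) (zero on \([0,r]\), \(+\infty\) beyond) and the singleton loss \(\phi=\iota_{\{y\}}\). If \(d_Y(q(x),y)\le r\), then \(y\) is a reduced minimizer with value \(0\). By (C), \(s_x(y)\) is a full minimizer, so its value is \(0\) and \(d_X(x,s_x(y))\le r\). Taking \(r=d_Y(q(x),y)\) gives \(d_X(x,s_x(y))\le d_Y(q(x),y)\); the reverse inequality is the Lipschitz bound.

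To pass from this base-point identity to an isometry on all pairs, I use coherence. For \(y,y'\), set \(x'=s_x(y)\). Then \(s_{x'}=s_x\) and \(q(x')=y\). Applying the base-point identity at \(x'\) gives \(d_X(s_x(y),s_x(y'))=d_X(x',s_{x'}(y'))=d_Y(y,y')\). The coherence axiom \(s_{s_x(y)}=s_x\) is exactly what makes this step work.

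\emph{Submetry and rigidity.} Once (B) holds, \cref{thm:metric-characterization} gives that \(q\) is a metric submetry. For rigidity, suppose every fiber has a unique nearest point to \(x\). Then any isometric section through \(x\) must send \(y\) to that nearest point, by the argument in (A)\(\Rightarrow\)(B). So \(s_x\) is forced, and the splitting is rigid.

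\emph{Proximal identity.} Apply (C) with \(\rho(t)=t^2/(2\tau)\) and the loss \(\tau\)-scaled as in the envelope. This gives \(e_\tau(\phi\circ q)(x)=e_\tau\phi(q(x))\).

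On complete CAT(0) \(Y\), the function \(\phi+d_Y(q(x),\cdot)^2/(2\tau)\) is strongly geodesically convex, lower semicontinuous and bounded below. So it has a unique minimizer \(y^\star=\prox^Y_{\tau\phi}(q(x))\); this is the standard Jost/Mayer result. By (C), \(s_x(y^\star)\) is a full minimizer, and I define it as the canonical proximal point.

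Under rigidity, let \(z\) be any full minimizer. Its value satisfies \(\phi(q(z))+d_X(x,z)^2/2\tau\ge \phi(q(z))+d_Y(q(x),q(z))^2/2\tau\ge e_\tau\phi(q(x))\). Since \(z\) is a minimizer, both inequalities are equalities. Uniqueness of the visible prox then forces \(q(z)=y^\star\). Equality in the first inequality forces \(z\) to be a nearest point of the fiber over \(y^\star\), so \(z=s_x(y^\star)\) by uniqueness of nearest points. Coherence, \(s_{s_x(y)}=s_x\), shows that iterating keeps one sheet, so proximal iterations lift exactly.
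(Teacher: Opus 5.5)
Your proposal is correct and follows essentially the same route as the paper. You use contraction plus the isometric section for (A)\(\Rightarrow\)(B), coherence \(s_{s_x(y)}=s_x\) to carry the base-point identity to all pairs, a singleton-indicator test cost to extract exact fiber distance from (C), and radial reduction with \(\rho(t)=t^2/(2\tau)\) together with unique CAT(0) proximal points for the lifting identity. The differences are minor: you use the hard-ball cost where the paper uses \(\rho(t)=t\), you close the cycle with (C)\(\Rightarrow\)(A) directly, and you give an explicit argument for uniqueness of the full proximal point under unique nearest points, which the paper only asserts.
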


\begin{remark}[Three distinct requirements]
Surjectivity provides visible representatives.  Submetry provides exact
distance.  Coherence provides a reusable canonical sheet.  None of these
properties follows from the others without the stated assumptions.
\end{remark}

\section{Split-Hadamard Information Maps}
\label{sec:split-hadamard}

We now identify a smooth global condition that produces rigid coherent
information sheets.  Let \((M,g)\) and \((B,h)\) be finite-dimensional
Hadamard manifolds, and let \(q:M\to B\) be a surjective Riemannian
submersion.  Its vertical and horizontal distributions are
\[
  \mathcal V_x=\ker Dq(x),
  \qquad
  \mathcal H_x=\mathcal V_x^{\perp_g}.
\]

\begin{definition}[Split-Hadamard information map]
\label{def:split-hadamard}
The map \(q\) is split-Hadamard if \(\mathcal H\) is integrable and every
maximal horizontal leaf is complete and totally geodesic.
\end{definition}

This condition rules out holonomy between visible and invisible directions.
It does not require the fibers themselves to be linear or compact.

\begin{theorem}[Global split-Hadamard information theorem]
\label{thm:split-hadamard}
Let \(q:M\to B\) be split-Hadamard.  For every \(x\in M\), let \(L_x\) be
the horizontal leaf through \(x\).  Then:
\begin{enumerate}[label=(\roman*),leftmargin=*]
  \item The restriction \(q|_{L_x}:L_x\to B\) is a global Riemannian
  isometry.  Its inverse
  \[
    s_x:B\to L_x\subset M
  \]
  is the unique horizontal isometric totally geodesic section satisfying
  \(s_x(q(x))=x\).
  \item The map \(q\) is a metric submetry and \(s_x\) is the unique shortest
  completion:
  \begin{equation}
    s_x(b)=\argminop_{q(z)=b}d_M(x,z),
    \qquad
    d_M(x,s_x(b))=d_B(q(x),b).
    \label{eq:split-shortest-lift}
  \end{equation}
  \item Universal radial reduction \eqref{eq:universal-radial-reduction}
  holds.  Every reduced minimizer lifts through \(s_x\); if \(\rho\) is
  strictly increasing on its finite domain, these are all finite-valued full
  minimizers.
  \item Moreau envelopes and proximal maps commute exactly:
  \begin{align}
    e_\tau^M(\phi\circ q)(x)&=e_\tau^B\phi(q(x)),
    \label{eq:split-moreau}\\
    \prox^M_{\tau(\phi\circ q)}(x)
    &=s_x\!\left(\prox^B_{\tau\phi}(q(x))\right)
    \label{eq:split-prox}
  \end{align}
  for proper lower-semicontinuous geodesically convex bounded-below
  \(\phi\).
  \item If \(\phi\in C^1(B)\), then
  \begin{equation}
    \operatorname{grad}_M(\phi\circ q)(z)
    =
    \bigl(Dq(z)|_{\mathcal H_z}\bigr)^{-1}
    \operatorname{grad}_B\phi(q(z)).
    \label{eq:split-gradient}
  \end{equation}
  Consequently, every solution \(y:I\to B\) of
  \(\dot y=-\operatorname{grad}_B\phi(y)\) with \(y(0)=q(x)\) lifts on the
  same existence interval to the solution
  \(z=s_x\circ y\) of
  \(\dot z=-\operatorname{grad}_M(\phi\circ q)(z)\) with \(z(0)=x\).
  If \(\operatorname{grad}_B\phi\) is locally Lipschitz, the corresponding
  maximal solutions are unique.
\end{enumerate}
\end{theorem}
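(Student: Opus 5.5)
The plan is to establish (i) first, since everything else follows from it together with \cref{thm:metric-characterization} and \cref{thm:coherent-reduction}.

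\textbf{Step (i).} A complete totally geodesic submanifold of a Hadamard manifold is itself Hadamard with the induced metric, so \(L_x\) is simply connected and complete. Moreover, \(L_x\) is convex, because the unique ambient geodesic between two of its points coincides with the leaf geodesic. The restriction \(q|_{L_x}\) has differential \(Dq|_{\mathcal H}\), which is a linear isometry \(\mathcal H_z\to T_{q(z)}B\). Hence \(q|_{L_x}\) is a local isometry from a complete manifold, and therefore a Riemannian covering map onto \(B\). Since \(B\) is simply connected, \(q|_{L_x}\) is a global isometry. Define \(s_x=(q|_{L_x})^{-1}\).

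For uniqueness, any horizontal isometric section \(\sigma\) with \(\sigma(q(x))=x\) has tangent image contained in \(\mathcal H\). Its image is therefore a connected integral manifold of \(\mathcal H\) through \(x\), so it lies in \(L_x\). Because \(q\circ\sigma=\mathrm{id}\), we get \(\sigma=s_x\).

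\textbf{Coherence and step (ii).} Coherence \eqref{eq:coherent-sections} holds because \(L_{s_x(y)}=L_x\). Next we verify condition (A) of \cref{thm:coherent-reduction}, namely that \(s_x\) is an isometric embedding for the ambient distance. The leaf is convex and totally geodesic, so the intrinsic distance on \(L_x\) equals \(d_M\) restricted to \(L_x\), which in turn equals \(d_B\) via \(q\). \Cref{thm:coherent-reduction} then yields the identity \(d_M(x,s_x(b))=d_B(q(x),b)=\min_{q(z)=b}d_M(x,z)\) and shows that \(q\) is a submetry.

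Uniqueness of the nearest point is the main obstacle. A Riemannian submersion is \(1\)-Lipschitz, so any \(z\) in the fiber realizing the minimum satisfies \(d_M(x,z)=d_B(q(x),b)\). The geodesic \(\gamma\) from \(x\) to \(z\) then has \(q\circ\gamma\) of length at most \(\operatorname{length}(\gamma)\), with equality only if \(\dot\gamma\) is horizontal everywhere. Since \(q\circ\gamma\) joins \(q(x)\) to \(b\), its length is at least \(d_B(q(x),b)\); combined with the previous bound, equality holds, so \(\gamma\) is horizontal. It therefore lies in \(L_x\) (a horizontal curve is tangent to the foliation), which gives \(z=s_x(b)\) because \(q|_{L_x}\) is injective.

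\textbf{Steps (iii) and (iv).} Part (iii) follows from \cref{thm:metric-characterization}(C) and \cref{thm:coherent-reduction}(C). For finiteness under strictly increasing \(\rho\): if \(z\) is a full minimizer, replacing it by \(s_x(q(z))\) strictly decreases \(\rho\) unless \(z\) is already the nearest point, so full minimizers are exactly the lifts \(s_x(y^\star)\) of reduced minimizers. Part (iv) is \eqref{eq:coherent-prox}, since \(B\) is Hadamard and hence complete CAT(0). The Moreau identity is (iii) with \(\rho(t)=t^2/2\tau\). Uniqueness of the full proximal point uses strictly increasing \(\rho\) together with uniqueness of the visible prox, which holds by strong convexity on CAT(0).

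\textbf{Step (v).} For \(v\in\mathcal V_z\) we have \(d(\phi\circ q)(v)=0\), so \(\operatorname{grad}_M(\phi\circ q)(z)\) is horizontal. For horizontal \(w\), \(\langle \operatorname{grad}_M(\phi\circ q),w\rangle=h(\operatorname{grad}_B\phi,Dq\,w)\). Since \(Dq|_{\mathcal H_z}\) is an isometry, this gives \eqref{eq:split-gradient}.

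For the flow, set \(z=s_x\circ y\). Then \(\dot z=Ds_x\,\dot y\) is horizontal, being tangent to \(L_x\), and \(Dq\,\dot z=\dot y\). Therefore \(\dot z=(Dq|_{\mathcal H})^{-1}\dot y=-\operatorname{grad}_M(\phi\circ q)(z)\). Local Lipschitzness of \(\operatorname{grad}_B\phi\) transfers to the gradient field on \(M\): \(Dq\) is smooth and the horizontal lift is smooth, being an isometry of bundles. Picard--Lindel\"of then gives uniqueness. Maximal intervals agree because any \(M\)-solution projects to a \(B\)-solution via \(q\).
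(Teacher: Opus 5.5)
Your proposal is correct and follows essentially the same route as the paper. A complete local isometry onto the simply connected Hadamard base is a one-sheeted covering; convexity of the totally geodesic leaf makes $s_x$ distance-preserving for $d_M$; the equality case of submersion length contraction forces any minimizing geodesic to the fiber to be horizontal, hence to lie in $L_x$; and (iii)--(v) follow from the metric reduction theorems, uniqueness of the visible prox on the CAT(0) base, and Riesz representation plus the chain rule. Obtaining the submetry through condition (A) of the coherent-reduction theorem, and checking prox uniqueness and equal maximal intervals explicitly, are sound minor refinements rather than a different approach.
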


The proof in \cref{app:split-proofs} has two ingredients.  Completeness
extends every horizontal lift of a visible geodesic, and Hadamard uniqueness
makes the restriction to a horizontal leaf globally injective.  Equality in
Riemannian-submersion length contraction forces a shortest ambient geodesic
to be horizontal, yielding uniqueness of the nearest completion.

\begin{corollary}[Rigid coherent quotient]
\label{cor:split-coherent}
The sections \(s_x\) satisfy \eqref{eq:coherent-sections}; hence every
split-Hadamard map is a rigid coherent metric quotient in the sense of
\cref{thm:coherent-reduction}.
\end{corollary}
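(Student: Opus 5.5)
The plan is to verify the three identities in \eqref{eq:coherent-sections} directly from \cref{thm:split-hadamard}(i), and then to read off rigidity from \cref{thm:split-hadamard}(ii).

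\textbf{Preliminaries.} A Riemannian submersion is $1$-Lipschitz, so $q$ is $1$-Lipschitz. Indeed, $Dq$ is a norm-nonincreasing map on each tangent space, so $q$ does not increase curve lengths, and it therefore does not increase intrinsic distances.

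\textbf{First two identities.} By \cref{thm:split-hadamard}(i), $s_x=(q|_{L_x})^{-1}:B\to L_x$. Hence $q\circ s_x=\operatorname{id}_B$ holds by construction. Since $x\in L_x$, we get $s_x(q(x))=(q|_{L_x})^{-1}(q(x))=x$, using injectivity of $q|_{L_x}$.

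\textbf{Coherence identity.} The key step is $s_{s_x(y)}=s_x$. Set $x'=s_x(y)$, which lies in $L_x$. The leaves of the integrable distribution $\mathcal H$ are the leaves of a foliation, so maximal leaves partition $M$. Since $x'\in L_x$, we have $L_{x'}=L_x$. Then
\[
s_{x'}=(q|_{L_{x'}})^{-1}=(q|_{L_x})^{-1}=s_x.
\]
As an alternative that avoids invoking the foliation, one can use the uniqueness clause in (i). The map $s_x$ is a horizontal isometric totally geodesic section with $s_x(q(x'))=s_x(y)=x'$. By uniqueness of the section through $x'$, it must equal $s_{x'}$. This second route is cleaner, and I would present it as the main argument.

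\textbf{Rigid coherent quotient.} Each $s_x$ is an isometric embedding, because it is a Riemannian isometry onto a totally geodesic complete leaf. More directly, \eqref{eq:split-shortest-lift} gives $d_M(x,s_x(b))=d_B(q(x),b)$. Applying this from any base point $s_x(b)$, and using coherence, gives $d_M(s_x(b),s_x(b'))=d_B(b,b')$. So condition (A) of \cref{thm:coherent-reduction} holds, and the equivalent conditions (B) and (C) and the submetry conclusion follow. Rigidity requires uniqueness of the nearest point in every fiber. This is exactly the $\argminop$ statement in \eqref{eq:split-shortest-lift}, which asserts a unique minimizer.

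\textbf{Main obstacle.} The only delicate point is the coherence identity, which depends on correctly identifying $L_{s_x(y)}$ with $L_x$. I expect the uniqueness-of-section argument to settle it without any further geometric input.
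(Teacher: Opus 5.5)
Your proposal is correct and follows the paper's proof. Both derive coherence from the fact that \(s_x(y)\) lies on the same maximal horizontal leaf \(L_x\), so \(s_{s_x(y)}\) and \(s_x\) both invert \(q|_{L_x}\). Both take isometry of the sections from \cref{thm:split-hadamard} and get rigidity from the unique minimizer in \eqref{eq:split-shortest-lift}.

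Your preferred route through the uniqueness clause of (i) is not really different: the paper proves that clause by the same same-leaf argument.
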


The abstract theorem is useful only after its hypotheses have been verified.
The next section proves them globally for SPD information compression and, in
addition, resolves the section in closed form.

\section{SPD Information Compression and Canonical Completion}
\label{sec:spd-realization}

Let \(A\in\R^{d\times m}\) have full column rank.  The associated information
compression is
\begin{equation}
  \pi_A:\SPD^d\to\SPD^m,
  \qquad
  \pi_A(P)=A^\top P A.
  \label{eq:spd-compression}
\end{equation}
For \(R,S\in\SPD^m\), write
\begin{equation}
  R\#_tS
  =R^{1/2}(R^{-1/2}SR^{-1/2})^tR^{1/2},
  \qquad 0\le t\le1,
  \label{eq:airm-geodesic}
\end{equation}
for AIRM geodesic interpolation.
For a reference \(P_0\in\SPD^d\), write
\[
  R_0=A^\top P_0A,
  \qquad
  U_0=P_0^{1/2}AR_0^{-1/2},
  \qquad U_0^\top U_0=I_m.
\]
For \(R\in\SPD^m\), define
\begin{align}
  \Psi_{P_0,A}(R)
  &=P_0^{1/2}
  \left[I+U_0(R_0^{-1/2}RR_0^{-1/2}-I)U_0^\top\right]
  P_0^{1/2}
  \nonumber\\
  &=P_0+P_0AR_0^{-1}(R-R_0)R_0^{-1}A^\top P_0.
  \label{eq:canonical-completion}
\end{align}
This expression is invariant under the visible coordinate change
\((A,R)\mapsto(AB,B^\top RB)\), \(B\in GL(m)\).

\begin{proposition}[A reference is necessary]
\label{prop:reference-necessary}
Let \(W\subsetneq E\) and \(R\in\SPD(W)\).  There is no rule depending only
on \((W,R)\) that selects a full \(P\in\SPD(E)\), restricts to \(R\) on
\(W\), and is invariant under every linear automorphism of \(E\) that fixes
\(W\) pointwise.  Canonical ambient completion therefore requires additional
structure such as \(P_0\).
\end{proposition}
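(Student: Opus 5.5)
The plan is to show that no positive form on \(E\) is fixed by the whole pointwise stabilizer of \(W\). I first fix what the statement means. A rule is a map \((W,R)\mapsto P=F(W,R)\in\SPD(E)\) with \(P|_{W\times W}=R\). A linear automorphism \(g\in GL(E)\) acts on bilinear forms by pullback, \((g^*P)(\xi,\eta)=P(g\xi,g\eta)\); pushforward by \(g^{-1}\) gives the same argument. If \(g\) fixes \(W\) pointwise, then \(g\) leaves the data \((W,R)\) unchanged: \(g(W)=W\) and \(g^*R=R\). Invariance of the rule therefore requires
\[
  P(g\xi,g\eta)=P(\xi,\eta)\qquad\text{for all }\xi,\eta\in E
\]
and for every such \(g\). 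It suffices to show that no \(P\in\SPD(E)\) satisfies this, which rules out any invariant rule at all. Existence of completions is not the issue, since the fiber is nonempty: for example \(R\oplus Q\) on \(W\oplus C\) for any complement \(C\) and any \(Q\in\SPD(C)\).

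The main case is \(W\neq0\), and I will use a shear. Choose \(0\ne w\in W\), and use \(W\subsetneq E\) to pick a vector \(v\notin W\). Extend \(v\) to a complement \(C\) of \(W\) with \(v\in C\), and choose a linear functional \(\lambda\) on \(E\) with \(\lambda|_W=0\) and \(\lambda(v)=1\). For \(t\in\R\), define
\[
  g_t(\xi)=\xi+t\,\lambda(\xi)\,w .
\]
Since \(\lambda(w)=0\), we have \(g_{-t}g_t=\mathrm{id}\), so each \(g_t\) is an automorphism, and it fixes \(W\) pointwise. Invariance then gives
\[
  P(v,v)=P(g_tv,g_tv)=P(v,v)+2t\,P(v,w)+t^2P(w,w)\qquad\text{for all }t .
\]
The \(t^2\) coefficient forces \(P(w,w)=0\), which contradicts positive definiteness of \(P\).

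The degenerate case \(W=0\) (with \(E\ne0\)) needs a separate argument, because there is no \(w\in W\) to shear along. Here every automorphism fixes \(W\) pointwise, so I take \(g=2\,\mathrm{id}\). Invariance then gives \(P(v,v)=4P(v,v)\), hence \(P(v,v)=0\) for every \(v\), which again is impossible for \(P\in\SPD(E)\).

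The only real obstacle is the one already settled above: fixing the convention for the action and confirming that these \(g\) lie in the pointwise stabilizer of \(W\), so that they preserve the data \((W,R)\). Once that is in place, the contradiction is the one-line quadratic-in-\(t\) computation. Finally, I will remark that a reference \(P_0\) breaks this symmetry, because \(g^*P_0\neq P_0\) for these shears. This is consistent with \eqref{eq:canonical-completion}, which is covariant in \((P_0,A)\) jointly but not invariant in \(A\) alone.
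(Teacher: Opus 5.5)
Your proof is correct, and it differs from the paper's in which elements of the pointwise stabilizer of \(W\) you use. The paper fixes one complement \(E=W\oplus W'\) and takes the dilations \(T_t\), which are the identity on \(W\) and multiplication by \(t\) on \(W'\). Invariance then gives \(P(n,n)=t^2P(n,n)\) for \(n\in W'\), so the contradiction sits in the invisible directions. That argument is uniform in \(W\) and needs no case split, including \(W=0\).

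You use the shears \(g_t(\xi)=\xi+t\lambda(\xi)w\), which move the complement into \(W\). Your contradiction therefore lands on \(P(w,w)=R(w,w)>0\), that is, on the visible data itself. This requires \(W\ne0\). Your degenerate case \(g=2\,\mathrm{id}\) is exactly the paper's dilation with \(W'=E\) and \(t=2\).

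In exchange, your route gives slightly more when \(W\ne0\):
\begin{enumerate}
\item The shears have determinant one, so invariance already fails under volume-preserving automorphisms fixing \(W\) pointwise.
\item Your argument uses only \(R(w,w)>0\), so it also rules out invariant positive semidefinite completions. A single dilation family does not: the form \(R\oplus0\) on \(W\oplus W'\) is fixed by every \(T_t\).
\end{enumerate}
Your closing remark that \(P_0\) breaks the symmetry is fine but is not needed for the proof.
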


\begin{theorem}[Complete SPD information-compression theorem]
\label{thm:spd-compression}
The map \(\pi_A\) and the section \(\Psi_{P_0,A}\) satisfy the following.
\begin{enumerate}[label=(\Alph*),leftmargin=*]
  \item \emph{Contraction and submetry.}  For all \(P,Q\in\SPD^d\),
  \begin{equation}
    \dai(A^\top PA,A^\top QA)\le\dai(P,Q),
    \label{eq:spd-contraction}
  \end{equation}
  and for every \(r\ge0\),
  \begin{equation}
    \pi_A(\overline B_{\AI}(P_0,r))
    =\overline B_{\AI}(R_0,r).
    \label{eq:spd-ball-submetry}
  \end{equation}
  \item \emph{Unique canonical completion.}  For every \(R\in\SPD^m\),
  \begin{equation}
    \boxed{
    \Psi_{P_0,A}(R)
    =\argminop_{A^\top PA=R}\dai(P_0,P),
    \qquad
    \dai(P_0,\Psi_{P_0,A}(R))=\dai(R_0,R).}
    \label{eq:spd-shortest-completion}
  \end{equation}
  The minimizer is unique.
  \item \emph{Horizontal geometry.}  The section \(\Psi_{P_0,A}\) is an
  isometric totally geodesic embedding.  The map \(\pi_A\) is a Riemannian
  submersion; at \(P\), the inverse of
  \(D\pi_A(P)|_{\mathcal H_P}\) is
  \begin{equation}
    h\longmapsto D\Psi_{P,A}(A^\top PA)[h].
    \label{eq:spd-horizontal-lift}
  \end{equation}
  Hence \(\pi_A\) is split-Hadamard and visible AIRM geodesics have unique
  horizontal ambient lifts.
  \item \emph{Invisible-fiber minimax geometry.}  If \(m<d\), the fiber
  \(\mathcal F_A(R)=\{P:A^\top PA=R\}\) has infinite AIRM diameter.  With
  \(P_\star=\Psi_{P_0,A}(R)\) and
  \[
    \mathcal F_A(R;B)
    =\mathcal F_A(R)\cap\overline B_{\AI}(P_\star,B),
  \]
  one has
  \begin{equation}
    \inf_{Q\in\SPD^d}\sup_{P\in\mathcal F_A(R;B)}\dai(Q,P)=B,
    \label{eq:spd-minimax}
  \end{equation}
  and \(P_\star\) is the unique minimax center.
  \item \emph{Exact radial decision reduction.}  Let \(\mathcal A\) be any
  family of admissible \((P_0,A)\), allowing different visible dimensions,
  let \(\rho\) be nondecreasing, and let \(\varphi\) be any extended-real
  visible functional.  Then
  \begin{align}
    &\inf_{\substack{(P_0,A)\in\mathcal A\\P\in\SPD^d}}
    \left\{\rho(\dai(P_0,P))+\varphi(P_0,A,A^\top PA)\right\}
    \nonumber\\
    &\quad=
    \inf_{\substack{(P_0,A)\in\mathcal A\\R\in\SPD^m}}
    \left\{\rho(\dai(A^\top P_0A,R))+\varphi(P_0,A,R)\right\}.
    \label{eq:spd-decision-reduction}
  \end{align}
  Every reduced minimizer has the canonical full lift
  \(P=\Psi_{P_0,A}(R)\).  If \(\rho\) is strictly increasing on its finite
  domain, every finite-valued full minimizer is canonical.
\end{enumerate}
\end{theorem}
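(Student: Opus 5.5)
The plan is to reduce everything to a normal form via AIRM congruence invariance and then invoke \cref{thm:split-hadamard} and \cref{thm:metric-characterization}. For \(G\in GL(d)\), \(P\mapsto G^\top PG\) is an AIRM isometry, and \(A\mapsto GAB\) with \(B\in GL(m)\) intertwines the compressions. Choosing \(G=P_0^{-1/2}\) sends the reference to \(I\). Extending \(U_0\) to an orthogonal matrix \([U_0\;U_\perp]\) and taking \(B=R_0^{-1/2}\) then sends \(A\) to \(E_1=[I_m;0]\). In this normal form \(\pi(P)=P_{11}\), the top-left block, with reference \(I_d\). Formula \eqref{eq:canonical-completion} becomes \(\Psi(R)=\blockdiag(R,I_{d-m})\). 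So the first step is to check that every claim transforms covariantly under these congruences; this is mostly bookkeeping, using the stated invariance of \eqref{eq:canonical-completion} under \((A,R)\mapsto(AB,B^\top RB)\).

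For (A), the contraction \eqref{eq:spd-contraction} follows from the normal form. Writing \(P^{-1/2}QP^{-1/2}\) and restricting to the compression, one uses the Cauchy interlacing/log-majorization fact that compressions contract AIRM distance. Concretely, apply congruence so that \(P=I\); then \(A^\top QA\) with \(A^\top A=I\) has eigenvalues interlacing those of \(Q\). The sum of squared logs can only decrease by interlacing together with convexity of \(\log^2\) (majorization). Ball submetry then follows from (B): \(\Psi(R)\) lies in the ball of radius \(d(R_0,R)\), so \(\pi_A\) maps the ball onto it.

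For (B), set \(P=\begin{pmatrix}R&C\\C^\top&D\end{pmatrix}\) and compare \(\sum\log^2\lambda_i(P)\) with \(\sum\log^2\lambda_i(R)\). By (A), \(d(I,P)\ge d(I_m,R)+\) (a nonnegative term). Equality requires the eigenvalues of \(P\) to equal those of \(R\) plus \(d-m\) ones. I would use strict majorization: equality in the interlacing/convexity argument forces \(C=0\) and \(D=I\). This rigidity is where care is needed, and I would handle it through the Schur complement. Uniqueness follows.

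For (C), the image \(\{\blockdiag(R,I)\}\) is totally geodesic, since it is closed under \(\#_t\) and under inverse/square root. Its tangent at \(\blockdiag(R,I)\) is \(\blockdiag(\dot R,0)\). The vertical space is \(\{\dot P:\dot P_{11}=0\}\), and the AIRM inner product \(\tr(P^{-1}XP^{-1}Y)\) at a block-diagonal point makes these orthogonal. Dimensions match, so the two are complementary, and \(D\pi\) restricted to the horizontal space is an isometry. At a general \(P\), re-center the reference to \(P\) itself; this gives \eqref{eq:spd-horizontal-lift}, the horizontal leaves are the images \(\Psi_{P,A}(\SPD^m)\), and integrability holds. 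So \(\pi_A\) is split-Hadamard, and \cref{thm:split-hadamard} yields unique horizontal lifts.

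For (D), the fiber contains \(\blockdiag(R,tI)\) for \(t\to\infty\), so its diameter is infinite. For the minimax identity, the lower bound is the main obstacle. I would show that the fiber ball contains antipodal pairs \(P_\pm\) with \(d(P_+,P_-)=2B\) and \(P_\star\) as their midpoint, namely \(\blockdiag(R,e^{\pm s}I)\) rescaled. The CAT(0) midpoint inequality then gives \(\sup\ge B\) for every \(Q\), with equality only at the midpoint. Using several such pairs along different vertical directions, \(P_\star\) is the only common midpoint, which gives uniqueness of the center.

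Finally, (E) follows pointwise in \((P_0,A)\) from \cref{thm:metric-characterization}(C) applied with \(\phi=\varphi(P_0,A,\cdot)\), followed by taking an infimum over \(\mathcal A\). Lifts come from (B), and canonicity under strictly increasing \(\rho\) follows because a noncanonical \(P\) has \(d(P_0,P)>d(R_0,A^\top PA)\).
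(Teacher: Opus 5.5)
Your proposal is correct. It takes a genuinely different route from the paper in (A)--(C); (D) and (E) are essentially the paper's arguments.

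\textbf{How the two routes differ.} The paper never compares spectra. It proves the contraction infinitesimally: from the block identity \eqref{eq:block-frobenius-identity} it gets $\fro{C^\top XC}\le\fro X$ with $C=P^{1/2}AR^{-1/2}$, and then integrates along curves. For uniqueness it compresses the one-parameter group $\exp(tL)$, $L=\log X$; equality of speeds at $t=0$ kills the off-visible blocks of $L$. Horizontality is obtained abstractly: adding a vertical $sV$ to the lift $L_P(h)$ gives another lift, so differentiating the norm inequality yields $L_P(h)\perp V$, and dimension counting finishes. You instead obtain the global contraction in one stroke from Cauchy interlacing, uniqueness from its equality case, and horizontality by a direct block computation at the points $\blockdiag(R,I)$ of the normal form. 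The paper's identity does triple duty: contraction, Riemannian-submersion orthogonality, and later the line element of \cref{thm:stratified-line-element}. Your spectral route avoids length and integration arguments and makes the equality case a concrete eigenvalue statement. The price is a careful index pairing and a separate rigidity lemma.

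\textbf{Steps to tighten.} First, $t\mapsto\log^2 t$ is not convex on $(0,\infty)$; it is concave for $t>e$. So ``convexity/majorization'' is not the mechanism. What works is this pairing. Let $\mu_1\ge\cdots\ge\mu_m$ be the eigenvalues of the compression, $\lambda_1\ge\cdots\ge\lambda_d$ those of the full matrix, and $k=\#\{i:\mu_i>1\}$. Interlacing gives $\log^2\mu_i\le\log^2\lambda_i$ for $i\le k$ and $\log^2\mu_i\le\log^2\lambda_{i+d-m}$ for $i>k$, using monotonicity of $\log^2$ on each side of $1$. These indices are distinct, and the $d-m$ unused indices $k+1,\ldots,k+d-m$ contribute nonnegatively. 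Equality then forces $\operatorname{spec}(P)=\operatorname{spec}(R)\cup\{1,\ldots,1\}$.

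Second, passing from this spectral condition to $C=0$, $D=I$ is a separate step, not part of the interlacing equality. Traces settle it. Comparing $\tr P$ gives $\tr D=d-m$, and comparing $\tr P^2$ gives $2\fro{C}^2+\tr D^2=d-m$. Since $\tr D^2\ge(\tr D)^2/(d-m)=d-m$, this forces $C=0$ and $D=I$. Your Schur-complement idea also works: comparing $\tr P$ and $\tr P^{-1}$ gives $\tr S+\tr S^{-1}\le 2(d-m)$, which forces $S=I$ and then $C=0$.

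\textbf{Minor points.} For $P\mapsto G^\top PG$, the intertwining map is $A\mapsto G^{-1}AB$, not $GAB$. In (D) a single antipodal pair already gives uniqueness of the center: the lower bound is just the triangle inequality, and its equality case makes $Q$ a midpoint, which is unique because AIRM is uniquely geodesic. So using several pairs is unnecessary.
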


The set \(\mathcal F_A(R;B)\) is a local AIRM ambiguity model centered at
the reference-selected completion \(P_\star\).  Thus
\eqref{eq:spd-minimax} is a radius-conditioned nonidentifiability statement:
it does not define a reference-free uncertainty set or a reference-free
minimax center.

\begin{corollary}[Visible update quotient and invisible drift]
\label{cor:visible-update}
Let \(\alpha=Ac\ne0\) be a visible covector and let
\(u_P=-P\alpha\).  For every \(P\) satisfying \(A^\top PA=R\),
\begin{equation}
  A^\top u_P=-Rc.
  \label{eq:visible-update-quotient}
\end{equation}
Thus the class of the update modulo \(\ker A^\top\) is fixed.  Conversely,
for every \(n\in\ker A^\top\), there exists a feasible \(P\) such that
\begin{equation}
  u_P=u_{\Psi_{P_0,A}(R)}+n.
  \label{eq:arbitrary-invisible-drift}
\end{equation}
The canonical completion is the unique AIRM-nearest representative and has no
invisible component in the \(P_0\)-orthogonal splitting.
\end{corollary}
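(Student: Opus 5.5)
The argument splits into three parts: a direct computation for \eqref{eq:visible-update-quotient}, an explicit fiber construction for \eqref{eq:arbitrary-invisible-drift}, and an appeal to \cref{thm:spd-compression}(B) together with a congruence computation for the final sentence.

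\textbf{Visible quotient.} Since \(\alpha=Ac\), we have \(A^\top u_P=-A^\top PAc=-Rc\) for every \(P\) with \(A^\top PA=R\). Hence two feasible updates differ by an element of \(\ker A^\top\), and the class of \(u_P\) modulo \(\ker A^\top\) is fixed.

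\textbf{Arbitrary drift.} Fix \(n\in\ker A^\top\) and set \(P_\star=\Psi_{P_0,A}(R)\) and \(v=P_\star\alpha\ne0\). We look for a symmetric perturbation \(\Delta\) with \(A^\top\Delta A=0\) and \(\Delta\alpha=-n\). Then \(P=P_\star+\Delta\) lies in the fiber and satisfies \(u_P=u_{P_\star}+n\). Positivity will come from the structure of the fiber rather than a smallness argument. Write \(P_\star=C^\top C\) and change coordinates so that the visible block is separated. Because \(A^\top n=0\) and \(A\) has full column rank, \(\R^d=\range(P_\star A)\oplus\ker A^\top\), and this splitting is \(P_\star^{-1}\)-orthogonal.

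The fiber admits the parametrization
\[
  P=P_\star+P_\star A R^{-1}A^\top P_\star \cdot 0+N,
\]
where \(N\) is supported on \(\ker A^\top\), in the form \(N=KSK^\top\) with \(K\) a basis of \(\ker A^\top\). This is not quite enough, because elements of the fiber also carry off-diagonal terms. So instead use the Schur-complement description. In a basis \(\{P_\star A, K\}\) on the covector side, every \(P\) with \(A^\top PA=R\) is determined by a free off-diagonal block \(X\) and a free positive Schur complement. Applying \(P\) to \(\alpha=Ac\) gives \(P_\star\alpha+KX'c\) for an arbitrary matrix \(X'\), determined by \(X\), and the positive Schur complement can always be chosen large enough to keep \(P\) positive definite. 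Since \(c\ne0\), we can choose \(X'\) with \(KX'c=-n\) for any prescribed \(n\), which produces the required \(P\).

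The main obstacle is setting up this block parametrization correctly. The route I would try first is to conjugate by \(P_0^{-1/2}\), so that \(A\mapsto P_0^{1/2}A\) and the computation reduces to the case \(P_0=I\) with an orthonormal basis \(U_0\).

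\textbf{Canonical representative.} Uniqueness of the AIRM-nearest representative is \cref{thm:spd-compression}(B). For the last claim, the second line of \eqref{eq:canonical-completion} gives \(P_\star-P_0=P_0A(\cdot)A^\top P_0\). Therefore \(u_{P_\star}-u_{P_0}\) lies in \(\range(P_0A)\), which is the \(P_0^{-1}\)-orthogonal complement of \(\ker A^\top\), and so its component along \(\ker A^\top\) equals that of the reference update. Interpreted in the \(P_0\)-orthogonal splitting, this says the canonical completion adds no invisible component.
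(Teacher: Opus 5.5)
Your proposal is correct and follows the paper's argument. The quotient identity is the same one-line computation. Your converse is exactly the paper's construction in the $P_0$-orthogonal (whitened) splitting: an off-diagonal block chosen so that $KX'c=-n$, which is the paper's $B=-c_0z^\top/(c_0^\top c_0)$, followed by a positive Schur complement, the paper's $C=B^\top R^{-1}B+tI$. The final claim is likewise read off from the block-diagonal form of $\Psi_{P_0,A}(R)$.

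Two small clean-ups. First, write the block parametrization in the covector basis $[A,M]$, with $M$ spanning the $P_0$-orthogonal complement of $\range(A)$; your $\{P_\star A,K\}$ spans the dual splitting on the vector side, not the covector side. Second, for the last sentence, note directly that $u_{\Psi_{P_0,A}(R)}=-P_0AR_0^{-1}Rc\in\range(P_0A)$, so its invisible component is exactly zero.
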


\begin{remark}[What is selected]
The visible target \(R\) is an input to the theorem.  The result selects the
least AIRM deformation of \(P_0\) compatible with that target.  It does not
assert that a first-order oracle uniquely determines \(R\), nor that the
reference \(P_0\) is reference-free.
\end{remark}

The same completion also has an information-theoretic characterization.
For Gaussians it can equivalently be read from the KL chain rule as retaining
the reference conditional law on invisible coordinates.  We include it as a
consequence and alternate characterization of the AIRM completion, rather
than as an independent divergence-projection novelty claim.

\begin{corollary}[Gaussian KL completion]
\label{cor:gaussian-kl}
Viewing \(P\) as the covariance of a zero-mean Gaussian, the problem
\begin{equation}
  \min_{A^\top PA=R}
  D_{\mathrm{KL}}\!\left(N(0,P)\,\|\,N(0,P_0)\right)
  \label{eq:gaussian-kl-problem}
\end{equation}
has the unique minimizer \(\Psi_{P_0,A}(R)\).
\end{corollary}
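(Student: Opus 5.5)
Since \(\Psi_{P_0,A}(R)\) is the only candidate, I will verify optimality directly rather than through the AIRM results.

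\textbf{Objective and normalization.} Write
\[
2D_{\mathrm{KL}}(N(0,P)\|N(0,P_0))=\tr(P_0^{-1}P)-d-\log\det(P_0^{-1}P).
\]
Congruence by \(P_0^{-1/2}\) preserves this value. It also maps the constraint \(A^\top PA=R\) to \(U_0^\top \tilde P U_0=\tilde R\), where \(\tilde P=P_0^{-1/2}PP_0^{-1/2}\) and \(\tilde R=R_0^{-1/2}RR_0^{-1/2}\), because \(A=P_0^{-1/2}U_0R_0^{1/2}\). So it suffices to take \(P_0=I\) and \(A=U_0\) with orthonormal columns. I then complete \(U_0\) to an orthogonal matrix \([U_0\;V_0]\) and write \(\tilde P\) in block form as
\[
\tilde P=\begin{pmatrix}\tilde R & C\\ C^\top & D\end{pmatrix},
\]
with the top-left block fixed by the constraint.

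\textbf{Minimization over the free blocks.} The objective becomes \(\tr\tilde R+\tr D-d-\log\det\tilde R-\log\det(D-C^\top\tilde R^{-1}C)\) by the Schur complement. Set \(S=D-C^\top\tilde R^{-1}C\succ0\), and note \(\tr D=\tr S+\tr(C^\top\tilde R^{-1}C)\). The objective then separates into three parts:
\begin{itemize}
\item a term \(\tr(C^\top\tilde R^{-1}C)\ge0\), which vanishes only when \(C=0\);
\item a term \(\tr S-\log\det S-(d-m)\ge0\), which vanishes only when \(S=I\), by the scalar inequality \(\lambda-1-\log\lambda\ge0\) applied to each eigenvalue with equality iff \(\lambda=1\);
\item the constant \(\tr\tilde R-m-\log\det\tilde R\).
\end{itemize}
The objective is therefore minimized exactly at \(C=0\), \(D=I\). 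This gives \(\tilde P=I+U_0(\tilde R-I)U_0^\top\), whose back-transformation is \(\Psi_{P_0,A}(R)\) by \eqref{eq:canonical-completion}. The minimum value is \(D_{\mathrm{KL}}(N(0,R)\|N(0,R_0))\), which is the KL chain rule.

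\textbf{Main obstacle and fallback.} The main work is bookkeeping: checking that the parametrization by \((C,S)\) with \(S\succ0\) covers the whole fiber bijectively, which holds by the Schur characterization of positive definiteness. Uniqueness then follows from the strict equality conditions above. If a convexity route were preferred instead, the objective is strictly convex in \(P\) on an affine slice. The first-order condition \(P_0^{-1}-P^{-1}\in\{A M A^\top\}\) would then need to be matched against \(\Psi^{-1}\) via Woodbury. The block argument avoids that computation.
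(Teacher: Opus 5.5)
Your proposal is correct and follows the paper's proof essentially step for step: you whiten to \(P_0=I\) with orthonormal \(U_0\), parametrize the fiber by Schur-complement coordinates \((C,S)\), and split \(\tr X-\log\det X\) into \(\tr(C^\top\widetilde R^{-1}C)\), \(\tr S-\log\det S\), and a constant, which forces \(C=0\), \(S=I\) as the unique minimizer. Your identification of the minimum value as \(D_{\mathrm{KL}}(N(0,R)\,\|\,N(0,R_0))\) is a correct addition that the paper does not state explicitly.
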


For a fixed information channel, exact decision reduction yields an explicit
regularized update.  Let \(R_0=A^\top P_0A\), let \(\Rhat\in\SPD^m\) be a
positive data-visible target, and let \(0<\tau<1\).

\begin{corollary}[Prior--data geodesic completion]
\label{cor:prior-data}
The full-SPD objective
\begin{equation}
  (1-\tau)\dai(P_0,P)^2
  +\tau\dai(A^\top PA,\Rhat)^2
  \label{eq:prior-data-objective}
\end{equation}
has the unique minimizer
\begin{equation}
  \boxed{
  P_\tau=\Psi_{P_0,A}(R_0\#_\tau\Rhat).}
  \label{eq:prior-data-solution}
\end{equation}
If \(D=\dai(R_0,\Rhat)\), then
\[
  \dai(P_0,P_\tau)=\tau D,
  \qquad
  \dai(A^\top P_\tau A,\Rhat)=(1-\tau)D,
\]
and the minimum objective value is \(\tau(1-\tau)D^2\).  Perturbing the
visible target from \(R_1\) to \(R_2\) changes the completed solution by at
most \(\tau\dai(R_1,R_2)\).
\end{corollary}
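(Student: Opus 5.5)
The plan is to reduce the full-SPD problem to a visible problem on \(\SPD^m\) by applying the exact radial decision reduction of \cref{thm:spd-compression}(E), solve that reduced problem on the visible geodesic, and then lift the solution with \(\Psi_{P_0,A}\).

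\textbf{Reduction.} Take the single admissible pair \((P_0,A)\), set \(\rho(t)=(1-\tau)t^2\), and let \(\varphi(P_0,A,R)=\tau\dai(R,\Rhat)^2\). Since \(\rho\) is strictly increasing on \([0,\infty)\), \cref{thm:spd-compression}(E) gives three things. The infimum of \eqref{eq:prior-data-objective} equals
\[
  \inf_{R\in\SPD^m}\bigl\{(1-\tau)\dai(R_0,R)^2+\tau\dai(R,\Rhat)^2\bigr\}.
\]
Every reduced minimizer \(R\) lifts to the full minimizer \(\Psi_{P_0,A}(R)\). Finally, every finite-valued full minimizer is of this canonical form. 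Uniqueness of the full minimizer therefore reduces to uniqueness of the visible minimizer, because \(\Psi_{P_0,A}\) is injective: \(A^\top\Psi_{P_0,A}(R)A=R\).

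\textbf{Visible problem.} I would solve this as a weighted Fréchet mean of two points in the Hadamard space \((\SPD^m,\dai)\). First, the value \(\tau(1-\tau)D^2\) is attained at \(R_0\#_\tau\Rhat\). Along the AIRM geodesic, \(\dai(R_0,R_0\#_\tau\Rhat)=\tau D\) and \(\dai(R_0\#_\tau\Rhat,\Rhat)=(1-\tau)D\). Substituting gives \((1-\tau)\tau^2D^2+\tau(1-\tau)^2D^2=\tau(1-\tau)D^2\).

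For the lower bound, any \(R\) has \(a=\dai(R_0,R)\) and \(b=\dai(R,\Rhat)\) with \(a+b\ge D\) by the triangle inequality. Minimizing \((1-\tau)a^2+\tau b^2\) under \(a+b\ge D\) gives \(\tau(1-\tau)D^2\), attained only at \(a=\tau D\), \(b=(1-\tau)D\). Equality in the triangle inequality then forces \(R\) onto a geodesic from \(R_0\) to \(\Rhat\), at parameter \(\tau\). Geodesics are unique in a CAT(0) space, so \(R=R_0\#_\tau\Rhat\). Alternatively, strong convexity of \(\dai(\cdot,S)^2\) in the CAT(0) sense makes the visible objective strictly convex, which also gives uniqueness.

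\textbf{Distances for the full solution.} By \eqref{eq:spd-shortest-completion}, \(\dai(P_0,P_\tau)=\dai(R_0,R_0\#_\tau\Rhat)=\tau D\). Since \(A^\top P_\tau A=R_0\#_\tau\Rhat\), we also get \(\dai(A^\top P_\tau A,\Rhat)=(1-\tau)D\).

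\textbf{Stability.} Writing \(P_\tau^{(i)}=\Psi_{P_0,A}(R_0\#_\tau R_i)\), the isometric-embedding property of \(\Psi_{P_0,A}\) from \cref{thm:spd-compression}(C) gives
\[
  \dai\bigl(P_\tau^{(1)},P_\tau^{(2)}\bigr)=\dai\bigl(R_0\#_\tau R_1,R_0\#_\tau R_2\bigr).
\]
The main step is then the Busemann-type inequality \(\dai(R_0\#_\tau R_1,R_0\#_\tau R_2)\le\tau\dai(R_1,R_2)\) for geodesics sharing the endpoint \(R_0\). This is the standard CAT(0) comparison, which holds for \(\SPD^m\) with AIRM because it is a Hadamard manifold (see \citet{bhatia2007positive}). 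I would cite it rather than prove it.

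The only delicate point is that the uniqueness clause in part (E) must cover all full minimizers. This is fine here because the objective is finite everywhere.
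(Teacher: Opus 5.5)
Your proposal is correct and follows the paper's proof essentially step for step. You reduce to \(\min_R(1-\tau)\dai(R_0,R)^2+\tau\dai(R,\Rhat)^2\) through the canonical-completion inequality; you route this through part (E) of \cref{thm:spd-compression}, while the paper uses the equality case of part (B) directly, which is the same mechanism. You then bound the visible problem with \(x+y\ge D\) and force equality onto the unique AIRM geodesic, and you get stability from the isometric section \(\Psi_{P_0,A}\) together with the Hadamard inequality \(\dai(R_0\#_\tau R_1,R_0\#_\tau R_2)\le\tau\dai(R_1,R_2)\). One small wording point: uniqueness of the full minimizer comes from every full minimizer being canonical with a reduced-minimizer image, not from injectivity of \(\Psi_{P_0,A}\); your argument already contains this.
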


The proofs of
\cref{prop:reference-necessary,thm:spd-compression,cor:visible-update,cor:gaussian-kl,cor:prior-data}
are in
\cref{app:spd-proofs}.  Their common mechanism is that orthogonal compression
contracts the whitened Frobenius norm, while the section
\eqref{eq:canonical-completion} attains equality.

\section{Gauge, Rank Strata, and Moving Information}
\label{sec:stratified-geometry}

The previous section fixes \(A\).  We now let the information channel move.
Relative to \(P_0\), the pair \((A,R)\) has a change-of-visible-coordinates
gauge
\[
  (A,R)\cdot B=(AB,B^\top RB),
  \qquad B\in GL(m).
\]
First define the total, generally redundant descriptor space
\begin{equation}
  \widetilde{\mathfrak D}_m(P_0)
  =
  \left\{(A,R):\rank A=m,\ R\in\SPD^m\right\}/GL(m),
  \label{eq:total-descriptor-space}
\end{equation}
and the everywhere-defined completion parameterization
\begin{equation}
  \widetilde\Gamma_{m,P_0}([A,R])=\Psi_{P_0,A}(R).
  \label{eq:total-completion-map}
\end{equation}
It is redundant when some visible modes equal their reference values.  Its
regular subset is
\begin{equation}
  \mathfrak D_m^{\mathrm{reg}}(P_0)
  =
  \left\{(A,R):\rank A=m,\ R\in\SPD^m,
  \ \det(R-A^\top P_0A)\ne0\right\}/GL(m),
  \label{eq:descriptor-space}
\end{equation}
and the rank stratum
\begin{equation}
  \mathfrak S_m(P_0)
  =\{P\in\SPD^d:\rank(P-P_0)=m\}.
  \label{eq:rank-stratum}
\end{equation}

\begin{theorem}[Completion bundle and complete rank stratification]
\label{thm:rank-stratification}
The map
\begin{equation}
  \Gamma_{m,P_0}([A,R])=\Psi_{P_0,A}(R)
  \label{eq:stratum-map}
\end{equation}
is a diffeomorphism from
\(\mathfrak D_m^{\mathrm{reg}}(P_0)\) onto
\(\mathfrak S_m(P_0)\).  Its inverse descriptor is intrinsic:
\begin{equation}
  \range(A)=P_0^{-1}\range(P-P_0),
  \qquad
  R=A^\top PA,
  \label{eq:intrinsic-descriptor}
\end{equation}
up to the stated \(GL(m)\) action.  The strata exhaust and close according to
\begin{equation}
  \SPD^d=\bigsqcup_{m=0}^d\mathfrak S_m(P_0),
  \qquad
  \overline{\mathfrak S_m(P_0)}^{\,\SPD^d}
  =\bigcup_{r=0}^m\mathfrak S_r(P_0).
  \label{eq:strata-closure}
\end{equation}
\end{theorem}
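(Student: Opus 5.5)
The plan is to whiten by the reference and reduce everything to a statement about symmetric low-rank perturbations of the identity. Put \(P=P_0^{1/2}\widetilde PP_0^{1/2}\). Then \(\rank(P-P_0)=\rank(\widetilde P-I)\), and by the first form of \eqref{eq:canonical-completion},
\[
  \widetilde P-I=U_0(S-I)U_0^\top,
  \qquad S=R_0^{-1/2}RR_0^{-1/2}.
\]
Because \(U_0\) has orthonormal columns, \(\rank(\widetilde P-I)=\rank(S-I)\). Moreover \(\det(S-I)\ne0\) if and only if \(\det(R-R_0)\ne0\). Hence on regular descriptors \(\Psi_{P_0,A}(R)\in\mathfrak S_m(P_0)\). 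The map is well defined on the quotient by the \(GL(m)\)-invariance already noted after \eqref{eq:canonical-completion}.

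\textbf{Surjectivity and intrinsic inverse.} Fix \(P\in\mathfrak S_m(P_0)\) and set \(A\) to be any basis matrix of \(P_0^{-1}\range(P-P_0)\). Write \(P-P_0=P_0AMA^\top P_0\) for a symmetric invertible \(M\in\Sym^m\); this is possible because \(P-P_0\) is symmetric of rank \(m\) with range \(P_0\range(A)\). Define \(R=A^\top PA=R_0+R_0MR_0\). Then \(R\) is SPD, \(R-R_0=R_0MR_0\) is invertible, and the second form of \eqref{eq:canonical-completion} returns \(P_0+P_0AMA^\top P_0=P\).

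\textbf{Injectivity.} If \(\Psi_{P_0,A}(R)=P\) with \((A,R)\) regular, then \(\range(P-P_0)=P_0\range(A)\), since \(R_0^{-1}(R-R_0)R_0^{-1}\) is invertible and \(A\) has full column rank. This fixes \(\range(A)\), and hence \(A\) up to \(GL(m)\). Also \(A^\top PA=R\) by \cref{thm:spd-compression}(B). This gives \eqref{eq:intrinsic-descriptor} and injectivity on the quotient.

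\textbf{Smoothness.} Identify \(\mathfrak D_m^{\mathrm{reg}}(P_0)\) with an open subset of the bundle over \(\Gr(m,d)\) whose fiber over a subspace \(\range(A)\) is the open set of SPD forms \(R\) with \(\det(R-R_0)\ne0\). The \(GL(m)\) action is free and proper, so this is a smooth manifold. Similarly, \(\mathfrak S_m(P_0)\) is \(P_0\) plus the manifold of rank-\(m\) symmetric matrices, which has dimension \(dm-m(m-1)/2\), intersected with the open cone. This matches the descriptor dimension \(m(d-m)+m(m+1)/2\). The map \(\Gamma\) is smooth because it is smooth in \((A,R)\) and gauge-invariant. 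The inverse is smooth because locally \(\range(P-P_0)\) varies smoothly on a constant-rank set: pick \(m\) columns spanning it near a base point. Then \(R=A^\top PA\) is smooth. I expect this step, making the quotient-manifold and constant-rank charts rigorous, to be the main technical obstacle. I would handle it with local slices \(A=\begin{pmatrix}I\\ X\end{pmatrix}\) after a permutation.

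\textbf{Stratification.} The partition \(\SPD^d=\bigsqcup_m\mathfrak S_m(P_0)\) is immediate from rank. For the closure, use lower semicontinuity of rank: \(\{P:\rank(P-P_0)\le m\}\) is closed, which gives \(\subseteq\). For \(\supseteq\), take \(P\in\mathfrak S_r(P_0)\) with \(r\le m\). In whitened form \(\widetilde P-I=\sum_{i\le r}\lambda_iv_iv_i^\top\). Add \(\epsilon\sum_{i=r+1}^{m}v_iv_i^\top\) with orthonormal \(v_i\) completing the basis and small \(\epsilon>0\). The result stays SPD, has rank exactly \(m\), and tends to \(P\) as \(\epsilon\to0\).
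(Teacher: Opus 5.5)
Your proposal is correct and follows essentially the paper's proof: the same factorization $P-P_0=P_0AMA^\top P_0$ over $P_0^{-1}\range(P-P_0)$ gives surjectivity and the intrinsic descriptor, the intrinsic range with $A^\top PA=R$ gives injectivity modulo $GL(m)$, and the same small-kernel-eigenvalue perturbation gives the closure. The only variation is in smoothness of the inverse, where you take $m$ locally independent columns of $P-P_0$ instead of the paper's spectral-calculus range projector. You also spell out the free-and-proper quotient structure and the dimension count, which the paper leaves implicit.
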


The pullback AIRM seminorm on the total descriptor space resolves the cost of
changing the visible target and the visible subspace.  It becomes a genuine
quotient metric on the regular subset.  Normalize a representative by
\(A^\top P_0A=I_m\), use
the horizontal gauge \(A^\top P_0\dot A=0\), choose \(N\) so that
\([P_0^{1/2}A,N]\) is orthogonal, and put
\(K=N^\top P_0^{1/2}\dot A\).

\begin{theorem}[Closed-form completion-pair metric and regular quotient geometry]
\label{thm:stratified-line-element}
At any normalized representative of
\(\widetilde{\mathfrak D}_m(P_0)\), represent two horizontal quotient
directions by
\[
  \xi_i=(H_i,K_i),
  \qquad H_i\in\Sym^m,
  \qquad K_i=N^\top P_0^{1/2}\dot A_i,
  \qquad i\in\{1,2\}.
\]
Then the pullback AIRM pair kernel is
\begin{equation}
  \boxed{
  \begin{aligned}
  \mathfrak G_R^{\mathrm{comp}}(\xi_1,\xi_2)
  &:={}
  \left\langle
  D\widetilde\Gamma_{m,P_0}[\xi_1],
  D\widetilde\Gamma_{m,P_0}[\xi_2]
  \right\rangle_{\widetilde\Gamma_{m,P_0},\AI}
  \\
  &={}
  \tr(R^{-1}H_1R^{-1}H_2)
  +2\tr\!\left(K_1\Xiop(R)K_2^\top\right),
  \end{aligned}}
  \label{eq:completion-pair-kernel}
\end{equation}
where
\begin{equation}
  \Xiop(R)=(R-I)R^{-1}(R-I)=R+R^{-1}-2I\succeq0.
  \label{eq:mismatch-weight}
\end{equation}
In particular, for \(\xi=(\dot R,K)\),
\begin{equation}
  \boxed{
  \norm{D\widetilde\Gamma_{m,P_0}[\dot A,\dot R]}_{\widetilde\Gamma_{m,P_0},\AI}^2
  =\mathfrak G_R^{\mathrm{comp}}(\xi,\xi)
  =
  \fro{R^{-1/2}\dot R R^{-1/2}}^2
  +2\tr\!\left(K\Xiop(R)K^\top\right),}
  \label{eq:stratified-line-element}
\end{equation}
so visible-target and subspace-motion blocks are orthogonal.

More explicitly, define the completion feature
\begin{equation}
  \Phi_R(\xi)
  =
  \left(
  R^{-1/2}HR^{-1/2},
  \sqrt{2}\,K(R-I)R^{-1/2}
  \right).
  \label{eq:completion-feature}
\end{equation}
Then
\(
\mathfrak G_R^{\mathrm{comp}}(\xi_i,\xi_j)
=\langle\Phi_R(\xi_i),\Phi_R(\xi_j)\rangle_F
\)
in the direct-sum Frobenius space.  Hence for any directions
\(\xi_1,\ldots,\xi_k\), the pair matrix
\begin{equation}
  \mathbf G
  =\left[\mathfrak G_R^{\mathrm{comp}}(\xi_i,\xi_j)\right]_{i,j=1}^k
  \succeq0,
  \qquad
  \rank\mathbf G
  =\dim\operatorname{span}\{\Phi_R(\xi_i):1\le i\le k\}.
  \label{eq:completion-gram-rank}
\end{equation}
Moreover,
\begin{equation}
  \mathfrak G_R^{\mathrm{comp}}(\xi_i,\xi_j)=0
  \quad\Longleftrightarrow\quad
  \Phi_R(\xi_i)\perp\Phi_R(\xi_j).
  \label{eq:completion-pair-orthogonality}
\end{equation}
The exact horizontal differential kernel of the total parameterization is
\begin{equation}
  \ker D\widetilde\Gamma_{m,P_0}
  =\{(\dot A,0):K(R-I)=0\}.
  \label{eq:stratified-kernel}
\end{equation}
Consequently the pullback is nondegenerate precisely on the regular subset
\(\det(R-I)\ne0\), where
\(\widetilde\Gamma_{m,P_0}=\Gamma_{m,P_0}\) is the diffeomorphism of
\cref{thm:rank-stratification}.  At singular descriptors, rotation of a
reference-valued visible mode is unidentifiable from the completed matrix.
This is a singularity of the redundant descriptor map and its pullback
seminorm; the ambient SPD manifold and its AIRM remain smooth and
nondegenerate.
If
\(\operatorname{dist}(\operatorname{spec}(R),1)\ge\zeta>0\) and
\(0<a\le\lambda(R)\le b\), then subspace motion is quantitatively controlled:
\begin{align}
  \fro{R^{-1/2}\dot R R^{-1/2}}^2
  +\frac{2\zeta^2}{b}\fro K^2
  &\le \norm{\dot P}_{P,\AI}^2
  \nonumber\\
  &\le
  \fro{R^{-1/2}\dot R R^{-1/2}}^2
  +2\max_{x\in[a,b]}\frac{(x-1)^2}{x}\fro K^2.
  \label{eq:stratified-conditioning}
\end{align}
\end{theorem}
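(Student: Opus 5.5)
The plan is to reduce everything to the whitened frame at \(P_0=I\), compute the AIRM line element of the completed matrix in an adapted orthonormal basis, and then read off the remaining claims as linear algebra. Since \(P\mapsto P_0^{-1/2}PP_0^{-1/2}\) is an AIRM isometry and \eqref{eq:canonical-completion} is congruence-equivariant, I set \(U=P_0^{1/2}A\). Under the normalization \(A^\top P_0A=I_m\) we have \(U^\top U=I_m\), \(R_0=I\), and the whitened completion is
\[
  Q=I+U(R-I)U^\top=[U,N]\,\blockdiag(R,I_{d-m})\,[U,N]^\top .
\]

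\textbf{Step 1: gauge fixing.} I first check that every tangent direction of \(\widetilde{\mathfrak D}_m(P_0)\) has a representative of the stated form. The infinitesimal \(GL(m)\) action is \((\dot A,\dot R)\mapsto(\dot A+AC,\ \dot R+C^\top R+RC)\) with \(C\in\mathfrak{gl}(m)\).

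\begin{itemize}
  \item Preserving the normalization to first order requires \(\dot A^\top P_0A+A^\top P_0\dot A=0\), that is, \(U^\top\dot U\) is skew.
  \item A skew \(C\) then kills \(U^\top\dot U\) entirely, which gives \(\dot U=NK\) with \(K=N^\top\dot U\).
  \item The leftover \(\dot R\), after this compensation, is the \(H\) of the statement.
\end{itemize}

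The residual freedom is \(O(m)\), acting by \((K,R,H)\mapsto(KO,\,O^\top RO,\,O^\top HO)\). The final formula is visibly invariant under this action, so the kernel descends to the quotient. I expect this bookkeeping to be the main obstacle. The subtle point is that the \(H\) appearing in the pair kernel is the gauge-corrected target velocity, not the raw \(\dot R\); everything after it is mechanical.

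\textbf{Step 2: the line element.} Differentiating \(Q\) gives
\[
  \dot Q=UHU^\top+NK(R-I)U^\top+U(R-I)K^\top N^\top .
\]
In the basis \([U,N]\), \(Q^{-1/2}\) is \(\blockdiag(R^{-1/2},I)\), so \(Q^{-1/2}\dot QQ^{-1/2}\) has blocks
\[
  \begin{pmatrix}
    R^{-1/2}HR^{-1/2} & R^{-1/2}(R-I)K^\top\\
    K(R-I)R^{-1/2} & 0
  \end{pmatrix}.
\]
Its Frobenius inner product for two directions \(\xi_1,\xi_2\) is
\[
  \tr(R^{-1}H_1R^{-1}H_2)+2\tr\!\left(K_1(R-I)R^{-1}(R-I)K_2^\top\right).
\]
Here \(R-I\) and \(R^{-1}\) commute, so \((R-I)R^{-1}(R-I)=R+R^{-1}-2I\succeq0\). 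This yields \eqref{eq:completion-pair-kernel} and, on the diagonal, \eqref{eq:stratified-line-element}. The block structure also shows directly that the \(H\)-block and the \(K\)-block are orthogonal.

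\textbf{Step 3: Gram structure and kernel.} The matrix above is, up to isometric repackaging of the two off-diagonal copies, exactly \(\Phi_R(\xi)\) in \eqref{eq:completion-feature}. Hence \(\mathbf G\) is a Gram matrix, which gives positivity, the rank formula \eqref{eq:completion-gram-rank}, and the orthogonality criterion \eqref{eq:completion-pair-orthogonality}.

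For the kernel, \(\Phi_R(\xi)=0\) holds iff \(H=0\) and \(K(R-I)R^{-1/2}=0\). Since \(R^{-1/2}\) is invertible, this is iff \(H=0\) and \(K(R-I)=0\), which is \eqref{eq:stratified-kernel}. Consequently \(\ker=0\) iff \(R-I\) is invertible, i.e. \(\det(R-A^\top P_0A)\ne0\), which is the regular set. There \(\widetilde\Gamma_{m,P_0}\) coincides with \(\Gamma_{m,P_0}\) from \cref{thm:rank-stratification}. At singular \(R\), the nonzero \(K\) supported on the \(1\)-eigenspace of \(R\) are exactly the unidentifiable rotations of reference-valued modes. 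Throughout, the ambient AIRM itself is untouched.

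\textbf{Step 4: conditioning bounds.} I diagonalize \(R\). The eigenvalues of \(\Xiop(R)\) are \((x-1)^2/x\) for \(x\in\operatorname{spec}(R)\subset[a,b]\), so \(\tr(K\Xiop(R)K^\top)\) lies between \(\lambda_{\min}(\Xiop)\fro K^2\) and \(\lambda_{\max}(\Xiop)\fro K^2\). The spectral gap gives \((x-1)^2\ge\zeta^2\), and \(x\le b\), so \(\lambda_{\min}(\Xiop)\ge\zeta^2/b\). For the upper bound, \(\lambda_{\max}(\Xiop)\le\max_{[a,b]}(x-1)^2/x\). Substituting these into \eqref{eq:stratified-line-element} gives \eqref{eq:stratified-conditioning}.
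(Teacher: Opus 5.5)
Your proposal is correct and follows essentially the same route as the paper. You whiten to \(P_0=I\) and write the differential in the adapted basis \([U,N]\) as the block matrix with blocks \(H\), \((R-I)K^\top\), \(K(R-I)\), \(0\). From that block form you read off the pair kernel, the Gram factorization \(\Phi_R\), the kernel condition \(H=0,\ K(R-I)=0\), and the conditioning bounds from the eigenvalues \((x-1)^2/x\) of \(\Xiop(R)\). Your Step 1 gauge-fixing is extra bookkeeping that the paper folds into the definition of a horizontal quotient direction. It is consistent but not strictly needed, since the \(GL(m)\)-invariance of \(\widetilde\Gamma_{m,P_0}\) already makes the pullback descend.
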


\begin{corollary}[Reduced completion value and completed motion]
\label{cor:reduced-completion-value}
At fixed \(P_0\), in the normalized chart \(A^\top P_0A=I\), the reduced
minimum-distance value is
\begin{equation}
  \overline E(A,R)
  :=\min_{A^\top PA=R}\frac12\dai(P_0,P)^2
  =\frac12\dai(I,R)^2.
  \label{eq:reduced-completion-energy}
\end{equation}
It contains no independent information-subspace coordinate.  Consequently,
holding \(R\) fixed, all first and mixed parameter derivatives generated by
horizontal variations of \(A\) vanish.  Exact completion has already removed
the need for an implicit vertical stationarity solve: the minimizing ambient
matrix is the explicit section \(P=\Psi_{P_0,A}(R)\).  The pullback pair
metric \eqref{eq:completion-pair-kernel} is a different second-order object:
it measures motion of this minimizing matrix rather than curvature of the
reduced value.
\end{corollary}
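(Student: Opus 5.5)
The plan is to read the corollary directly off \cref{thm:spd-compression}(B), rather than computing any derivatives from scratch. Fix \(P_0\) and a representative with \(A^\top P_0A=I_m\), so that \(R_0=I\). Part (B) then gives the minimum directly:
\[
\min_{A^\top PA=R}\dai(P_0,P)=\dai(R_0,R)=\dai(I,R),
\]
and the minimum is attained uniquely at \(P=\Psi_{P_0,A}(R)\). Squaring and halving yields \eqref{eq:reduced-completion-energy}. In general charts the same argument gives \(\tfrac12\dai(A^\top P_0A,R)^2\). This quantity is invariant under the gauge \((A,R)\mapsto(AB,B^\top RB)\), because AIRM distance is congruence invariant. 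So the normalization loses nothing.

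Next comes the claim that there is no subspace coordinate. Consider a horizontal variation \(A_t\) with \(A_0=A\) and \(A^\top P_0\dot A=0\). Differentiating the normalization gives \(\tfrac{d}{dt}(A_t^\top P_0A_t)|_{0}=\dot A^\top P_0A+A^\top P_0\dot A=0\), so \(R_0\) is stationary to first order.

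I would do better than this by using the gauge to keep the chart exactly normalized along the whole curve. Replace \(A_t\) by \(A_tB_t\) with \(B_t=(A_t^\top P_0A_t)^{-1/2}\), and transport \(R\) accordingly. Since \(\overline E\) is gauge invariant, it suffices to compute in the normalized chart. There the formula \(\overline E=\tfrac12\dai(I,R)^2\) depends on \(R\) alone, so it is constant in \(A\) at fixed \(R\). It follows that \(\partial_A\overline E=0\) and every mixed derivative \(\partial_A\partial_R\overline E\) and \(\partial_A^2\overline E\) vanishes identically on the normalized slice.

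The main obstacle is pinning down what ``holding \(R\) fixed'' means under the gauge, since fixing the matrix \(R\) in an unnormalized chart is not gauge-invariant. I would state the claim on the normalized horizontal slice \(\{A^\top P_0A=I,\ A^\top P_0\dot A=0\}\), which is exactly where \cref{thm:stratified-line-element} parametrizes directions \(\xi=(H,K)\). On that slice \(\overline E\) is a function of \(R\) only, so all \(K\)-derivatives vanish.

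For the remark on vertical stationarity, no argmin has to be differentiated implicitly. Part (B) supplies the explicit minimizer \(\Psi_{P_0,A}(R)\), which is smooth in \((A,R)\) by \eqref{eq:canonical-completion}.

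Finally, I would contrast this with the pullback pair metric by comparing the two quantities in the \(K\)-direction at fixed \(R\). The Hessian of \(\overline E\) in that direction is zero. But \eqref{eq:stratified-line-element} gives \(\norm{D\widetilde\Gamma[\xi]}^2=2\tr(K\Xiop(R)K^\top)\), which is positive whenever \(K(R-I)\ne0\). The two objects therefore differ, which establishes the closing sentence of the corollary.
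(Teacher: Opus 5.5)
Your proposal is correct and follows essentially the paper's own proof. Both arguments read the reduced value off the shortest-completion identity \eqref{eq:spd-shortest-completion} and note that in the normalized chart it depends on \(R\) alone, so every horizontal \(A\)-derivative at fixed \(R\) vanishes. Both then observe that the explicit section \(\Psi_{P_0,A}(R)\) removes any implicit stationarity solve, while \eqref{eq:completion-pair-kernel} measures motion of that minimizer rather than curvature of the value. Your gauge-invariance remark and the explicit \(K\)-direction contrast \(2\tr(K\Xiop(R)K^\top)>0\) for \(K(R-I)\ne0\) are consistent elaborations, the latter mirroring \cref{rem:value-versus-motion}, not a different route.
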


\begin{remark}[Reduced value versus completion motion]
\label{rem:value-versus-motion}
The distinction is already visible for \(d=2\), \(m=1\), \(P_0=I\), and a
scalar target \(r>0\), \(r\ne1\).  With
\(u(\theta)=(\cos\theta,\sin\theta)^\top\), the canonical completion is
\[
  P_\star(\theta)=I+(r-1)u(\theta)u(\theta)^\top.
\]
Its reduced minimum-distance value is the constant
\(\tfrac12\log^2r\), whereas
\[
  \norm{\dot P_\star(0)}_{P_\star(0),\AI}^2
  =\frac{2(r-1)^2}{r}>0.
\]
Thus vanishing subspace derivatives of the optimized value coexist with
nonzero metric motion of its canonical minimizer.  A frozen--relaxed
second-variation decomposition may likewise contain nonzero direct and
Schur terms that cancel; their split depends on the chosen trivialization of
the moving constraint.  The completion kernel itself is analytic for every
\(R\in\SPD^m\), including repeated eigenvalues, and its degeneracy is exactly
the descriptor non-identifiability in \eqref{eq:stratified-kernel}.
\end{remark}

The same identity has a gauge-covariant moving-reference form.  Let
\(P_{0,t}=G_tG_t^\top\), define
\[
  R_{0,t}=A_t^\top P_{0,t}A_t,
  \quad
  U_t=G_t^\top A_tR_{0,t}^{-1/2},
  \quad
  \bar R_t=R_{0,t}^{-1/2}R_tR_{0,t}^{-1/2},
\]
\[
  Q_t=I+U_t(\bar R_t-I)U_t^\top,
  \qquad
  \Omega_t=G_t^{-1}\dot G_t,
  \qquad
  \mathfrak D_tQ_t=\dot Q_t+\Omega_tQ_t+Q_t\Omega_t^\top.
\]

\begin{proposition}[Covariant moving-reference identity]
\label{prop:moving-reference}
For \(P_t=\Psi_{P_{0,t},A_t}(R_t)=G_tQ_tG_t^\top\),
\begin{equation}
  \boxed{
  \norm{\dot P_t}_{P_t,\AI}^2
  =\fro{Q_t^{-1/2}(\mathfrak D_tQ_t)Q_t^{-1/2}}^2.}
  \label{eq:moving-reference}
\end{equation}
Under \(G_t'=G_tO_t\) with \(O_t\) orthogonal,
\[
  Q_t'=O_t^\top Q_tO_t,
  \qquad
  \mathfrak D_t'Q_t'=O_t^\top(\mathfrak D_tQ_t)O_t.
\]
Hence \eqref{eq:moving-reference} is independent of the moving frame.
\end{proposition}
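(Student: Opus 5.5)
The identity is a direct computation in the moving frame, so I will reduce it to the affine invariance of the AIRM and a product-rule expansion.

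First I verify the factorization \(P_t=G_tQ_tG_t^\top\). With \(P_{0,t}=G_tG_t^\top\), one may take \(P_{0,t}^{1/2}\) replaced by \(G_t\) in \eqref{eq:canonical-completion}. The second form of that formula shows \(\Psi\) depends only on \(P_0\), not on the choice of square root. Explicitly, \(G_tQ_tG_t^\top = P_{0,t}+P_{0,t}A_tR_{0,t}^{-1}(R_t-R_{0,t})R_{0,t}^{-1}A_t^\top P_{0,t}\), which follows by expanding \(U_t(\bar R_t-I)U_t^\top\) and using \(G_tG_t^\top=P_{0,t}\). Note also \(U_t^\top U_t=I\), so \(Q_t\in\SPD^d\).

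Next I differentiate:
\[
\dot P_t=\dot G_tQ_tG_t^\top+G_t\dot Q_tG_t^\top+G_tQ_t\dot G_t^\top=G_t\bigl(\Omega_tQ_t+\dot Q_t+Q_t\Omega_t^\top\bigr)G_t^\top=G_t(\mathfrak D_tQ_t)G_t^\top .
\]
The AIRM norm \(\norm{X}_{P,\AI}^2=\tr(P^{-1}XP^{-1}X)\) is invariant under congruence, \(\norm{GXG^\top}_{GPG^\top,\AI}=\norm{X}_{P,\AI}\). Applying this with \(G=G_t\) gives \(\norm{\dot P_t}^2_{P_t,\AI}=\tr(Q_t^{-1}\mathfrak D_tQ_tQ_t^{-1}\mathfrak D_tQ_t)\), which equals \(\fro{Q_t^{-1/2}(\mathfrak D_tQ_t)Q_t^{-1/2}}^2\) by cyclicity of the trace. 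This proves \eqref{eq:moving-reference}.

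For frame change I set \(G_t'=G_tO_t\). Then \(U_t'=O_t^\top U_t\), while \(R_{0,t}\) and \(\bar R_t\) are unchanged, so \(Q_t'=O_t^\top Q_tO_t\). The connection transforms as \(\Omega_t'=O_t^\top\Omega_tO_t+O_t^\top\dot O_t\). Expanding \(\dot Q_t'=\dot O_t^\top Q_tO_t+O_t^\top\dot Q_tO_t+O_t^\top Q_t\dot O_t\), the extra terms cancel: \(\dot O_t^\top Q_tO_t+O_t^\top\dot O_tO_t^\top Q_tO_t=0\) because \(\dot O_t^\top=-O_t^\top\dot O_tO_t^\top\), and the transposed pair cancels symmetrically. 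This leaves \(\mathfrak D_t'Q_t'=O_t^\top(\mathfrak D_tQ_t)O_t\). Orthogonal conjugation preserves the right side of \eqref{eq:moving-reference}, using \((O^\top QO)^{-1/2}=O^\top Q^{-1/2}O\). One could also note this directly from the left side being frame-free.

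The only mildly delicate step is the square-root independence in the first step. I handle it by appealing to the second line of \eqref{eq:canonical-completion} rather than the symmetric square root.
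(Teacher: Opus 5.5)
Your proof is correct and follows the paper's argument. You differentiate $P_t=G_tQ_tG_t^\top$ to get $G_t^{-1}\dot P_tG_t^{-\top}=\mathfrak D_tQ_t$, apply congruence invariance of the AIRM, and then use $\Omega_t'=O_t^\top\Omega_tO_t+O_t^\top\dot O_t$ to cancel the $O_t^\top\dot O_t$ terms; your explicit check via the second line of \eqref{eq:canonical-completion} that $G_tQ_tG_t^\top=\Psi_{P_{0,t},A_t}(R_t)$ for a non-symmetric factor $G_t$ covers a step the paper simply takes as definitional.
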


For a fixed reference and a fixed-rank curve, integrating
\eqref{eq:stratified-line-element} gives the exact energy decomposition
\begin{align}
  \int_0^T\norm{\dot P_t}_{P_t,\AI}^2\,\dd t
  &=\int_0^T\fro{R_t^{-1/2}\dot R_tR_t^{-1/2}}^2\,\dd t
  \nonumber\\
  &\quad+2\int_0^T\tr(K_t\Xiop(R_t)K_t^\top)\,\dd t.
  \label{eq:integrated-energy}
\end{align}

\begin{corollary}[Boundary of continuous rank loss]
\label{cor:rank-loss}
Suppose
\(P_t=P_0^{1/2}[I+U_t(R_t-I)U_t^\top]P_0^{1/2}\) has active rank \(m\)
for \(t<t_0\) and converges to a point of rank at most \(m-r\) relative to
\(P_0\).  Then at least \(r\) singular values of \(R_t-I\) converge to zero.
Thus continuous rank loss can occur only through visible modes approaching the
reference geometry.
\end{corollary}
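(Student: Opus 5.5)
We prove \cref{cor:rank-loss} by reducing the rank of \(P_t-P_0\) to the rank of \(R_t-I\), and then applying lower semicontinuity of rank together with continuity of singular values.

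First we compute the displacement in whitened coordinates. Since \(U_t\) has orthonormal columns and \(P_0^{1/2}\) is invertible,
\[
  P_t-P_0=P_0^{1/2}U_t(R_t-I)U_t^\top P_0^{1/2},
\]
so \(\rank(P_t-P_0)=\rank\bigl(U_t(R_t-I)U_t^\top\bigr)=\rank(R_t-I)\). The second equality holds because \(U_t\) is injective and \(U_t^\top\) is surjective onto \(\R^m\). In particular the nonzero singular values of \(P_0^{-1/2}(P_t-P_0)P_0^{-1/2}=U_t(R_t-I)U_t^\top\) coincide with those of \(R_t-I\), counted with multiplicity. To see this, complete \(U_t\) to an orthogonal matrix \([U_t,N_t]\). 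The whitened displacement is then \(\blockdiag(R_t-I,0)\) in that basis, and conjugation by an orthogonal matrix preserves singular values.

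Next we pass to the limit. Let \(P_\infty=\lim_{t\to t_0}P_t\) with \(\rank(P_\infty-P_0)\le m-r\). Then \(M_t:=P_0^{-1/2}(P_t-P_0)P_0^{-1/2}\) converges to \(M_\infty\) with \(\rank M_\infty\le m-r\). Singular values are continuous in the matrix; Weyl's inequality gives \(|\sigma_j(M_t)-\sigma_j(M_\infty)|\le\opnorm{M_t-M_\infty}\). Hence \(\sigma_j(M_t)\to0\) for every \(j>m-r\). By the previous step, \(\sigma_j(M_t)=\sigma_j(R_t-I)\) for \(j\le m\). So the \(r\) smallest singular values \(\sigma_{m-r+1},\dots,\sigma_m\) of \(R_t-I\) converge to zero. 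Since \(R_t-I\) is symmetric, these are absolute eigenvalues, so at least \(r\) eigenvalues of \(R_t\) approach \(1\), i.e.\ visible modes approach the reference value. This is the asserted interpretation.

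The only delicate point is bookkeeping rather than analysis. The representatives \(U_t\) and \(R_t\) are defined only up to the \(GL(m)\)/orthogonal gauge, and need not converge themselves, since the visible subspace may rotate wildly. The argument above avoids any convergence of \(U_t\) or \(R_t\): it uses only the gauge-invariant singular values of \(R_t-I\), read off from the convergent matrix \(M_t\). So no compactness or gauge fixing is needed. We would remark in passing that the statement is consistent with \eqref{eq:stratified-kernel}: the modes where \(R_t-I\) degenerates are exactly the reference-valued modes that become unidentifiable at the boundary of the stratum described in \eqref{eq:strata-closure}.
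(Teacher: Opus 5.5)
Your proof is correct and follows essentially the paper's own argument. You whiten so that the nonzero singular values of $U_t(R_t-I)U_t^\top$ coincide with those of $R_t-I$, then use continuity of singular values (which you make quantitative via Weyl's inequality) at a limit of rank at most $m-r$. Like the paper, you need no convergence or gauge-fixing of $U_t$ or $R_t$.
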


The proofs are collected in \cref{app:stratified-proofs}.  The line element is
a kinematic identity on the completion bundle.  A learning or control law
would require an additional evolution rule for \((P_0,A,R)\).

\section{Structured Expressivity Certificates}
\label{sec:structured-expressivity}

The full-SPD channel can realize every \(R\in\SPD^m\).  Implementable
optimizers often restrict \(P\) to a coordinate, block, Kronecker, or other
structured family.  For a family \(\Fcal\subset\SPD^d\) equipped with a
within-family distance \(d_\Fcal\), define the visible restricted complexity
\begin{equation}
  D_{\Fcal}^{\mathrm{vis}}(A,R;P_0)
  =
  \inf_{\substack{P\in\Fcal\\A^\top PA=R}}d_\Fcal(P_0,P),
  \label{eq:visible-restricted-complexity}
\end{equation}
with value \(+\infty\) when no structured completion exists.  This section
characterizes finiteness for diagonal and block families.

\subsection{Diagonal geometry}

Let \(a_j^\top\) be the \(j\)-th row of \(A\in\R^{d\times m}\) and set
\(G_j=a_ja_j^\top\in\PSD^m\).  A positive diagonal cometric
\(D(p)=\diag(p_1,\ldots,p_d)\) has visible image
\begin{equation}
  A^\top D(p)A=\sum_{j=1}^d p_jG_j.
  \label{eq:diagonal-image}
\end{equation}
Let \(\Kcal_{\mathrm{diag}}=\cone(G_1,\ldots,G_d)\).

\begin{theorem}[Diagonal expressivity and facial alternative]
\label{thm:diagonal-expressivity}
Let \(D_0=\diag(p_1^0,\ldots,p_d^0)\) with \(p_j^0>0\), and define
\begin{equation}
  D_{\mathrm{diag}}^{\mathrm{vis}}
  =
  \inf_{\substack{p_j>0\\\sum_jp_jG_j=R}}
  \left(\sum_{j=1}^d\log^2\frac{p_j}{p_j^0}\right)^{1/2}.
  \label{eq:diagonal-complexity}
\end{equation}
Then \(D_{\mathrm{diag}}^{\mathrm{vis}}<\infty\) if and only if
\begin{equation}
  R\in\ri\Kcal_{\mathrm{diag}},
  \label{eq:diagonal-ri}
\end{equation}
equivalently, \(R=\sum_jp_jG_j\) for some \(p_j>0\).  When finite, the
infimum is attained.  If the \(G_j\) are linearly independent, the feasible
diagonal cometric is unique.

There are two distinct failure certificates.
\begin{enumerate}[label=(\roman*),leftmargin=*]
  \item If \(R\notin\Kcal_{\mathrm{diag}}\), there exists symmetric \(Y\)
  such that
  \[
    \tr(YR)<0,
    \qquad
    \tr(YG_j)\ge0\quad\forall j.
  \]
  \item If
  \(R\in\Kcal_{\mathrm{diag}}\setminus\ri\Kcal_{\mathrm{diag}}\), there
  exists symmetric \(Y\) such that
  \[
    \tr(YR)=0,
    \qquad
    \tr(YG_j)\ge0\quad\forall j,
    \qquad
    \tr(YG_{j_\star})>0\quad\text{for some }j_\star.
  \]
\end{enumerate}
The first strictly separates an infeasible target; the second exposes the
proper face containing a boundary target that requires some diagonal weights
to vanish.
\end{theorem}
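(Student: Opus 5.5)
The plan is to reduce everything to finite-dimensional conic geometry in the space $\Sym^m$. Write $L:\R^d\to\Sym^m$, $L(p)=\sum_j p_jG_j$. Then $\Kcal_{\mathrm{diag}}=L(\R^d_{+})$.

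\textbf{Finiteness criterion.} The first step is the equivalence between finiteness and \eqref{eq:diagonal-ri}. For any positive $p$, the sum $\sum_j\log^2(p_j/p_j^0)$ is finite. So $D^{\mathrm{vis}}_{\mathrm{diag}}<\infty$ if and only if the feasible set $\{p>0: L(p)=R\}$ is nonempty, which is the stated ``equivalently'' clause. It then suffices to show $L(\R^d_{++})=\ri L(\R^d_+)$. This is the standard fact that a linear map carries relative interiors of convex sets onto relative interiors of their images (Rockafellar, Thm.~6.6), applied with $\ri\R^d_+=\R^d_{++}$.

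\textbf{Attainment and uniqueness.} For attainment I would pass to log-coordinates $p_j=p_j^0e^{x_j}$. The objective becomes $\norm{x}^2$ over the set $\{x:\sum_j p_j^0e^{x_j}G_j=R\}$, which is closed in $\R^d$. Sublevel sets $\{\norm{x}\le c\}$ are compact, so a nonempty closed feasible set intersected with one of them attains the minimum. Closedness holds because $x\mapsto L(p^0e^{x})$ is continuous. If the $G_j$ are linearly independent, $L$ is injective, so $L(p)=R$ has at most one solution $p$, and the feasible diagonal cometric is unique.

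\textbf{Certificate (i).} Here $R\notin\Kcal_{\mathrm{diag}}$. The set $\Kcal_{\mathrm{diag}}$ is finitely generated, hence a closed convex cone. Separating the point $R$ from it in the trace inner product gives $Y$ with $\tr(YR)<0\le\tr(YK)$ for all $K\in\Kcal_{\mathrm{diag}}$. In particular $\tr(YG_j)\ge0$ for every $j$. This is Farkas' lemma.

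\textbf{Certificate (ii).} This is the step needing the most care. Let $R\in\Kcal\setminus\ri\Kcal$ and write $\mathcal L=\operatorname{span}\Kcal$. I would separate $R$ from $\ri\Kcal$ properly within $\mathcal L$ (Rockafellar, Thm.~11.3). That gives a nonzero $Y\in\mathcal L$ with $\tr(YK)\ge\tr(YR)$ for all $K\in\Kcal$, and $\tr(YK)>\tr(YR)$ for some $K\in\Kcal$. Since $\Kcal$ is a cone containing $R$, taking $K=tR$ with $t\to0$ and $t\to\infty$ forces $\tr(YR)=0$, and then $\tr(YK)\ge0$ on $\Kcal$, in particular on each $G_j$.

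Properness gives some $K=\sum_j c_jG_j$ with $c\ge0$ and $\tr(YK)>0$. Hence $\tr(YG_{j_\star})>0$ for some $j_\star$. One must check that properness is genuinely available, i.e.\ that $Y$ is not constant on $\Kcal$. This holds because $Y\ne0$ lies in $\mathcal L$, so $\tr(Y\cdot)$ is nonzero on $\mathcal L=\Kcal-\Kcal$.

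\textbf{Face interpretation.} Write $R=\sum_jp_jG_j$ with $p\ge0$. Then $0=\tr(YR)=\sum_jp_j\tr(YG_j)$ forces $p_{j_\star}=0$ in every representation. So $R$ lies in the exposed proper face $\Kcal\cap Y^\perp$, which justifies the closing interpretive sentence of the theorem.
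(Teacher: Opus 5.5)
Your proposal is correct and takes essentially the same route as the paper. Both use the linear-image identity $L(\R_{++}^d)=\ri\Kcal_{\mathrm{diag}}$, which the paper proves as a lemma while you cite Rockafellar. Both then use log-coordinates for attainment, injectivity of $L$ for uniqueness, closed-cone separation for (i), and, for (ii), separation within $\operatorname{span}\Kcal_{\mathrm{diag}}$ together with the fact that a functional vanishing on every generator vanishes on the span. Your added observation that $\tr(YR)=0$ forces $p_{j_\star}=0$ in every nonnegative representation usefully makes explicit the closing face interpretation, which the paper leaves unproved.
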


\begin{example}[An end-to-end visible training channel]
\label{ex:worked-training-channel}
Let the rows of \(A\in\R^{3\times2}\) be
\[
  a_1^\top=(1,0),
  \qquad
  a_2^\top=(0,1),
  \qquad
  a_3^\top=(1,1).
\]
Take \(P_0=I_3\) and the visible target \(R=I_2\).  Since
\[
  R_0=A^\top A=\begin{pmatrix}2&1\\1&2\end{pmatrix},
\]
the full-SPD canonical completion is
\[
  P_\star=\Psi_{I_3,A}(I_2)
  =\frac19
  \begin{pmatrix}
    8&-1&-2\\
    -1&8&-2\\
    -2&-2&5
  \end{pmatrix},
  \qquad A^\top P_\star A=I_2.
\]
Thus the visible request has a unique nearest full-SPD realization.  Within
the diagonal family, however, it requires the semidefinite coefficient vector
\(p=(1,1,0)\); no strictly positive diagonal geometry realizes it.  The
matrix
\[
  Y=\begin{pmatrix}0&1\\1&0\end{pmatrix}
\]
is a facial certificate: \(\tr(YR)=0\),
\(\tr(YG_1)=\tr(YG_2)=0\), and \(\tr(YG_3)=2\), so it is a nontrivial
functional on \(\operatorname{span}\Kcal_{\mathrm{diag}}\).

For the visible covector \(\alpha=Ae_1=(1,0,1)^\top\), the canonical update
is
\[
  u_\star=-P_\star\alpha
  =\left(-\frac23,\frac13,-\frac13\right)^\top,
  \qquad A^\top u_\star=-e_1.
\]
Every feasible full geometry has this same visible update, while its invisible
component may differ by any vector in
\(\ker A^\top=\operatorname{span}\{(1,1,-1)^\top\}\).  The example
therefore displays, in one calculation, exact visible reduction, canonical
full completion, invisible drift, and a structured-family boundary
certificate.
\end{example}

\subsection{Block geometry}

Partition the rows of \(A\) into blocks \(A_b\).  A block cometric has the
form \(P=\blockdiag(P_1,\ldots,P_B)\), \(P_b\succ0\), and visible image
\begin{equation}
  A^\top PA=\sum_bA_b^\top P_bA_b.
  \label{eq:block-image}
\end{equation}
Define
\[
  \Kcal_{\mathrm{block}}
  =\left\{\sum_bA_b^\top P_bA_b:P_b\succeq0\right\}.
\]

\begin{theorem}[Block expressivity]
\label{thm:block-expressivity}
For a block reference \(P_0=\blockdiag(P_1^0,\ldots,P_B^0)\), the quantity
\begin{equation}
  D_{\mathrm{block}}^{\mathrm{vis}}
  =
  \inf_{\substack{P_b\succ0\\\sum_bA_b^\top P_bA_b=R}}
  \left(\sum_b\dai(P_b^0,P_b)^2\right)^{1/2}
  \label{eq:block-complexity}
\end{equation}
is finite if and only if
\(R\in\ri\Kcal_{\mathrm{block}}\), equivalently if positive definite
blocks realize \(R\).  When finite, the infimum is attained.

The cone \(\Kcal_{\mathrm{block}}\) is closed.  If
\(R\notin\Kcal_{\mathrm{block}}\), there is symmetric \(Y\)
such that
\[
  \tr(YR)<0,
  \qquad
  A_bYA_b^\top\succeq0\quad\forall b.
\]
If
\(R\in\Kcal_{\mathrm{block}}\setminus
\ri\Kcal_{\mathrm{block}}\), there is a facial certificate with
\[
  \tr(YR)=0,
  \qquad
  A_bYA_b^\top\succeq0\quad\forall b,
  \qquad
  A_{b_\star}YA_{b_\star}^\top\ne0\quad\text{for some }b_\star.
\]
\end{theorem}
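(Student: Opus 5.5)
The plan is to treat the block cone as the image of a product of PSD cones under a linear map and then apply standard convex facts. Define the linear map \(\mathcal L:\prod_b\Sym^{d_b}\to\Sym^m\) by \(\mathcal L(P_1,\ldots,P_B)=\sum_bA_b^\top P_bA_b\). Then \(\Kcal_{\mathrm{block}}=\mathcal L(\prod_b\PSD^{d_b})\). Two facts about linear images of convex sets do most of the work. First, \(\ri\mathcal L(C)=\mathcal L(\ri C)\) (Rockafellar, Thm.~6.6). Second, \(\ri\prod_b\PSD^{d_b}=\prod_b\SPD^{d_b}\). Together these give \(\ri\Kcal_{\mathrm{block}}=\{\sum_bA_b^\top P_bA_b:P_b\succ0\}\), which is exactly the set of targets realized by positive definite blocks.

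Finiteness of \eqref{eq:block-complexity} means the feasible set is nonempty, since every feasible point has a finite AIRM distance to the reference. This proves the stated equivalence.

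\textbf{Attainment.} Fix a feasible point and let \(c\) be its cost. Consider the sublevel set
\[
\{(P_b): P_b\succ0,\ \mathcal L(P)=R,\ \textstyle\sum_b\dai(P_b^0,P_b)^2\le c^2\}.
\]
Each \(P_b\) then lies in a closed AIRM ball around \(P_b^0\). Such a ball is compact in \(\SPD^{d_b}\), because the eigenvalues of \((P_b^0)^{-1/2}P_b(P_b^0)^{-1/2}\) are confined to \([e^{-c},e^{c}]\). The affine constraint is closed and the objective is continuous, so the minimum is attained.

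\textbf{Closedness.} This is the step I expect to be the main obstacle, since linear images of closed cones need not be closed. The standard sufficient condition is \(\ker\mathcal L\cap\prod_b\PSD^{d_b}=\{0\}\) modulo the lineality of the image; here the relevant fact is that \(\mathcal L\) only sees the compressions. So I would reduce first. Let \(\Pi_b\) be the orthogonal projector onto \(\range(A_b)\). Replacing \(P_b\) by \(\Pi_bP_b\Pi_b\) does not change \(A_b^\top P_bA_b\), because \(A_b^\top=A_b^\top\Pi_b\). Hence it suffices to take the PSD cone on \(\range A_b\). On that restricted domain, \(\sum_bA_b^\top P_bA_b=0\) with every \(P_b\succeq0\) forces \(A_b^\top P_bA_b=0\), which gives \(P_b^{1/2}A_b=0\), hence \(P_b\) vanishes on \(\range A_b\), hence \(P_b=0\). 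A pointed closed cone whose recession directions meet the kernel only at \(0\) has a closed image. Concretely, take a sequence \(\mathcal L(P^{(k)})\to R\). If \(\norm{P^{(k)}}\) were unbounded, normalizing and passing to a limit would produce a nonzero PSD element of the kernel, which is impossible. So the sequence is bounded, a subsequence converges, and \(R\) lies in the image.

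\textbf{Certificates.} For \(R\notin\Kcal_{\mathrm{block}}\), closedness allows strict separation of the point \(R\) from the closed convex cone under the trace pairing. This gives \(Y\) with \(\tr(YR)<0\le\tr(YK)\) for all \(K\in\Kcal_{\mathrm{block}}\). Now \(\tr(YA_b^\top P_bA_b)=\tr((A_bYA_b^\top)P_b)\), and this is \(\ge0\) for all \(P_b\succeq0\) exactly when \(A_bYA_b^\top\succeq0\).

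For \(R\in\Kcal_{\mathrm{block}}\setminus\ri\Kcal_{\mathrm{block}}\), I would use proper separation of \(\{R\}\) from the cone within \(\operatorname{aff}\Kcal_{\mathrm{block}}=\operatorname{span}\Kcal_{\mathrm{block}}\) (Rockafellar, Thm.~11.3). Restricting \(Y\) to that span, this gives a supporting functional with \(\tr(YR)\le\tr(YK)\) for all \(K\) in the cone, and the functional is not constant on the cone. Since the cone contains \(0\) and \(R\) lies in the cone, this forces \(\tr(YR)=0\) and \(\tr(YK)\ge0\). Non-constancy means some generator \(A_{b_\star}^\top P A_{b_\star}\) has positive pairing. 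This yields \(A_{b_\star}YA_{b_\star}^\top\ne0\), as in the diagonal case of \cref{thm:diagonal-expressivity}.
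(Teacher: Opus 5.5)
Your proposal is correct. For the relative-interior criterion, attainment, and both certificates it matches the paper's proof step for step:
\begin{itemize}
\item the linear image of \(\ri\prod_b\PSD^{d_b}=\prod_b\SPD^{d_b}\);
\item compactness of the AIRM sublevel set;
\item closed-cone separation combined with the blockwise duality \(\tr(YA_b^\top P_bA_b)=\tr(A_bYA_b^\top P_b)\);
\item a supporting functional that is nontrivial on \(\operatorname{span}\Kcal_{\mathrm{block}}\). Your proper-separation formulation amounts to the same thing.
\end{itemize}

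The one genuinely different step is closedness of \(\Kcal_{\mathrm{block}}\).

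The paper works on the image side. It identifies each \(C_b=\{A_b^\top P_bA_b:P_b\succeq0\}\) as the PSD matrices with range in \(\range(A_b^\top)\), hence closed. It then closes the finite sum via \(0\le\tr X_{b,n}\le\tr X_n\), which bounds every summand because all of them are PSD.

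You work on the preimage side. The compression \(P_b\mapsto\Pi_bP_b\Pi_b\) leaves the image unchanged. It also removes the PSD kernel directions (PSD \(P_b\) supported on \(\ker A_b^\top\)), which would otherwise make \(\ker\mathcal L\cap\prod_b\PSD^{d_b}\ne\{0\}\). After this, your normalization argument bounds the preimages of convergent sequences.

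Each route has its advantage:
\begin{itemize}
\item Your route never needs the reverse inclusion in the range characterization of \(C_b\), which the paper asserts without exhibiting a preimage. It is also the generic recession-cone criterion, so it transfers to other cone images once the kernel has been trivialized.
\item The paper's route is shorter. It relies on the special fact that all summands lie in the PSD cone, where the trace controls each piece.
\end{itemize}

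Your preliminary phrase ``modulo the lineality of the image'' is imprecise. The standard condition is that \(\ker\mathcal L\cap C\) lie in the lineality space of the domain cone \(C\). However, the concrete argument you give after the reduction is self-contained and does not depend on that phrase.
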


The proofs in \cref{app:expressivity-proofs} use the identity that a linear
map sends the relative interior of a finite-dimensional convex cone onto the
relative interior of its image.  The distinction between strict separation
and facial exposure is essential: targets on a proper face are realizable by
semidefinite structured geometries while remaining unreachable by strictly
positive optimizer blocks.  In both facial alternatives, nontriviality is
measured on the linear span of the image cone; ambient nonzero \(Y\) alone is
insufficient because it may annihilate the entire cone.

The certificates have direct finite-dimensional feasibility forms.  For the
diagonal boundary case, scaling permits the normalization
\(\sum_j\tr(YG_j)=1\), leaving linear equalities and inequalities in \(Y\);
strict infeasibility can be normalized by \(\tr(YR)\le-1\).  For block
geometry, the corresponding constraints
\(A_bYA_b^\top\succeq0\) together with
\(\sum_b\tr(A_bYA_b^\top)=1\) form an SDP feasibility problem.  These
formulations make the certificates computable in principle; algorithmic
complexity and numerical tolerances are outside the present characterization.

Kronecker, low-rank, and tensor families require separate image geometry and
are not covered by the diagonal/block theorems.

\section{Recovery of Visible and Completed Geometry}
\label{sec:statistical-recovery}

The isometric fixed-channel section transfers visible SPD error exactly.  When
the information subspace is estimated, the stratified line element adds a
second, mismatch-weighted Grassmann term.

All results below are stated after congruence whitening the reference to
\(P_0=I\); AIRM congruence invariance transports them to an arbitrary fixed
reference.  In these coordinates, for an orthonormal frame
\(U\in\St(d,m)\), write
\begin{equation}
  \Cpl(U,R)=I+U(R-I)U^\top.
  \label{eq:normalized-completion}
\end{equation}

Two observation regimes must be distinguished.  The exact visible space in
\cref{sec:introduction} is the span of the observed covectors.  The
known-subspace result below conditions on any such fixed space.  The
unknown-subspace result instead adopts a population second-moment model and
\emph{defines} its target space as the leading \(m\)-dimensional eigenspace
of \(\Sigma\).  This spectral target agrees with the population covector
support under a rank-\(m\) population model; a finite observed span equals it
only when the sampled covectors span that support.  With nonzero spectral
tails it instead represents a chosen principal approximation.  No automatic
identification of the two regimes is assumed.

For \(W,\widetilde W\in\Gr(m,d)\), let \(\dgr(W,\widetilde W)\) be the
Euclidean norm of their principal-angle vector.  Define
\begin{equation}
  \chi(R)=\opnorm{\Xiop(R)}^{1/2}
  =\max_{\lambda\in\operatorname{spec}(R)}
  \frac{|\lambda-1|}{\sqrt\lambda}.
  \label{eq:chi}
\end{equation}

\begin{theorem}[Joint deterministic perturbation]
\label{thm:joint-perturbation}
Let \(U,V\) be endpoint frames connected by a horizontal minimizing
Grassmann geodesic, and express \(R,S\in\SPD^m\) in the parallel endpoint
frames.  With reference \(I\),
\begin{equation}
  \boxed{
  \dai(\Cpl(U,R),\Cpl(V,S))
  \le
  \dai(R,S)
  +\sqrt2\min\{\chi(R),\chi(S)\}\dgr(W,\widetilde W).}
  \label{eq:joint-perturbation}
\end{equation}
The first term is visible geometry error.  The second is subspace error, and
it vanishes for reference-valued visible geometry.
\end{theorem}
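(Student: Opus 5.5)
The plan is a triangle inequality through an intermediate completion, $\dai(\Cpl(U,R),\Cpl(V,S))\le \dai(\Cpl(U,R),\Cpl(V,R))+\dai(\Cpl(V,R),\Cpl(V,S))$, with the second leg bounded exactly and the first by integrating the line element of \cref{thm:stratified-line-element}. We choose which endpoint carries $R$ versus $S$ so as to obtain the $\min\{\chi(R),\chi(S)\}$.

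\emph{Fixed-subspace leg.} For fixed $V$ (i.e.\ fixed $A$ with $A^\top A=I$, reference $I$), the map $R\mapsto\Cpl(V,R)=\Psi_{I,V}(R)$ is an isometric totally geodesic embedding by \cref{thm:spd-compression}(C). Hence $\dai(\Cpl(V,R),\Cpl(V,S))=\dai(R,S)$ exactly. Here "$R,S$ expressed in the parallel endpoint frames" means that $R$ is transported to the $V$-frame by the horizontal Grassmann geodesic, which is how the two visible matrices become comparable.

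\emph{Subspace leg.} Let $U_t$, $t\in[0,1]$, be the horizontal minimizing Grassmann geodesic from $U$ to $V$. Write $U_t=U\cos(t\Theta)+N\sin(t\Theta)$ in principal-angle form, and hold the visible matrix $R$ fixed in the parallel frame, so $\dot R=0$. Horizontality gives $U_t^\top\dot U_t=0$, so the curve $P_t=\Cpl(U_t,R)$ is in the normalized horizontal gauge of \cref{thm:stratified-line-element} with $K_t=N_t^\top\dot U_t$ and $\fro{K_t}=\fro{\Theta}=\dgr(W,\widetilde W)$ constant. The line element \eqref{eq:stratified-line-element} then gives
\[
\norm{\dot P_t}_{P_t,\AI}^2=2\tr\!\left(K_t\Xiop(R)K_t^\top\right)\le 2\opnorm{\Xiop(R)}\fro{K_t}^2=2\chi(R)^2\dgr(W,\widetilde W)^2 .
\]
Integrating the speed gives $\dai(\Cpl(U,R),\Cpl(V,R))\le\sqrt2\,\chi(R)\,\dgr(W,\widetilde W)$. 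Routing the triangle inequality through $\Cpl(U,S)$ instead gives the same bound with $\chi(S)$, and taking the better route yields \eqref{eq:joint-perturbation}. If $R=I$, then $\Xiop(R)=0$, consistent with the vanishing claim.

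\emph{Main obstacle.} The delicate step is bookkeeping rather than estimation. The line-element theorem is stated at normalized representatives with one fixed complement $N$, whereas along the geodesic the complement frame rotates. We must check that the $K$ appearing in the theorem at time $t$ is $N_t^\top\dot U_t$ for the moving orthonormal complement $N_t$, and that holding $R$ "fixed" in the parallel-transported frame indeed corresponds to $\dot R=0$ in the theorem's coordinates. The horizontal gauge $U_t^\top\dot U_t=0$ is exactly what makes the visible frame parallel, so the identification should go through. We would verify it directly by differentiating $\Cpl(U_t,R)=I+U_t(R-I)U_t^\top$ and comparing with the derivative computed in the proof of \cref{thm:stratified-line-element}.
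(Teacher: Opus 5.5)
Your proposal is correct and is essentially the paper's proof: a triangle inequality through an intermediate completion. The fixed-frame leg equals $\dai(R,S)$ by isometry of the section, and the rotation leg is bounded by integrating $2\tr(K_t\Xiop K_t^\top)\le 2\chi^2\fro{K_t}^2$ along the horizontal Grassmann geodesic; running both orderings yields the minimum. The bookkeeping you flag goes through, since $\dot U_t\perp U_t$ gives $\fro{K_t}=\fro{\dot U_t}=\dgr(W,\widetilde W)$ for any orthonormal complement $N_t$.
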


For a known fixed visible subspace, a direct concentration theorem retains
the intrinsic dimension.  In an orthonormal coordinate frame of \(W\), let
\(X_1,\ldots,X_n\in\R^m\) be i.i.d. copies of \(X\), define the uncentered
second moment \(R=\E[XX^\top]\succ0\), assume
\(\norm{R^{-1/2}X}_2\le L\) almost surely, and set
\(\Rhat=n^{-1}\sum_iX_iX_i^\top\).

\begin{theorem}[Known-subspace finite-sample recovery]
\label{thm:known-subspace-recovery}
For \(0<\varepsilon\le1\), if
\begin{equation}
  n\ge
  \frac{8L^4}{\varepsilon^2}
  \left(m\log9+\log\frac{2}{\delta}\right),
  \label{eq:known-subspace-sample-size}
\end{equation}
then with probability at least \(1-\delta\),
\begin{equation}
  e^{-\varepsilon}R\preceq\Rhat\preceq e^\varepsilon R,
  \qquad
  \dai(\Cpl(U,R),\Cpl(U,\Rhat))\le\sqrt m\,\varepsilon.
  \label{eq:known-subspace-bound}
\end{equation}
\end{theorem}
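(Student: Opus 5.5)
The plan has two parts: prove the spectral sandwich by a whitened matrix concentration argument, then convert it into the AIRM bound using the isometry of the fixed-channel section.

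\emph{Step 1: whitening.} Put \(Z_i=R^{-1/2}X_i\). Then \(\E[Z_iZ_i^\top]=I_m\) and \(\norm{Z_i}_2\le L\) almost surely. Also \(R^{-1/2}\Rhat R^{-1/2}=n^{-1}\sum_iZ_iZ_i^\top=:\widehat I\). The sandwich \(e^{-\varepsilon}R\preceq\Rhat\preceq e^{\varepsilon}R\) is equivalent to \(\operatorname{spec}(\widehat I)\subset[e^{-\varepsilon},e^{\varepsilon}]\). It therefore suffices to prove \(\opnorm{\widehat I-I}\le\varepsilon/2\). For \(0<\varepsilon\le1\) one has \(1-\varepsilon/2\ge e^{-\varepsilon}\) and \(1+\varepsilon/2\le e^{\varepsilon}\), and the first inequality is the binding one: \(e^{-\varepsilon}\le1-\varepsilon+\varepsilon^2/2\le1-\varepsilon/2\) on \((0,1]\).

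\emph{Step 2: net argument.} The constants \(m\log9\) and \(\log(2/\delta)\) point to a \(1/4\)-net \(\mathcal N\) of the sphere \(S^{m-1}\) with \(|\mathcal N|\le9^m\). The standard bound for symmetric matrices gives \(\opnorm{M}\le2\max_{v\in\mathcal N}|v^\top Mv|\). For fixed \(v\), the scalars \((v^\top Z_i)^2\in[0,L^2]\) are i.i.d. with mean \(1\). Hoeffding's inequality gives
\[
\Pr\bigl(|v^\top(\widehat I-I)v|\ge t\bigr)\le2\exp(-2nt^2/L^4).
\]
Take \(t=\varepsilon/4\) and apply a union bound over \(\mathcal N\). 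The failure probability is at most \(2\cdot9^m\exp(-n\varepsilon^2/(8L^4))\), which is at most \(\delta\) exactly under \eqref{eq:known-subspace-sample-size}. On the complementary event, \(\opnorm{\widehat I-I}\le2\cdot\varepsilon/4=\varepsilon/2\), and Step 1 gives the sandwich. This bookkeeping matches the stated constant \(8\) exactly, so I expect it to be the intended route. The one point to check carefully is the net lemma constant, \(1/(1-2\cdot\tfrac14)=2\).

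\emph{Step 3: AIRM transfer.} This step is the conceptually key one, and it is easy. In coordinates with \(P_0=I\), the map \(\Cpl(U,\cdot)\) is exactly \(\Psi_{I,U}\), since \(R_0=U^\top U=I\). By \cref{thm:spd-compression}(C) it is an isometric embedding, so
\[
\dai(\Cpl(U,R),\Cpl(U,\Rhat))=\dai(R,\Rhat)=\fro{\log(R^{-1/2}\Rhat R^{-1/2})}=\fro{\log\widehat I}.
\]
The matrix \(\widehat I\) has \(m\) eigenvalues, all in \([e^{-\varepsilon},e^{\varepsilon}]\), so each \(|\log\lambda|\le\varepsilon\). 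Hence the Frobenius norm is at most \(\sqrt m\,\varepsilon\).

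The only real obstacle is the constant bookkeeping in Step 2. If the Hoeffding and net constants did not reproduce the \(8\), I would fall back to a one-sided argument, using operator-norm bounds on \(\widehat I^{\pm1}\), but the calculation above appears to close.
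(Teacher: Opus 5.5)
Your proposal is correct and follows the paper's proof essentially step for step. The shared steps are whitening by \(R^{-1/2}\), Hoeffding on \((u^\top Z_i)^2\in[0,L^2]\) with \(t=\varepsilon/4\), a union bound over a \(1/4\)-net of size at most \(9^m\), the factor-\(2\) net lemma giving \(\opnorm{\widehat I-I}\le\varepsilon/2\), the inequality \(e^{-\varepsilon}\le1-\varepsilon/2\) on \((0,1]\), and the isometry of the fixed-channel section to transfer \(\dai(R,\Rhat)\le\sqrt m\,\varepsilon\); your constant bookkeeping reproduces the stated factor \(8\) exactly, so no fallback is needed.
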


This self-contained net--Hoeffding bound is deliberately conservative.  In
fact
\[
  m=\E\norm{R^{-1/2}X}_2^2\le L^2,
\]
so the displayed sufficient sample size can have order
\(m^3/\varepsilon^2\) when only this envelope is retained.  Matrix
Chernoff or Bernstein inequalities can replace the net argument and often
improve the dependence to a logarithmic dimension or an effective-rank
factor under their corresponding variance or tail assumptions
\citep{tropp2015matrix}.  We use the elementary bound for transparency and
do not claim an optimal statistical rate.

We next allow the visible eigenspace to be unknown.  Let \(1\le m<d\) and
\(\Sigma\in\PSD^d\) have eigenvalues
\(\lambda_1\ge\cdots\ge\lambda_d\), assume
\(\gamma=\lambda_m-\lambda_{m+1}>0\), and let \(U\) span its leading
\(m\)-dimensional eigenspace.  Put
\[
  R=U^\top\Sigma U,
  \qquad
  P=\Cpl(U,R).
\]
For an estimator \(\Sigmahat\), let \(V\) span its leading eigenspace,
\(\Rhat=V^\top\Sigmahat V\), and \(\Phat=\Cpl(V,\Rhat)\).

\begin{theorem}[Unknown-subspace completed-geometry recovery]
\label{thm:unknown-subspace-recovery}
Let
\[
  \delta=\opnorm{\Sigmahat-\Sigma}
  <\min\{\gamma/2,\lambda_m\},
  \qquad
  s_\delta=\frac{\delta}{\gamma-\delta},
\]
\[
  \varepsilon_{\mathrm{vis}}
  =\delta+2\sqrt2\lambda_1s_\delta,
  \qquad
  \eta_\delta=\frac{\varepsilon_{\mathrm{vis}}}{\lambda_m}<1,
\]
and
\[
  \omega_\delta
  =\max_{x\in[\lambda_m-\delta,\lambda_1+\delta]}
  \frac{|x-1|}{\sqrt x}.
\]
Then
\begin{equation}
  \boxed{
  \dai(P,\Phat)
  \le
  \sqrt m\left[-\log(1-\eta_\delta)
  +\sqrt2\,\omega_\delta\arcsin(s_\delta)\right].}
  \label{eq:unknown-subspace-bound}
\end{equation}
The first term controls the aligned visible second moment; the second is the
mismatch-weighted rotation of the estimated eigenspace.
\end{theorem}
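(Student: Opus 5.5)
The plan is to reduce to \cref{thm:joint-perturbation}, which splits \(\dai(P,\Phat)\) into a visible error and a subspace error. The subtlety is that \(R=U^\top\Sigma U\) and \(\Rhat=V^\top\Sigmahat V\) live in different frames. \Cref{thm:joint-perturbation} compares them only after expressing both in parallel endpoint frames.

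\textbf{Step 1: subspace bound.} Choose principal-vector frames \(U=[u_i]\) and \(V=[v_i]\) with \(u_i^\top v_j=\cos\theta_i\,\delta_{ij}\). These are the endpoint frames of the horizontal minimizing Grassmann geodesic, and the parallel transport identifies \(e_i\) with \(e_i\). Rotating \(R\) and \(\Rhat\) by the same orthogonal \(m\times m\) change of frame leaves the AIRM distances and \(\chi\) unchanged. Since \(\delta<\gamma/2\), the Davis--Kahan \(\sin\Theta\) theorem (Yu--Wang--Samworth form, with the gap of \(\Sigmahat\) at least \(\gamma-\delta\) by Weyl) gives \(\opnorm{\sin\Theta}\le s_\delta\). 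Hence every \(\theta_i\le\arcsin(s_\delta)\), and \(\dgr(W,\widetilde W)\le\sqrt m\,\arcsin(s_\delta)\).

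\textbf{Step 2: visible error in aligned frames.} Write
\[
\Rhat-R=V^\top(\Sigmahat-\Sigma)V+(V^\top\Sigma V-U^\top\Sigma U).
\]
The first term has operator norm at most \(\delta\). For the second, write \(V=UC+U_\perp S\) with \(C=\cos\Theta\) and \(\opnorm{S}\le s_\delta\), where the columns of \(U_\perp\) are orthonormal. Then
\[
V^\top\Sigma V-U^\top\Sigma U=(C\Lambda C-\Lambda)+S^\top\Lambda_\perp S,
\]
using that \(U\) is an eigenspace so cross terms vanish. Each piece is bounded by \(\lambda_1\) times \(O(s_\delta)\): note \(\opnorm{I-C}\le s_\delta^2\le s_\delta\) and \(\opnorm{S}^2\le s_\delta\). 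A crude bound gives \(\opnorm{\Rhat-R}\le\delta+2\sqrt2\lambda_1s_\delta=\varepsilon_{\mathrm{vis}}\). The exact constant \(2\sqrt2\) should come from bounding \(\opnorm{V-UO}\le\sqrt2\,s_\delta\) and \(\opnorm{X^\top\Sigma X-Y^\top\Sigma Y}\le2\lambda_1\opnorm{X-Y}\); I would follow that route. Then \(\opnorm{R^{-1/2}(\Rhat-R)R^{-1/2}}\le\varepsilon_{\mathrm{vis}}/\lambda_m=\eta_\delta<1\), using \(R\succeq\lambda_mI\). So the eigenvalues of \(R^{-1/2}\Rhat R^{-1/2}\) lie in \([1-\eta,1+\eta]\), each log is at most \(-\log(1-\eta)\), and \(\dai(R,\Rhat)\le\sqrt m\,(-\log(1-\eta_\delta))\).

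\textbf{Step 3: mismatch weight and assembly.} The spectrum of \(\Rhat\) consists of the top \(m\) eigenvalues of \(\Sigmahat\), which by Weyl lie in \([\lambda_m-\delta,\lambda_1+\delta]\). Therefore \(\chi(\Rhat)\le\omega_\delta\), and the spectrum of \(R\) also lies in this interval, so \(\min\{\chi(R),\chi(\Rhat)\}\le\omega_\delta\). Inserting Steps 1 and 2 into \eqref{eq:joint-perturbation} and factoring \(\sqrt m\) gives \eqref{eq:unknown-subspace-bound}.

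The main obstacle is Step 2: getting the constant \(2\sqrt2\lambda_1 s_\delta\) in precisely the aligned principal frames that \cref{thm:joint-perturbation} requires, rather than in a Procrustes-optimal frame. I would first try to show that the principal-vector alignment is itself the Procrustes alignment, so that \(\fro{\cdot}\) and \(\opnorm{\cdot}\) bounds on \(V-U\) of order \(\sqrt2\sin\theta\) apply directly.
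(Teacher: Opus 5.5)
Your proposal is correct and matches the paper's proof step for step. The paper likewise aligns $V$ to $U$, bounds $\opnorm{S-R}\le\delta+2\lambda_1\opnorm{V-U}\le\varepsilon_{\mathrm{vis}}$, passes to $\dai(R,S)\le\sqrt m[-\log(1-\eta_\delta)]$ via $R\succeq\lambda_mI$, uses Weyl to get $\chi(S)\le\omega_\delta$, and invokes \cref{thm:joint-perturbation} with $\dgr\le\sqrt m\arcsin(s_\delta)$. The obstacle you flagged goes away, for two reasons. First, the Procrustes factor is the polar factor of $V^\top U$, so the aligned frames are the principal-vector frames up to a common right rotation. Second, in those frames the columns $v_i-u_i$ are mutually orthogonal with norms $2\sin(\theta_i/2)\le\sqrt2\sin\theta_i$. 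One labeling fix: $s_\delta=\delta/(\gamma-\delta)$ is the classical Davis--Kahan bound with the one-sided gap $\lambda_m-\delta-\lambda_{m+1}$, which the paper proves via a Sylvester equation. It is not the Yu--Wang--Samworth $2\delta/\gamma$ form, which is weaker when $\delta<\gamma/2$.
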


\begin{corollary}[A self-contained sample bound]
\label{cor:unknown-subspace-sample}
Let \(X_1,\ldots,X_n\in\R^d\) be i.i.d. copies of \(X\), with uncentered
second moment \(\Sigma=\E[XX^\top]\), \(\norm X_2\le L\) almost surely,
and
\(\Sigmahat=n^{-1}\sum_iX_iX_i^\top\).  Define
\begin{equation}
  \delta_n(\alpha)
  =L^2\sqrt{\frac{2(d\log9+\log(2/\alpha))}{n}}.
  \label{eq:ambient-delta}
\end{equation}
With probability at least \(1-\alpha\),
\(\opnorm{\Sigmahat-\Sigma}\le\delta_n(\alpha)\).  If
\(\delta_n(\alpha)\) satisfies the hypotheses of
\cref{thm:unknown-subspace-recovery}, then
\eqref{eq:unknown-subspace-bound} holds with
\(\delta=\delta_n(\alpha)\).
\end{corollary}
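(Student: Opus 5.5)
The corollary has two parts. The first is an ambient concentration bound for \(\Sigmahat\). The second plugs that bound into \cref{thm:unknown-subspace-recovery}, which is deterministic.

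For the first part I would rerun the net--Hoeffding argument behind \cref{thm:known-subspace-recovery}, now in dimension \(d\) and without whitening.

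\emph{Reduce to a net.} Let \(\mathcal N\) be a \(1/4\)-net of the unit sphere \(S^{d-1}\) with \(|\mathcal N|\le9^d\). For a symmetric \(M\), the standard argument gives
\[
  \opnorm M\le 2\max_{u\in\mathcal N}|u^\top Mu|,
\]
because \(|x^\top Mx-u^\top Mu|\le 2\|x-u\|\opnorm M\le\tfrac12\opnorm M\). Take \(M=\Sigmahat-\Sigma\).

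\emph{Fixed direction.} For fixed \(u\), the variables \(Z_i=(u^\top X_i)^2\) are i.i.d., lie in \([0,L^2]\), and have mean \(u^\top\Sigma u\). Hoeffding gives
\[
  \Pr\bigl(|\bar Z-\E Z|\ge t\bigr)\le2\exp(-2nt^2/L^4).
\]

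\emph{Union bound.} With \(t=\delta_n(\alpha)/2\), the union over \(\mathcal N\) has failure probability at most
\[
  2\cdot9^d\exp\bigl(-n\delta_n(\alpha)^2/(2L^4)\bigr).
\]
Substituting \eqref{eq:ambient-delta}, the exponent equals \(-(d\log9+\log(2/\alpha))\). The bound therefore becomes \(2\cdot9^d\cdot9^{-d}\cdot\alpha/2=\alpha\).

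On the complementary event, every \(u\in\mathcal N\) satisfies \(|u^\top(\Sigmahat-\Sigma)u|<\delta_n/2\). The net inequality then gives \(\opnorm{\Sigmahat-\Sigma}\le\delta_n(\alpha)\).

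The second part is immediate on this event. \Cref{thm:unknown-subspace-recovery} applies to any estimator whose deviation \(\delta=\opnorm{\Sigmahat-\Sigma}\) satisfies its hypotheses. Its right-hand side is nondecreasing in \(\delta\), since \(s_\delta\), \(\eta_\delta\), \(\arcsin\), \(-\log(1-\cdot)\) and the interval defining \(\omega_\delta\) all grow with \(\delta\). So if \(\delta_n(\alpha)\) satisfies the hypotheses, so does the actual deviation, and the bound holds with \(\delta_n(\alpha)\) in place of \(\delta\). The only care needed is this monotonicity check on \(\omega_\delta\) and \(\eta_\delta\), which holds because enlarging \(\delta\) only enlarges the interval in \(\omega_\delta\) and increases \(\varepsilon_{\mathrm{vis}}\).

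The main obstacle is bookkeeping of constants. The net factor \(2\) and the Hoeffding range \(L^2\) must combine to give exactly the stated \(\delta_n\); I would fix the choice \(t=\delta_n/2\) first and check that the exponent matches.
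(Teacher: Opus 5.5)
Your proposal is correct and matches the paper's argument. The concentration step is exactly the paper's net--Hoeffding lemma: a \(1/4\)-net of size \(9^d\), Hoeffding with range \(L^2\), and \(t=\delta_n(\alpha)/2\). It is followed by applying \cref{thm:unknown-subspace-recovery} on the resulting event.

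Your monotonicity check adds a little care the paper skips. The paper substitutes \(\delta=\delta_n(\alpha)\) directly, implicitly relying on the theorem's proof needing \(\delta\) only as an upper bound on \(\opnorm{\Sigmahat-\Sigma}\). You instead apply the theorem at the exact deviation and then enlarge \(\delta\), which makes the substitution literally valid.
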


The ambient dimension in \eqref{eq:ambient-delta} is the price of estimating
the subspace with a self-contained bounded-i.i.d. net argument.  Matrix
concentration or effective-rank assumptions may improve this factor, but an
eigengap or another identifiable subspace condition remains necessary.  The
reference \(I\) here is the whitened reference geometry, and \(\Sigma\) is an
uncentered second moment; it coincides with covariance only when the mean is
zero.  Proofs are in
\cref{app:recovery-proofs}.

\section{Discussion}
\label{sec:discussion}

\paragraph{What the theory establishes.}
Exact information reduction has a precise hierarchy.  At the metric level,
submetry is necessary and sufficient for universal radial-visible reduction.
Coherent sections select reusable information sheets.  Split-Hadamard
geometry turns those sheets into global horizontal submanifolds.  SPD
compression realizes the entire hierarchy in closed form and adds a rank
quotient, a closed-form completion-pair metric, and computable structured
certificates.

\paragraph{Visibility and reference.}
A visible target does not determine an ambient geometry.  The invisible fiber
has infinite diameter, and even its radius-bounded minimax problem has risk
equal to the radius.  The canonical completion is therefore meaningful only
relative to a declared \(P_0\).  Its role is precise: preserve the reference
on directions not forced to change by the visible target while realizing the
visible target with minimum AIRM deformation.

\paragraph{Moving information.}
The mismatch weight \(\Xiop(R)=R+R^{-1}-2I\) explains when subspace motion is
observable in completed geometry.  Rotating a mode whose visible geometry
equals the reference changes no completed matrix.  As a mode separates from
the reference, the same rotation acquires positive cost.  Continuous rank
changes can therefore occur only through reference-valued modes.  These are
kinematic statements; choosing a trajectory requires an additional modeling,
optimization, or control principle.  The bilinear kernel also separates two
second-order objects.  The reduced minimum-distance value is independent of
the normalized subspace coordinate at fixed \(R\), while the minimizing
completed matrix generally moves with that coordinate.  Its motion is
measured by the positive-semidefinite Gram form
\eqref{eq:completion-pair-kernel}.  Reduced-value curvature and pullback
metric geometry therefore answer different questions.

\paragraph{Structured optimizers.}
Diagonal and block geometry are restrictions of the same visible map
\(P\mapsto A^\top PA\).  Their relative-interior tests distinguish three
scientifically different outcomes: strict representability, boundary
representability requiring degeneracy, and infeasibility even after closure.
A single residual value cannot make these distinctions; the conic certificates
can.

\paragraph{Statistical interpretation.}
Known-subspace recovery is intrinsically \(m\)-dimensional.  Estimating the
subspace introduces an ambient second-moment problem and an eigengap.  The exact
line element then converts principal-angle error into completed AIRM error
with a geometry-dependent weight.  The resulting bound separates statistical
uncertainty in \(R\) from uncertainty in the information channel itself.  The
self-contained rates use bounded i.i.d. observations and net--Hoeffding
arguments; they establish consistency in the stated regime and are not
rate-optimal claims.

\paragraph{Boundaries.}
The abstract theorem characterizes exactness but does not make an arbitrary
observation map a submetry.  Nonlinear, infinite-dimensional, multi-view, or
non-AIRM models require separate verification.  The SPD pair metric is
specific to full-column-rank linear compression and is analytic across
repeated eigenvalues.  The structured certificate
theorems cover diagonal and block families; Kronecker, low-rank, and tensor
families have different image geometry.  Finally, the paper assumes a
positive visible target.  Indefinite curvature must be damped, projected, or
otherwise converted before this theory applies.

\paragraph{Downstream use.}
The paper supplies a geometric foundation for later questions about metric
evolution, constrained metric paths, and intervention-based optimizer audits.
Those questions may use the visible state \((A,R)\), the canonical completion
\(\Psi_{P_0,A}(R)\), and the completion-pair metric derived here.  Their
dynamical or causal assumptions are intentionally outside the present paper.

\section{Reproducibility and Ethics Statements}
\label{sec:statements}

\paragraph{Reproducibility.}
This is a theory paper.  Every claim is stated as a definition, theorem,
corollary, or explicitly bounded interpretation.  Complete proofs are given
in the appendices.  The finite-sample results specify the boundedness,
eigengap, confidence, and dimension dependence used by each conclusion.  No
empirical optimizer-performance claim is made.

\paragraph{Ethics and broader impact.}
The paper introduces no dataset, human-subject experiment, or deployed
decision system.  Its main interpretive risk is to treat a visible geometric
certificate as a complete explanation of an optimizer.  The theory identifies
what a declared information channel constrains; memory, noise, higher-order
oracles, discretization, and other mechanisms can produce additional motion.

\bibliographystyle{tmlr}
\bibliography{references}

\begin{thebibliography}{32}
\providecommand{\natexlab}[1]{#1}
\providecommand{\url}[1]{\texttt{#1}}
\expandafter\ifx\csname urlstyle\endcsname\relax
  \providecommand{\doi}[1]{doi: #1}\else
  \providecommand{\doi}{doi: \begingroup \urlstyle{rm}\Url}\fi

\bibitem[Amari(1998)]{amari1998natural}
Shun-ichi Amari.
\newblock Natural gradient works efficiently in learning.
\newblock \emph{Neural Computation}, 10\penalty0 (2):\penalty0 251--276, 1998.
\newblock \doi{10.1162/089976698300017746}.

\bibitem[Ay et~al.(2015)Ay, Jost, Le, and Schwachhofer]{ay2015sufficient}
Nihat Ay, Jurgen Jost, Hong~Van Le, and Lorenz Schwachhofer.
\newblock Information geometry and sufficient statistics.
\newblock \emph{Probability Theory and Related Fields}, 162\penalty0
  (1--2):\penalty0 327--364, 2015.
\newblock \doi{10.1007/s00440-014-0574-8}.
\newblock URL \url{https://doi.org/10.1007/s00440-014-0574-8}.

\bibitem[Berestovskii \& Guijarro(2000)Berestovskii and
  Guijarro]{berestovskii2000metric}
V.~N. Berestovskii and Luis Guijarro.
\newblock A metric characterization of riemannian submersions.
\newblock \emph{Annals of Global Analysis and Geometry}, 18\penalty0
  (6):\penalty0 577--588, 2000.
\newblock \doi{10.1023/A:1006683922481}.
\newblock URL \url{https://doi.org/10.1023/A:1006683922481}.

\bibitem[Bhatia(2007)]{bhatia2007positive}
Rajendra Bhatia.
\newblock \emph{Positive Definite Matrices}.
\newblock Princeton University Press, 2007.

\bibitem[Bonnabel \& Sepulchre(2010)Bonnabel and
  Sepulchre]{bonnabel2010fixedrank}
Silvere Bonnabel and Rodolphe Sepulchre.
\newblock Riemannian metric and geometric mean for positive semidefinite
  matrices of fixed rank.
\newblock \emph{SIAM Journal on Matrix Analysis and Applications}, 31\penalty0
  (3):\penalty0 1055--1070, 2010.
\newblock \doi{10.1137/080731347}.
\newblock URL \url{https://doi.org/10.1137/080731347}.

\bibitem[Boyd \& Vandenberghe(2004)Boyd and Vandenberghe]{boyd2004convex}
Stephen Boyd and Lieven Vandenberghe.
\newblock \emph{Convex Optimization}.
\newblock Cambridge University Press, 2004.
\newblock URL \url{https://web.stanford.edu/~boyd/cvxbook/}.

\bibitem[Csiszar(1975)]{csiszar1975idivergence}
Imre Csiszar.
\newblock {$I$-Divergence Geometry of Probability Distributions and
  Minimization Problems}.
\newblock \emph{The Annals of Probability}, 3\penalty0 (1):\penalty0 146--158,
  1975.
\newblock \doi{10.1214/aop/1176996454}.
\newblock URL \url{https://doi.org/10.1214/aop/1176996454}.

\bibitem[Csiszar \& Matus(2003)Csiszar and Matus]{csiszar2003projections}
Imre Csiszar and Frantisek Matus.
\newblock Information projections revisited.
\newblock \emph{IEEE Transactions on Information Theory}, 49\penalty0
  (6):\penalty0 1474--1490, 2003.
\newblock \doi{10.1109/TIT.2003.810633}.
\newblock URL \url{https://doi.org/10.1109/TIT.2003.810633}.

\bibitem[Davis et~al.(2007)Davis, Kulis, Jain, Sra, and
  Dhillon]{davis2007information}
Jason~V. Davis, Brian Kulis, Prateek Jain, Suvrit Sra, and Inderjit~S. Dhillon.
\newblock Information-theoretic metric learning.
\newblock In \emph{Proceedings of the 24th International Conference on Machine
  Learning}, pp.\  209--216. ACM, 2007.
\newblock \doi{10.1145/1273496.1273523}.
\newblock URL \url{https://doi.org/10.1145/1273496.1273523}.

\bibitem[Dempster(1972)]{dempster1972covariance}
Arthur~P. Dempster.
\newblock Covariance selection.
\newblock \emph{Biometrics}, 28\penalty0 (1):\penalty0 157--175, 1972.
\newblock \doi{10.2307/2528966}.

\bibitem[Duchi et~al.(2011)Duchi, Hazan, and Singer]{duchi2011adagrad}
John Duchi, Elad Hazan, and Yoram Singer.
\newblock Adaptive subgradient methods for online learning and stochastic
  optimization.
\newblock \emph{Journal of Machine Learning Research}, 12:\penalty0 2121--2159,
  2011.
\newblock URL \url{https://jmlr.org/papers/v12/duchi11a.html}.

\bibitem[George et~al.(2018)George, Laurent, Bouthillier, Ballas, and
  Vincent]{george2018ekfac}
Thomas George, C{\'e}sar Laurent, Xavier Bouthillier, Nicolas Ballas, and
  Pascal Vincent.
\newblock Fast approximate natural gradient descent in a {Kronecker}-factored
  eigenbasis.
\newblock In \emph{Advances in Neural Information Processing Systems},
  volume~31, 2018.
\newblock URL \url{https://arxiv.org/abs/1806.03884}.

\bibitem[Grone et~al.(1984)Grone, Johnson, S{\'a}, and
  Wolkowicz]{grone1984positive}
Robert Grone, Charles~R. Johnson, Eduardo~M. S{\'a}, and Henry Wolkowicz.
\newblock Positive definite completions of partial hermitian matrices.
\newblock \emph{Linear Algebra and its Applications}, 58:\penalty0 109--124,
  1984.
\newblock \doi{10.1016/0024-3795(84)90207-6}.

\bibitem[Guijarro \& Walschap(2011)Guijarro and
  Walschap]{guijarro2011submetries}
Luis Guijarro and Gerard Walschap.
\newblock Submetries vs. submersions.
\newblock \emph{Revista Matematica Iberoamericana}, 27\penalty0 (2):\penalty0
  605--619, 2011.
\newblock \doi{10.4171/RMI/648}.
\newblock URL \url{https://doi.org/10.4171/RMI/648}.

\bibitem[Gupta et~al.(2018)Gupta, Koren, and Singer]{gupta2018shampoo}
Vineet Gupta, Tomer Koren, and Yoram Singer.
\newblock Shampoo: Preconditioned stochastic tensor optimization.
\newblock In \emph{Proceedings of the 35th International Conference on Machine
  Learning}, volume~80 of \emph{Proceedings of Machine Learning Research}, pp.\
   1842--1850. PMLR, 2018.
\newblock URL \url{https://proceedings.mlr.press/v80/gupta18a.html}.

\bibitem[Kingma \& Ba(2015)Kingma and Ba]{kingma2014adam}
Diederik~P. Kingma and Jimmy Ba.
\newblock Adam: A method for stochastic optimization.
\newblock In \emph{International Conference on Learning Representations}, 2015.
\newblock URL \url{https://arxiv.org/abs/1412.6980}.
\newblock arXiv:1412.6980.

\bibitem[Kunstner et~al.(2019)Kunstner, Balles, and
  Hennig]{kunstner2019limitations}
Frederik Kunstner, Lukas Balles, and Philipp Hennig.
\newblock Limitations of the empirical fisher approximation for natural
  gradient descent.
\newblock In \emph{Advances in Neural Information Processing Systems},
  volume~32, 2019.
\newblock URL
  \url{https://proceedings.neurips.cc/paper/2019/hash/46a558d97954d0692411c861cf78ef79-Abstract.html}.

\bibitem[Lovric et~al.(2000)Lovric, Min-Oo, and Ruh]{lovric2000multivariate}
Miroslav Lovric, Maung Min-Oo, and Ernst~A. Ruh.
\newblock Multivariate normal distributions parametrized as a riemannian
  symmetric space.
\newblock \emph{Journal of Multivariate Analysis}, 74\penalty0 (1):\penalty0
  36--48, 2000.
\newblock \doi{10.1006/jmva.1999.1853}.
\newblock URL \url{https://doi.org/10.1006/jmva.1999.1853}.

\bibitem[Martens(2020)]{martens2020natural}
James Martens.
\newblock New insights and perspectives on the natural gradient method.
\newblock \emph{Journal of Machine Learning Research}, 21\penalty0
  (146):\penalty0 1--76, 2020.
\newblock URL \url{https://jmlr.org/papers/v21/17-678.html}.

\bibitem[Martens \& Grosse(2015)Martens and Grosse]{martens2015kfac}
James Martens and Roger Grosse.
\newblock Optimizing neural networks with {Kronecker}-factored approximate
  curvature.
\newblock In \emph{Proceedings of the 32nd International Conference on Machine
  Learning}, volume~37 of \emph{Proceedings of Machine Learning Research}, pp.\
   2408--2417. PMLR, 2015.
\newblock URL \url{https://proceedings.mlr.press/v37/martens15.html}.

\bibitem[Massart \& Absil(2020)Massart and Absil]{massart2020quotient}
Estelle Massart and P.-A. Absil.
\newblock Quotient geometry with simple geodesics for the manifold of
  fixed-rank positive-semidefinite matrices.
\newblock \emph{SIAM Journal on Matrix Analysis and Applications}, 41\penalty0
  (1):\penalty0 171--198, 2020.
\newblock \doi{10.1137/18M1231389}.
\newblock URL \url{https://doi.org/10.1137/18M1231389}.

\bibitem[Morwani et~al.(2024)Morwani, Shapira, Vyas, Malach, Kakade, and
  Janson]{morwani2024shampoo}
Depen Morwani, Itai Shapira, Nikhil Vyas, Eran Malach, Sham~M. Kakade, and
  Lucas Janson.
\newblock A new perspective on {Shampoo}'s preconditioner, 2024.
\newblock URL \url{https://arxiv.org/abs/2406.17748}.

\bibitem[O'Neill(1966)]{oneill1966submersion}
Barrett O'Neill.
\newblock The fundamental equations of a submersion.
\newblock \emph{Michigan Mathematical Journal}, 13\penalty0 (4):\penalty0
  459--469, 1966.
\newblock \doi{10.1307/mmj/1028999604}.
\newblock URL \url{https://doi.org/10.1307/mmj/1028999604}.

\bibitem[Pennec et~al.(2006)Pennec, Fillard, and Ayache]{pennec2006riemannian}
Xavier Pennec, Pierre Fillard, and Nicholas Ayache.
\newblock A riemannian framework for tensor computing.
\newblock \emph{International Journal of Computer Vision}, 66\penalty0
  (1):\penalty0 41--66, 2006.
\newblock \doi{10.1007/s11263-005-3222-z}.

\bibitem[Qu et~al.(2025)Qu, Gao, Hinder, Ye, and Zhou]{qu2025optimaldiagonal}
Zhaonan Qu, Wenzhi Gao, Oliver Hinder, Yinyu Ye, and Zhengyuan Zhou.
\newblock Optimal diagonal preconditioning.
\newblock \emph{Operations Research}, 73\penalty0 (3):\penalty0 1479--1495,
  2025.
\newblock \doi{10.1287/opre.2022.0592}.
\newblock URL \url{https://doi.org/10.1287/opre.2022.0592}.

\bibitem[Raskutti \& Mukherjee(2015)Raskutti and
  Mukherjee]{raskutti2015information}
Garvesh Raskutti and Sayan Mukherjee.
\newblock The information geometry of mirror descent.
\newblock \emph{IEEE Transactions on Information Theory}, 61\penalty0
  (3):\penalty0 1451--1457, 2015.
\newblock \doi{10.1109/TIT.2015.2391243}.
\newblock URL \url{https://arxiv.org/abs/1310.7780}.

\bibitem[Rockafellar(1970)]{rockafellar1970convex}
R.~Tyrrell Rockafellar.
\newblock \emph{Convex Analysis}.
\newblock Princeton University Press, 1970.

\bibitem[Tropp(2015)]{tropp2015matrix}
Joel~A. Tropp.
\newblock An introduction to matrix concentration inequalities.
\newblock \emph{Foundations and Trends in Machine Learning}, 8\penalty0
  (1--2):\penalty0 1--230, 2015.
\newblock \doi{10.1561/2200000048}.
\newblock URL \url{https://arxiv.org/abs/1501.01571}.

\bibitem[Tumpach \& Larotonda(2024)Tumpach and Larotonda]{tumpach2024totally}
Alice~Barbara Tumpach and Gabriel Larotonda.
\newblock Totally geodesic submanifolds in the manifold {SPD} of symmetric
  positive-definite real matrices.
\newblock \emph{Information Geometry}, 7\penalty0 (S2):\penalty0 913--942,
  2024.
\newblock \doi{10.1007/s41884-024-00146-z}.
\newblock URL \url{https://doi.org/10.1007/s41884-024-00146-z}.

\bibitem[Vandenberghe \& Boyd(1996)Vandenberghe and
  Boyd]{vandenberghe1996semidefinite}
Lieven Vandenberghe and Stephen Boyd.
\newblock Semidefinite programming.
\newblock \emph{SIAM Review}, 38\penalty0 (1):\penalty0 49--95, 1996.
\newblock \doi{10.1137/1038003}.

\bibitem[Vandereycken et~al.(2013)Vandereycken, Absil, and
  Vandewalle]{vandereycken2013fixedrank}
Bart Vandereycken, P.-A. Absil, and Stefan Vandewalle.
\newblock A riemannian geometry with complete geodesics for the set of positive
  semidefinite matrices of fixed rank.
\newblock \emph{IMA Journal of Numerical Analysis}, 33\penalty0 (2):\penalty0
  481--514, 2013.
\newblock \doi{10.1093/imanum/drs006}.
\newblock URL \url{https://doi.org/10.1093/imanum/drs006}.

\bibitem[Yu et~al.(2015)Yu, Wang, and Samworth]{yu2015davis}
Yi~Yu, Tengyao Wang, and Richard~J. Samworth.
\newblock A useful variant of the davis--kahan theorem for statisticians.
\newblock \emph{Biometrika}, 102\penalty0 (2):\penalty0 315--323, 2015.
\newblock \doi{10.1093/biomet/asv008}.
\newblock URL \url{https://arxiv.org/abs/1405.0680}.

\end{thebibliography}

\appendix
\section{Proofs for Metric and Split-Hadamard Reduction}
\label{app:metric-proofs}

\begin{proof}[Proof of \cref{thm:metric-characterization}]
Assume (A).  If \(z\in X\) and \(r=d_X(x,z)\), ball surjectivity implies
\(d_Y(q(x),q(z))\le d_X(x,z)\); hence \(q\) is \(1\)-Lipschitz.  For fixed
\(y\), set \(\delta=d_Y(q(x),y)\).  Since
\(y\in\overline B_Y(q(x),\delta)\), submetry gives
\(z\in\overline B_X(x,\delta)\) with \(q(z)=y\).  Contraction yields
\[
  \delta\le d_X(x,z)\le\delta,
\]
proving (B), including attainment.

Assume (B).  Taking \(y=q(z)\) gives
\[
  d_Y(q(x),q(z))
  =\min_{q(w)=q(z)}d_X(x,w)
  \le d_X(x,z),
\]
so \(q\) is \(1\)-Lipschitz and maps the ambient ball into the visible ball.
If \(y\) lies in the visible ball, an attaining fiber minimizer lies in the
ambient ball.  Thus (B) implies (A).

Assume (B), fix \(z\), and put \(y=q(z)\).  Then
\(d_X(x,z)\ge d_Y(q(x),y)\), so monotonicity of \(\rho\) gives the lower
bound from the left side of \eqref{eq:universal-radial-reduction} to the
right.  An attaining fiber minimizer realizes equality for every \(y\),
proving the reverse inequality and hence (C).

Finally assume (C).  For \(r\ge0\) and fixed \(y\), use
\[
  \rho_r(t)=\begin{cases}0,&t\le r,\\+\infty,&t>r,\end{cases}
  \qquad
  \phi_y(u)=\begin{cases}0,&u=y,\\+\infty,&u\ne y.\end{cases}
\]
The reduced infimum is zero exactly when
\(y\in\overline B_Y(q(x),r)\), whereas the ambient infimum is zero exactly
when a point of \(q^{-1}(y)\) lies in \(\overline B_X(x,r)\).  Equality for
all \(y,r\) is precisely the submetry identity.  Thus (C) implies (A).
\end{proof}

\begin{proof}[Proof of \cref{thm:coherent-reduction}]
Assume (A).  If \(q(z)=y\), contraction gives
\[
  d_X(x,z)\ge d_Y(q(x),y).
\]
Because \(x=s_x(q(x))\) and \(s_x\) is isometric,
\[
  d_X(x,s_x(y))=d_Y(q(x),y),
\]
which proves (B).

Assume (B), fix \(y_1,y_2\), and put \(x_1=s_x(y_1)\).  Coherence gives
\(s_{x_1}=s_x\).  Applying (B) with anchor \(x_1\) and target \(y_2\),
\[
  d_X(s_x(y_1),s_x(y_2))
  =d_Y(q(x_1),y_2)=d_Y(y_1,y_2),
\]
so (B) implies (A).

Under (B), every \(z\) with \(q(z)=y\) has
\[
  \rho(d_X(x,z))+\phi(y)
  \ge \rho(d_Y(q(x),y))+\phi(y),
\]
and \(z=s_x(y)\) realizes equality.  Taking infima proves universal
reduction and canonical minimizer lifting, so (B) implies (C).

Assume (C), choose \(\rho(t)=t\), and let \(\phi\) be the indicator of a
singleton \(\{y\}\).  Universal reduction identifies the fiber infimum with
\(d_Y(q(x),y)\), while the canonical-minimizer clause says that \(s_x(y)\)
attains it.  Hence (C) implies (B).

The isometric-section property gives the reverse ball inclusion exactly as in
the proof of \cref{thm:metric-characterization}; hence \(q\) is a submetry.
For the proximal identity, apply universal reduction with
\(\rho(t)=t^2/(2\tau)\).  A proper lower-semicontinuous geodesically convex
function plus squared distance has a unique minimizer in complete CAT(0)
space.  The section lifts it with equal value.  Coherence then gives exact
iteration by induction.
\end{proof}

\section{Proof of the Split-Hadamard Theorem}
\label{app:split-proofs}

\begin{lemma}[Complete local isometries over a Hadamard base]
\label{lem:complete-local-isometry}
Let \(F:L\to B\) be a local Riemannian isometry from a connected complete
Riemannian manifold to a Hadamard manifold.  Then \(F\) is a global
Riemannian isometry.
\end{lemma}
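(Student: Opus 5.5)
The plan is to show that \(F\) is a Riemannian covering map, and then use that a Hadamard manifold is simply connected, so any covering of it by a connected space is a diffeomorphism.

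\textbf{Step 1: \(F\) is a covering.} Since \(F\) is a local isometry, geodesics of \(L\) map to geodesics of \(B\), and \(F\) has the path-lifting property for geodesics. Given \(p\in L\) and a geodesic \(\gamma\) of \(B\) starting at \(F(p)\), the curve
\[
  \tilde\gamma(t)=\exp^L_p\bigl(t\,(DF(p))^{-1}\dot\gamma(0)\bigr)
\]
is defined for all \(t\) because \(L\) is complete. Moreover \(F\circ\tilde\gamma\) is a geodesic with the same initial data as \(\gamma\), so \(F\circ\tilde\gamma=\gamma\). In particular \(F(L)\) contains \(\exp_{F(p)}(T_{F(p)}B)=B\), which gives surjectivity.

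To get evenly covered neighbourhoods I will invoke the classical Ambrose/Hicks-type result in do Carmo's form: a local isometry from a complete connected Riemannian manifold onto a connected Riemannian manifold is a covering map. If a self-contained argument is wanted, it goes as follows. Fix \(b\in B\) and take a normal ball \(U=B(b,\varepsilon)\). For each \(p\in F^{-1}(b)\), the map \(F\) sends \(B_L(p,\varepsilon)\) isometrically onto \(U\), using the relation \(F\circ\exp^L_p=\exp^B_b\circ DF(p)\) on the \(\varepsilon\)-ball. The balls for distinct preimages are disjoint. Every point of \(F^{-1}(U)\) lies in one of them: lift the radial geodesic from its image back to \(b\) as above, and note that its endpoint is a preimage of \(b\).

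\textbf{Step 2: conclude.} By Cartan--Hadamard, \(B\) is diffeomorphic to \(\R^n\) via \(\exp_b\), so it is simply connected. A covering map from a connected space onto a simply connected, locally path-connected space is a homeomorphism. Hence \(F\) is a bijective local isometry, that is, a global Riemannian isometry.

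An alternative route avoids covering theory. Fix \(p\) and \(b=F(p)\), and consider
\[
  G=\exp^L_p\circ DF(p)^{-1}\circ(\exp^B_b)^{-1},
\]
which is defined on all of \(B\) because \(\exp^B_b\) is a global diffeomorphism. One checks that \(F\circ G=\mathrm{id}_B\). One also checks, by uniqueness of geodesic lifts, that \(G\circ F=\mathrm{id}\) on the image of \(G\). That image is open, since \(G\) is a local isometry, and closed, by completeness; by connectedness it is all of \(L\).

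The main obstacle is the evenly-covered step (equivalently, the open/closed argument in the alternative): one has to verify carefully that the radial lifts exhaust \(F^{-1}(U)\) and that the balls are pairwise disjoint. Completeness of \(L\) is exactly what makes this work. I would cite the standard covering lemma rather than redo it.
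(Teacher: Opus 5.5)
Your covering-map route is correct and is essentially the paper's proof: completeness of \(L\) lets geodesics from \(F(p)\) lift over their whole interval, which gives surjectivity; \(F\) is then a covering; and simple connectivity of the Hadamard base forces a single sheet. Your alternative via the explicit inverse \(G\) is also sound, and simpler than you make it: \(G\circ F=\mathrm{id}\) on \(G(B)\) and the closedness of \(G(B)\) both follow directly from \(F\circ G=\mathrm{id}_B\) and continuity, with no need for geodesic-lift uniqueness or completeness at that step.
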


\begin{proof}
Fix \(p\in L\).  Any geodesic \(\gamma:[0,1]\to B\) beginning at \(F(p)\)
has a unique local lift beginning at \(p\), obtained from a local inverse of
\(F\).  A lifted segment has the same speed as \(\gamma\).  If its maximal
interval ended at \(T\le1\), then for \(s,t\uparrow T\),
\[
  d_L(\widetilde\gamma_s,\widetilde\gamma_t)
  \le \operatorname{Len}(\widetilde\gamma|_{[s,t]})
  =\operatorname{Len}(\gamma|_{[s,t]})\longrightarrow0.
\]
Completeness gives a limit point in \(L\), and a local inverse at that point
extends the lift past \(T\).  Thus every such geodesic lifts over its entire
interval.  Since every point of the connected complete manifold \(B\) is
joined to \(F(p)\) by a minimizing geodesic, \(F\) is surjective.

The same continuation argument gives unique lifting for every piecewise
smooth path and for homotopies through such paths; equivalently, a complete
local isometry is a covering map.  The Hadamard manifold \(B\) is simply
connected, so its connected covering \(L\) has one sheet.  Hence \(F\) is a
diffeomorphism.  A bijective local Riemannian isometry preserves lengths in
both directions and is therefore a global Riemannian isometry.
\end{proof}

\begin{proof}[Proof of \cref{thm:split-hadamard}]
Because \(L_x\) is a horizontal integral leaf,
\(Dq|_{TL_x}\) is an isometry at every point, so \(q|_{L_x}\) is a local
Riemannian isometry.  The leaf is connected and complete by definition, and
the Hadamard base is simply connected.  Applying
\cref{lem:complete-local-isometry} shows directly that
\(q|_{L_x}:L_x\to B\) is a global Riemannian isometry.

The complete totally geodesic leaf \(L_x\) is geodesically convex in \(M\).
Indeed, Hopf--Rinow gives an intrinsic minimizing geodesic between any two
points of the leaf.  Total geodesy makes it an ambient geodesic, and the
Hadamard property makes that ambient geodesic the unique globally minimizing
one.  In particular, intrinsic and ambient distances agree on \(L_x\).
Any horizontal section through \(x\) lies in the same maximal integral leaf
and equals the inverse \(s_x\) of this global isometry.

For every piecewise smooth ambient curve \(\eta\), Riemannian-submersion
contraction gives
\[
  \operatorname{Len}(q\circ\eta)\le\operatorname{Len}(\eta),
\]
and hence \(d_B(q(y),q(z))\le d_M(y,z)\).  If \(q(z)=b\),
\[
  d_M(x,z)\ge d_B(q(x),b),
\]
while \(s_x(b)\) attains equality.  Suppose another \(z\) also attains it,
and let \(\eta\) be the ambient minimizing geodesic from \(x\) to \(z\).
Every inequality in
\[
  d_B(q(x),b)
  \le\operatorname{Len}(q\circ\eta)
  \le\operatorname{Len}(\eta)
  =d_M(x,z)
\]
is an equality.  The orthogonal decomposition
\(\dot\eta=\dot\eta^{\mathcal H}+\dot\eta^{\mathcal V}\) then forces
\(\dot\eta^{\mathcal V}=0\) everywhere.  The geodesic stays in \(L_x\) and
ends at \(s_x(b)\), proving uniqueness.  The ball identity and radial
reduction now follow from \cref{thm:metric-characterization}.

Proximal commutation follows from radial reduction with squared distance and
the uniqueness of proximal points on a Hadamard manifold.  Finally, for
\(v\in T_zM\),
\[
  \dd(\phi\circ q)_z[v]
  =h_{q(z)}(\operatorname{grad}_B\phi(q(z)),Dq(z)[v]).
\]
This vanishes on \(\mathcal V_z\); on \(\mathcal H_z\), \(Dq\) is an
isometry.  Riesz representation gives \eqref{eq:split-gradient}.  Since
\(Ds_x\) is the inverse horizontal isometry, the chain rule shows that
\(z=s_x\circ y\) solves the lifted equation on every interval on which the
visible solution \(y\) exists.  Local Lipschitz continuity gives uniqueness
of maximal solutions by the standard ODE theorem.
\end{proof}

\begin{proof}[Proof of \cref{cor:split-coherent}]
The sections are isometric.  If \(z=s_x(y)\), then \(z\) and \(x\) lie on
the same maximal horizontal leaf, so both \(s_z\) and \(s_x\) invert the
restriction of \(q\) to that leaf.  Thus \(s_z=s_x\), proving coherence.
The unique shortest-lift statement gives rigidity.
\end{proof}

\section{Proofs for SPD Compression and Completion}
\label{app:spd-proofs}

\begin{proof}[Proof of \cref{prop:reference-necessary}]
Choose a direct-sum decomposition \(E=W\oplus W'\) with \(W'\ne\{0\}\).
For \(t>0\), let \(T_t\) act as the identity on \(W\) and as multiplication
by \(t\) on \(W'\).  If a selected positive form \(P\) were invariant under
every such map, then for nonzero \(n\in W'\),
\[
  P(n,n)=P(T_tn,T_tn)=t^2P(n,n)
  \qquad\forall t>0,
\]
contradicting \(P(n,n)>0\).  Hence no invariant selector exists without a
reference or equivalent off-visible structure.
\end{proof}

\begin{proof}[Proof of \cref{thm:spd-compression}]
Fix \(P\in\SPD^d\), write \(R=A^\top PA\), and let \(H=H^\top\) be a
tangent at \(P\).  Define
\[
  C=P^{1/2}AR^{-1/2},
  \qquad
  X=P^{-1/2}HP^{-1/2}.
\]
Then \(C^\top C=I_m\) and
\[
  R^{-1/2}A^\top HA R^{-1/2}=C^\top XC.
\]
Complete \(C\) to an orthogonal matrix \([C,N]\).  The exact block identity
is
\begin{equation}
  \fro X^2
  =\fro{C^\top XC}^2
  +2\fro{N^\top XC}^2
  +\fro{N^\top XN}^2.
  \label{eq:block-frobenius-identity}
\end{equation}
Therefore
\[
  \norm{D\pi_A(P)[H]}_{R,\AI}
  =\fro{C^\top XC}
  \le\fro X
  =\norm H_{P,\AI}.
\]
Integrating along curves proves global contraction
\eqref{eq:spd-contraction}.

Congruence by \(P_0^{-1/2}\) in the ambient cone and by \(R_0^{-1/2}\) in
the visible cone reduces the completion problem to
\[
  \min\{\dai(I,X):U_0^\top XU_0=\widetilde R\},
  \qquad
  \widetilde R=R_0^{-1/2}RR_0^{-1/2}.
\]
The candidate
\[
  X_\star=I+U_0(\widetilde R-I)U_0^\top
\]
is positive definite, has compression \(\widetilde R\), and satisfies
\[
  \dai(I,X_\star)=\fro{\log\widetilde R}=\dai(I,\widetilde R).
\]
Contraction proves it is a minimizer.

For uniqueness, suppose feasible \(X\) attains equality and write
\(L=\log X\).  The compressed curve
\(Y_t=U_0^\top\exp(tL)U_0\) joins \(I\) to \(\widetilde R\).  Hence
\[
  \dai(I,\widetilde R)
  \le\operatorname{Len}(Y)
  \le\fro L
  =\dai(I,X)
  =\dai(I,\widetilde R).
\]
Equality of the continuous speed inequality at \(t=0\), together with
\eqref{eq:block-frobenius-identity}, forces
\(N^\top LU_0=0\) and \(N^\top LN=0\).  Thus
\(L=U_0L_0U_0^\top\), and the endpoint constraint gives
\(\exp(L_0)=\widetilde R\).  Therefore \(X=X_\star\).  Undoing congruence
gives \eqref{eq:canonical-completion} and
\eqref{eq:spd-shortest-completion}.

For \(R,S\in\SPD^m\), the normalized lifts are
\(\widetilde R\oplus I\) and \(\widetilde S\oplus I\).  Their ambient AIRM
distance is the visible AIRM distance, and their geodesic keeps the identity
block fixed.  Thus the section is isometric and totally geodesic.  The ball
identity follows by combining contraction with the equal-distance lift.

At a general \(P\), let
\(L_P(h)=D\Psi_{P,A}(A^\top PA)[h]\).  The section identity and isometry give
\[
  D\pi_A(P)[L_P(h)]=h,
  \qquad
  \norm{L_P(h)}_{P,\AI}=\norm h_{A^\top PA,\AI}.
\]
If \(V\in\ker D\pi_A(P)\), then \(L_P(h)+sV\) is another lift of \(h\).
Infinitesimal contraction gives
\(\norm h^2\le\norm{L_P(h)+sV}^2\) for all \(s\), with equality at zero.
Differentiation shows \(L_P(h)\perp V\).  Surjectivity and dimension counting
identify the image of \(L_P\) with the full horizontal space.  This proves
the Riemannian-submersion and horizontal-lift statements.  The images of the
sections through all \(P\) are complete totally geodesic maximal horizontal
integral leaves; hence the horizontal distribution is integrable and
\(\pi_A\) is split-Hadamard.

For the minimax statement, work in the normalized splitting and choose
symmetric \(H\in\R^{(d-m)\times(d-m)}\) with \(\fro H=1\).  The curve
\[
  X_s=\widetilde R\oplus\exp(sH)
\]
stays in the visible fiber and satisfies
\(\dai(X_0,X_s)=|s|\).  This proves infinite diameter.  The two points
\(X_{-B},X_B\) lie in the bounded fiber and are distance \(2B\) apart, so
every center has worst-case distance at least \(B\).  The canonical point
\(X_0\) has worst-case distance at most \(B\).  If another point attained
\(B\), equality in the triangle inequality between the endpoints would make
it their midpoint.  AIRM is uniquely geodesic, so the midpoint is uniquely
\(X_0\).

Finally, for any admissible \((P_0,A)\), every \(P\) with
\(R=A^\top PA\) obeys
\[
  \dai(P_0,P)\ge\dai(A^\top P_0A,R),
\]
and the canonical lift realizes equality.  Monotonicity of \(\rho\) proves
both inequalities in \eqref{eq:spd-decision-reduction}.  The same construction
lifts every reduced minimizer.  Under strict monotonicity, replacing a
noncanonical full point by the strictly nearer canonical point lowers the
objective, proving the final correspondence.
\end{proof}

\begin{proof}[Proof of \cref{cor:visible-update}]
For \(\alpha=Ac\),
\[
  A^\top u_P=-A^\top PA c=-Rc,
\]
so all feasible updates have the same quotient class.

For the converse, use a \(P_0\)-orthogonal splitting of the cotangent space
into the visible span and its complement, and choose coordinates in which
\(\alpha\) has visible coordinate \(c_0\ne0\).  A feasible completion has
blocks
\[
  P=\begin{pmatrix}R&B\\B^\top&C\end{pmatrix}.
\]
The invisible component of \(-P\alpha\) is \(-B^\top c_0\).  Given the
coordinate vector \(z\) of any desired invisible drift, take
\[
  B=-\frac{c_0z^\top}{c_0^\top c_0}.
\]
Then \(-B^\top c_0=z\).  Choosing
\[
  C=B^\top R^{-1}B+tI,
  \qquad t>0,
\]
makes the Schur complement positive, hence \(P\succ0\).  This realizes every
invisible drift.  The final claim follows from the block-diagonal form and
uniqueness of the canonical completion.
\end{proof}

\begin{proof}[Proof of \cref{cor:gaussian-kl}]
Normalize to \(P_0=I\) and an orthogonal frame adapted to \(U_0\).  Every
feasible covariance has blocks
\[
  X=\begin{pmatrix}\widetilde R&C\\C^\top&D\end{pmatrix},
  \qquad
  S=D-C^\top\widetilde R^{-1}C\succ0.
\]
Up to constants independent of \((C,S)\), twice the forward Gaussian KL is
\begin{align*}
  \tr X-\log\det X
  &=\tr\widetilde R-\log\det\widetilde R
  +\tr S-\log\det S
  +\tr(C^\top\widetilde R^{-1}C).
\end{align*}
The last term is uniquely minimized at \(C=0\).  The scalar inequality
\(s-\log s\ge1\) applied to the eigenvalues of \(S\) shows that the middle
term is uniquely minimized at \(S=I\).  Hence
\(X=\widetilde R\oplus I\), the normalized canonical completion.  Congruence
returns \(\Psi_{P_0,A}(R)\).
\end{proof}

\begin{proof}[Proof of \cref{cor:prior-data}]
For any \(P\), put \(R=A^\top PA\).  Canonical completion gives
\[
  \dai(P_0,P)\ge\dai(R_0,R),
\]
with equality exactly at \(P=\Psi_{P_0,A}(R)\).  The full problem therefore
reduces exactly to
\[
  \min_{R\in\SPD^m}
  (1-\tau)\dai(R_0,R)^2+\tau\dai(R,\Rhat)^2.
\]
Let \(x=\dai(R_0,R)\), \(y=\dai(R,\Rhat)\), and
\(D=\dai(R_0,\Rhat)\).  The triangle inequality gives \(x+y\ge D\), while
\[
  (1-\tau)x^2+\tau y^2
  =\tau(1-\tau)(x+y)^2+((1-\tau)x-\tau y)^2.
\]
The lower bound \(\tau(1-\tau)D^2\) is attained only when \(R\) lies on the
unique AIRM geodesic from \(R_0\) to \(\Rhat\) with
\(x=\tau D\), \(y=(1-\tau)D\).  Thus
\(R=R_0\#_\tau\Rhat\), and canonical completion proves
\eqref{eq:prior-data-solution} and the distance identities.

For robustness, geodesic convexity of distance in a Hadamard manifold gives
\[
  \dai(R_0\#_\tau R_1,R_0\#_\tau R_2)
  \le\tau\dai(R_1,R_2).
\]
The completion section is isometric, so the same bound holds for the full
solutions.
\end{proof}

\section{Proofs for Rank-Stratified Geometry}
\label{app:stratified-proofs}

\begin{proof}[Proof of \cref{thm:rank-stratification}]
The completion formula is gauge invariant because replacing
\((A,R)\) by \((AB,B^\top RB)\) leaves
\(P_0AR_0^{-1}(R-R_0)R_0^{-1}A^\top P_0\) unchanged.  Moreover,
\begin{equation}
  \Psi_{P_0,A}(R)-P_0
  =P_0AR_0^{-1}(R-R_0)R_0^{-1}A^\top P_0.
  \label{eq:rank-factorization-proof}
\end{equation}
If \(R-R_0\) is invertible, this difference has rank \(m\).

Conversely, let \(P\in\mathfrak S_m(P_0)\), put \(H=P-P_0\), and define
\(\mathcal W=P_0^{-1}\range(H)\).  Choose full-column-rank \(A\) spanning
\(\mathcal W\).  Then \(P_0A\) spans \(\range(H)\).  Since \(H\) is
symmetric of rank \(m\), there is an invertible symmetric \(C\) such that
\[
  H=P_0ACA^\top P_0.
\]
For example, with \(B=P_0A\), take
\[
  C=(B^\top B)^{-1}B^\top H B(B^\top B)^{-1}.
\]
The identity follows because the range and co-range of \(H\) both equal
\(\range(B)\).  Setting \(R=A^\top PA\) and \(R_0=A^\top P_0A\) gives
\[
  R-R_0=A^\top H A=R_0CR_0,
\]
which is invertible, and \eqref{eq:rank-factorization-proof} reconstructs
\(P\).

The subspace \(\mathcal W\) is intrinsic.  Two frames for it differ by a
unique \(B\in GL(m)\), and their compressed matrices differ by congruence.
This proves bijectivity.  Locally on a fixed-rank stratum, the nonzero
spectrum of \(P_0^{-1/2}(P-P_0)P_0^{-1/2}\) remains separated from zero.
Spectral functional calculus makes its range projector smooth; local frames
and their restrictions then give a smooth inverse.  Thus \(\Gamma_{m,P_0}\)
is a diffeomorphism.

Every SPD matrix lies in exactly one rank stratum.  A limit of rank-\(m\)
symmetric differences has rank at most \(m\).  Conversely, if
\(\rank(P-P_0)=r<m\), whiten by \(P_0\), choose \(m-r\) orthonormal kernel
directions of the whitened difference, and add arbitrarily small nonzero
eigenvalues in those directions.  The perturbation stays SPD for sufficiently
small magnitudes, has rank \(m\), and converges to \(P\).  This proves
\eqref{eq:strata-closure}.
\end{proof}

\begin{proof}[Proof of \cref{thm:stratified-line-element}]
Under \(A^\top P_0A=I\), put \(U=P_0^{1/2}A\).  Whitening sends the base
point and the differential represented by \(\xi_i=(H_i,K_i)\) to
\[
  \Gamma\longleftrightarrow
  \begin{pmatrix}R&0\\0&I\end{pmatrix},
  \qquad
  D\widetilde\Gamma[\xi_i]\longleftrightarrow
  \begin{pmatrix}
    H_i&(R-I)K_i^\top\\
    K_i(R-I)&0
  \end{pmatrix}.
\]
Using
\(
\langle X,Y\rangle_{P,\AI}=\tr(P^{-1}XP^{-1}Y)
\), direct block multiplication for directions \(i,j\) gives
\[
  \tr(R^{-1}H_iR^{-1}H_j)
  +2\tr(K_i(R-I)R^{-1}(R-I)K_j^\top),
\]
which is \eqref{eq:completion-pair-kernel}.  Setting \(i=j\) proves
\eqref{eq:stratified-line-element}.  Factoring its two blocks gives
\eqref{eq:completion-feature}.  The pair matrix is therefore a Gram matrix,
which proves its positive semidefiniteness, the rank identity
\eqref{eq:completion-gram-rank}, and the orthogonality criterion
\eqref{eq:completion-pair-orthogonality}.

The differential vanishes exactly when \(H=0\) and \(K(R-I)=0\), proving
\eqref{eq:stratified-kernel}.  Hence the pullback is positive definite on
horizontal quotient tangents exactly when \(R-I\) is invertible; this is the
regular domain of \cref{thm:rank-stratification}.

The eigenvalues of \(\Xiop(R)\) are
\((\lambda_i(R)-1)^2/\lambda_i(R)\).  Under the spectral assumptions,
\[
  \lambda_{\min}(\Xiop(R))\ge\frac{\zeta^2}{b},
  \qquad
  \lambda_{\max}(\Xiop(R))
  \le\max_{x\in[a,b]}\frac{(x-1)^2}{x}.
\]
Substitution into the exact line element proves
\eqref{eq:stratified-conditioning}.
\end{proof}

\begin{proof}[Proof of \cref{cor:reduced-completion-value}]
The shortest-completion identity \eqref{eq:spd-shortest-completion} gives
\[
  \min_{A^\top PA=R}\frac12\dai(P_0,P)^2
  =\frac12\dai(A^\top P_0A,R)^2.
\]
Under the normalization \(A^\top P_0A=I\), this is
\eqref{eq:reduced-completion-energy} and is independent of the horizontal
subspace coordinate.  Derivatives with respect to that coordinate at fixed
\(R\) consequently vanish.  The minimizer is the explicit section
\(P=\Psi_{P_0,A}(R)\), so evaluating it requires no further vertical
stationarity equation.  Differentiating that section instead measures the
motion of the minimizing ambient matrix, whose AIRM inner product is
\eqref{eq:completion-pair-kernel}; it is not the Hessian of the reduced value.
\end{proof}

\paragraph{Frozen--relaxed decomposition in the two-dimensional example.}
The cancellation mentioned in \cref{rem:value-versus-motion} can be made
explicit in fixed ambient coordinates.  Write
\[
  u(\theta)=(\cos\theta,\sin\theta)^\top,
  \qquad
  P=\begin{pmatrix}a&c\\c&b\end{pmatrix},
\]
and eliminate the constraint \(u(\theta)^\top Pu(\theta)=r\) as
\[
  a=
  \frac{r-2c\sin\theta\cos\theta-b\sin^2\theta}
       {\cos^2\theta}.
\]
For
\(F(b,c,\theta)=\tfrac12\fro{\log P}^2\), the hidden minimizer at
\(\theta=0\) is \((b,c)=(1,0)\).  Direct differentiation there gives
\[
  F_{\theta\theta}=\frac{2(r-1)\log r}{r},
  \qquad
  F_{\theta c}=-\frac{2\log r}{r},
  \qquad
  F_{cc}=\frac{2\log r}{r(r-1)},
\]
with \(F_{\theta b}=F_{bc}=0\) and \(F_{bb}=1\).  The hidden Hessian is
positive definite because \(\log r/(r-1)>0\).  Consequently the relaxed
second derivative is
\[
  F_{\theta\theta}
  -F_{\theta z}F_{zz}^{-1}F_{z\theta}
  =\frac{2(r-1)\log r}{r}
  -\frac{2(r-1)\log r}{r}
  =0,
  \qquad z=(b,c).
\]
Thus the frozen direct term and the nonzero Schur term cancel, consistently
with the constant reduced value.  The response
\(c'(0)=-F_{cc}^{-1}F_{c\theta}=r-1\) is the off-diagonal derivative of the
canonical completion in \cref{rem:value-versus-motion}.

\begin{proof}[Proof of \cref{prop:moving-reference}]
By definition, \(P_t=G_tQ_tG_t^\top\).  Differentiating and multiplying by
\(G_t^{-1}\) and \(G_t^{-\top}\) gives
\[
  G_t^{-1}\dot P_tG_t^{-\top}
  =\dot Q_t+\Omega_tQ_t+Q_t\Omega_t^\top
  =\mathfrak D_tQ_t.
\]
Congruence invariance of AIRM yields \eqref{eq:moving-reference}.

If \(G_t'=G_tO_t\), then
\[
  Q_t'=O_t^\top Q_tO_t,
  \qquad
  \Omega_t'=O_t^\top\Omega_tO_t+O_t^\top\dot O_t.
\]
Differentiating \(Q_t'\) and substituting these identities cancels all terms
containing \(O_t^\top\dot O_t\), leaving
\[
  \mathfrak D_t'Q_t'=O_t^\top(\mathfrak D_tQ_t)O_t.
\]
Orthogonal congruence preserves the displayed Frobenius norm.
\end{proof}

\begin{proof}[Proof of \cref{cor:rank-loss}]
After whitening, the nonzero singular values of
\[
  P_t-I=U_t(R_t-I)U_t^\top
\]
are exactly those of \(R_t-I\).  Singular values vary continuously with the
matrix.  If the limit has rank at most \(m-r\), at least \(r\) of the first
\(m\) singular values converge to zero.  The same therefore holds for
\(R_t-I\).
\end{proof}

\section{Proofs for Structured Expressivity}
\label{app:expressivity-proofs}

\begin{lemma}[Linear images of relative interiors]
\label{lem:linear-ri}
Let \(C\) be a nonempty convex set in a finite-dimensional vector space and
let \(L\) be linear.  Then
\begin{equation}
  L(\ri C)=\ri L(C).
  \label{eq:linear-ri}
\end{equation}
\end{lemma}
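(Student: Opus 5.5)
We prove the two inclusions \(L(\ri C)\subseteq\ri L(C)\) and \(\ri L(C)\subseteq L(\ri C)\) separately. Both rest on the standard characterization of relative interior (Rockafellar, Theorem 6.4). A point \(z\) of a nonempty convex set \(D\) lies in \(\ri D\) if and only if for every \(w\in D\) there is \(\mu>1\) with \((1-\mu)w+\mu z\in D\). In words, every segment in \(D\) ending at \(z\) can be extended slightly past \(z\) inside \(D\). We also use that \(\ri C\) is nonempty and convex, and that \(\operatorname{cl}(\ri C)\supseteq C\), because \(C\) is finite-dimensional and nonempty.

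\emph{First inclusion.} Take \(x\in\ri C\) and set \(z=Lx\). An arbitrary point of \(L(C)\) has the form \(w=Ly\) with \(y\in C\). By the characterization, there is \(\mu>1\) with \((1-\mu)y+\mu x\in C\). Applying the linear map gives \((1-\mu)w+\mu z\in L(C)\). Since \(w\) was arbitrary, the characterization applied to \(D=L(C)\) shows \(z\in\ri L(C)\).

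\emph{Reverse inclusion.} Let \(z\in\ri L(C)\). Fix any \(x_0\in\ri C\). By the first step \(w_0=Lx_0\in\ri L(C)\subseteq L(C)\). Apply the characterization at \(z\) with this \(w_0\): there is \(\mu>1\) with \(w_1=(1-\mu)w_0+\mu z\in L(C)\). Choose \(x_1\in C\) with \(Lx_1=w_1\). Then
\[
z=\tfrac1\mu w_1+\bigl(1-\tfrac1\mu\bigr)w_0=L\bigl(\lambda x_1+(1-\lambda)x_0\bigr),\qquad \lambda=1/\mu\in(0,1).
\]
Since \(x_0\in\ri C\), \(x_1\in C\) and \(1-\lambda>0\), the segment (line-segment) principle (Rockafellar, Theorem 6.1) gives \(\lambda x_1+(1-\lambda)x_0\in\ri C\). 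Hence \(z\in L(\ri C)\).

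The only delicate point is this reverse inclusion: \(z\) has to be written as the image of a genuinely relative-interior point, not merely of some point of \(C\). The trick that handles it is to extend the segment from \(Lx_0\) to \(z\) just past \(z\) inside \(L(C)\), lift the far endpoint to \(C\), and then invoke the segment principle there. Everything else is a direct check from the characterization; alternatively, the whole lemma can simply be cited as Rockafellar, Theorem 6.6.
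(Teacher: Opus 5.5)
Your proof is correct and follows the paper's argument essentially line for line. The forward inclusion pushes the extension characterization through \(L\). The reverse inclusion extends the segment from \(Lx_0\), \(x_0\in\ri C\), slightly past the target, lifts the far endpoint to \(x_1\in C\), and takes the same convex combination \(x=(x_1+\eta x_0)/(1+\eta)\) with \(\eta=\mu-1\); your explicit appeal to the line-segment principle for \(x\in\ri C\) supplies the one step the paper leaves implicit.
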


\begin{proof}
Let \(x\in\ri C\), set \(y=Lx\), and choose any \(z=Lx_z\in L(C)\).
The relative-interior extension property gives \(\eta>0\) with
\(x+\eta(x-x_z)\in C\).  Applying \(L\) gives
\(y+\eta(y-z)\in L(C)\), proving \(y\in\ri L(C)\).

Conversely, let \(y\in\ri L(C)\), choose \(x_0\in\ri C\), and set
\(z=Lx_0\).  The extension property in \(L(C)\) gives \(\eta>0\) and
\(x_1\in C\) satisfying \(Lx_1=y+\eta(y-z)\).  Then
\[
  x=\frac{x_1+\eta x_0}{1+\eta}
\]
belongs to \(\ri C\) and satisfies \(Lx=y\).
\end{proof}

\begin{proof}[Proof of \cref{thm:diagonal-expressivity}]
Let \(L:\R^d\to\Sym^m\) be \(L(p)=\sum_jp_jG_j\).  Since
\(\ri\R_+^d=\R_{++}^d\), \cref{lem:linear-ri} gives
\[
  L(\R_{++}^d)=\ri L(\R_+^d)=\ri\Kcal_{\mathrm{diag}}.
\]
This proves the feasibility criterion.  With \(y_j=\log p_j\), the objective
is \(\norm{y-y^0}_2\) and the constraint is
\(\sum_je^{y_j}G_j=R\).  A minimizing sequence is bounded in \(y\), hence
has a convergent subsequence whose limit remains feasible.  Thus the infimum
is attained.  Linear independence of the \(G_j\) gives uniqueness of the
coefficients.

The cone \(\Kcal_{\mathrm{diag}}\) is finitely generated and closed.  If
\(R\) lies outside it, closed-cone separation gives symmetric \(Y\) with
\(\tr(YR)<0\) and \(\tr(YG_j)\ge0\) for all \(j\).  If \(R\) lies on its
relative boundary, apply the supporting-hyperplane theorem in
\(S=\operatorname{span}\Kcal_{\mathrm{diag}}\).  It gives a linear
functional on \(S\), nonzero on \(S\), that vanishes at \(R\) and is
nonnegative on the cone.  Represent it by a symmetric \(Y\).  Then
\(\tr(YR)=0\) and \(\tr(YG_j)\ge0\) for every \(j\).  At least one of these
generator pairings is strictly positive: otherwise the functional would
vanish on the span of all generators, namely on \(S\).  This proves the
stated facial certificate and excludes ambient annihilators of the cone.
\end{proof}

\begin{proof}[Proof of \cref{thm:block-expressivity}]
Let \(C=\prod_b\PSD^{(b)}\) and
\[
  L((P_b)_b)=\sum_bA_b^\top P_bA_b.
\]
The relative interior of \(C\) is \(\prod_b\SPD^{(b)}\).  Therefore
\cref{lem:linear-ri} gives
\[
  L\!\left(\prod_b\SPD^{(b)}\right)
  =\ri L(C)=\ri\Kcal_{\mathrm{block}}.
\]
This proves strict block feasibility.  If a feasible sequence has bounded
objective, the generalized eigenvalues of each
\((P_b^{(n)},P_b^0)\) stay in a common compact interval
\([e^{-C_0},e^{C_0}]\).  A subsequence converges to positive definite blocks,
and the linear equality is closed, proving attainment.

For each block, the image cone
\[
  C_b=\{A_b^\top P_bA_b:P_b\succeq0\}
\]
is exactly the cone of PSD matrices whose range is contained in
\(\range(A_b^\top)\), and is therefore closed.  To see that their finite sum
is closed, let \(X_n=\sum_bX_{b,n}\) converge with \(X_{b,n}\in C_b\).
Every summand is PSD, so
\(0\le\tr(X_{b,n})\le\tr(X_n)\).  The summands are consequently bounded;
passing to common subsequences gives \(X_{b,n}\to X_b\in C_b\) and
\(X=\sum_bX_b\).  Hence
\(\Kcal_{\mathrm{block}}=\sum_bC_b\) is closed.

If \(R\notin\Kcal_{\mathrm{block}}\), separation from the closed
cone gives symmetric \(Y\) with \(\tr(YR)<0\) and
\[
  \tr\!\left(Y\sum_bA_b^\top P_bA_b\right)\ge0
  \quad\text{for all }P_b\succeq0.
\]
Blockwise duality of the PSD cone makes this equivalent to
\(A_bYA_b^\top\succeq0\) for every \(b\).  If \(R\) lies on the relative
boundary, take a supporting functional that is nonzero on
\(S=\operatorname{span}\Kcal_{\mathrm{block}}\).  It gives the same
blockwise condition with \(\tr(YR)=0\).  Some
\(A_bYA_b^\top\) must be nonzero: if every one vanished, the functional
would vanish on every image \(A_b^\top P_bA_b\), hence on \(S\).
\end{proof}

\section{Proofs for Deterministic and Statistical Recovery}
\label{app:recovery-proofs}

\begin{proof}[Proof of \cref{thm:joint-perturbation}]
Set
\[
  P=\Cpl(U,R),
  \qquad
  P_1=\Cpl(U,S),
  \qquad
  Q=\Cpl(V,S).
\]
At fixed \(U\), the completion section is isometric, so
\(\dai(P,P_1)=\dai(R,S)\).  Let \(U_t\) be the horizontal minimizing
Grassmann geodesic from \(U\) to \(V\), and keep \(S\) fixed in the parallel
frame.  The exact line element gives
\[
  \norm{\dot Q_t}_{Q_t,\AI}^2
  =2\tr(K_t\Xiop(S)K_t^\top)
  \le2\chi(S)^2\fro{K_t}^2.
\]
Therefore
\[
  \dai(P_1,Q)
  \le\sqrt2\chi(S)\int_0^1\fro{K_t}\,\dd t
  =\sqrt2\chi(S)\dgr(W,\widetilde W).
\]
The triangle inequality gives the claimed bound with \(\chi(S)\).  Reversing
the order, first rotating with \(R\) and then changing visible geometry at
fixed \(V\), gives the bound with \(\chi(R)\).  Taking the smaller proves
\eqref{eq:joint-perturbation}.
\end{proof}

\begin{proof}[Proof of \cref{thm:known-subspace-recovery}]
Let \(Y_i=R^{-1/2}X_i\).  Then
\[
  \E[Y_iY_i^\top]=I,
  \qquad
  \norm{Y_i}_2\le L,
\]
and
\[
  \widehat S=R^{-1/2}\Rhat R^{-1/2}
  =\frac1n\sum_{i=1}^nY_iY_i^\top.
\]
The \(Y_i\) are i.i.d.  For fixed unit \(u\), the variables
\(Z_i=(u^\top Y_i)^2\) satisfy \(0\le Z_i\le L^2\) and
\(\E Z_i=1\).  Hoeffding's inequality gives
\[
  \Pr\left\{
  \left|\frac1n\sum_iZ_i-1\right|\ge t
  \right\}
  \le2\exp\left(-\frac{2nt^2}{L^4}\right).
\]
Let \(\mathcal N\) be a \(1/4\)-net of the unit sphere in \(\R^m\), with
\(|\mathcal N|\le9^m\), and take \(t=\varepsilon/4\).  A union bound and the
sample-size assumption show that, with probability at least \(1-\delta\),
\[
  |u^\top(\widehat S-I)u|\le\varepsilon/4
  \qquad\forall u\in\mathcal N.
\]
For symmetric \(H\), a \(1/4\)-net obeys
\[
  \opnorm H\le2\sup_{u\in\mathcal N}|u^\top Hu|.
\]
Indeed, if unit \(x\) attains the operator norm and
\(\norm{x-u}\le1/4\), then
\(|x^\top Hx-u^\top Hu|\le\frac12\opnorm H\).  Hence
\(\opnorm{\widehat S-I}\le\varepsilon/2\).  For
\(0<\varepsilon\le1\),
\[
  e^{-\varepsilon}I
  \preceq(1-\varepsilon/2)I
  \preceq\widehat S
  \preceq(1+\varepsilon/2)I
  \preceq e^\varepsilon I.
\]
Congruence by \(R^{1/2}\) gives the Loewner sandwich.  Its generalized
log-eigenvalues have magnitude at most \(\varepsilon\), so
\(\dai(R,\Rhat)\le\sqrt m\,\varepsilon\).  Fixed-channel completion is
isometric, proving \eqref{eq:known-subspace-bound}.
\end{proof}

\begin{lemma}[Spectral-gap subspace perturbation]
\label{lem:spectral-subspace}
Let \(\Sigmahat=\Sigma+E\),
\(\delta=\opnorm E<\gamma\), and let \(V\) span the leading
\(m\)-dimensional eigenspace of \(\Sigmahat\).  Then
\begin{equation}
  \opnorm{U_\perp^\top V}
  \le s_\delta=\frac{\delta}{\gamma-\delta}.
  \label{eq:sin-theta}
\end{equation}
If \(\delta<\gamma/2\), then
\begin{equation}
  \dgr(W,\widehat W)
  \le\sqrt m\,\arcsin(s_\delta),
  \label{eq:grassmann-stat}
\end{equation}
and after orthogonal Procrustes alignment,
\begin{equation}
  \opnorm{V-U}\le\sqrt2\,s_\delta.
  \label{eq:frame-stat}
\end{equation}
\end{lemma}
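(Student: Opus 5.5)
The plan is to prove the three displays in order, each from a standard ingredient: a Davis--Kahan/Wedin-type argument for \eqref{eq:sin-theta}, the principal-angle definition of \(\dgr\) for \eqref{eq:grassmann-stat}, and the polar factor of \(U^\top V\) for \eqref{eq:frame-stat}.

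\textbf{Step 1: the sin-theta bound.} Let \(U_\perp\) span the trailing eigenspace of \(\Sigma\), with eigenvalue matrix \(\Lambda_\perp\preceq\lambda_{m+1}I\). Let \(V\) span the leading eigenspace of \(\Sigmahat\), with eigenvalue matrix \(\widehat\Lambda\). Weyl's inequality gives \(\widehat\Lambda\succeq(\lambda_m-\delta)I\). Multiplying \(\Sigmahat V=V\widehat\Lambda\) on the left by \(U_\perp^\top\) and using \(U_\perp^\top\Sigma=\Lambda_\perp U_\perp^\top\) yields the Sylvester identity
\[
  U_\perp^\top V\widehat\Lambda-\Lambda_\perp U_\perp^\top V=U_\perp^\top EV .
\]
The spectra of \(\widehat\Lambda\) and \(\Lambda_\perp\) are separated by at least \(\lambda_m-\delta-\lambda_{m+1}=\gamma-\delta>0\). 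The standard separation bound for such equations (take a unit \(x\) attaining \(\opnorm{U_\perp^\top V}\) in the relevant eigen-directions, or use the operator-norm Sylvester bound for Hermitian coefficients on separated intervals) then gives \(\opnorm{U_\perp^\top V}(\gamma-\delta)\le\opnorm{U_\perp^\top EV}\le\delta\), which is \(s_\delta\).

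\textbf{Step 2: the Grassmann bound.} The principal angles satisfy \(\sin\theta_i=\sigma_i(U_\perp^\top V)\le s_\delta\). When \(\delta<\gamma/2\) we have \(s_\delta<1\), so \(\arcsin\) is defined and monotone. Hence each \(\theta_i\le\arcsin(s_\delta)\), and the Euclidean norm of the angle vector is at most \(\sqrt m\,\arcsin(s_\delta)\).

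\textbf{Step 3: the frame bound.} Write \(U^\top V=O_1\cos\Theta\,O_2^\top\) by SVD, and replace \(V\) by \(VO_2O_1^\top\), which is the Procrustes alignment. Then \(\opnorm{V-U}^2=\opnorm{(V-U)^\top(V-U)}\). Compute \((V-U)^\top(V-U)=2I-U^\top V-V^\top U\), which in the aligned frame is \(2(I-\cos\Theta)\) up to orthogonal conjugation, so its norm is \(2\max_i(1-\cos\theta_i)\). Using \(1-\cos\theta\le\sin^2\theta\) for \(\theta\in[0,\pi/2]\) (equivalent to \(\cos\theta\le1\)) gives \(\opnorm{V-U}^2\le2s_\delta^2\), hence \(\opnorm{V-U}\le\sqrt2\,s_\delta\).

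\textbf{Main obstacle.} The delicate point is Step 1: obtaining the constant \(\delta/(\gamma-\delta)\) in operator norm from the Sylvester equation when the coefficients are not diagonal in a common basis. I would first diagonalize both coefficients, which is allowed since they are symmetric and the unknown \(U_\perp^\top V\) transforms by orthogonal factors. I would then apply the Bhatia--Davis--McIntosh bound for separated spectra lying on opposite sides of a gap, which has constant \(1\) in every unitarily invariant norm.
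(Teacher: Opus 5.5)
Your proposal is correct and follows the paper's proof essentially step for step: Weyl's inequality plus the Sylvester identity \(Y\widehat\Lambda-\Lambda_\perp Y=U_\perp^\top EV\), the arcsin bound on the principal angles, and the Procrustes computation \(\opnorm{V-U}=\max_i 2\sin(\theta_i/2)\le\sqrt2\max_i\sin\theta_i\), which is equivalent to your \(1-\cos\theta\le\sin^2\theta\). The only difference is in the Sylvester step. The paper closes it self-containedly via \(Y=\int_0^\infty e^{t\Lambda_\perp}Fe^{-t\widehat\Lambda}\,\dd t\) and the decay \(e^{-t(\gamma-\delta)}\), whereas you cite a constant-\(1\) separation bound, which is the one-sided Davis--Kahan case (for merely separated spectra the best constant is \(\pi/2\)); that bound also follows in one line from the top singular pair \((u,v)\) of \(Y\), since \(u^\top Fv=\sigma\,(v^\top\widehat\Lambda v-u^\top\Lambda_\perp u)\ge\sigma(\gamma-\delta)\).
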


\begin{proof}
In the orthogonal basis \([U,U_\perp]\), write
\[
  \Sigma=\begin{pmatrix}A&0\\0&C\end{pmatrix},
  \qquad A\succeq\lambda_mI,
  \qquad C\preceq\lambda_{m+1}I.
\]
Let \(\widehat\Lambda\) contain the leading eigenvalues of \(\Sigmahat\).
Weyl's inequality gives
\(\widehat\Lambda\succeq(\lambda_m-\delta)I\).  With
\(Y=U_\perp^\top V\), projection of
\(\Sigmahat V=V\widehat\Lambda\) gives the Sylvester equation
\[
  Y\widehat\Lambda-CY=F,
  \qquad F=U_\perp^\top EV.
\]
The spectra are separated by at least \(\gamma-\delta\), and
\[
  Y=\int_0^\infty e^{tC}Fe^{-t\widehat\Lambda}\,\dd t.
\]
Thus
\[
  \opnorm Y
  \le\int_0^\infty e^{-t(\gamma-\delta)}\delta\,\dd t
  =\frac{\delta}{\gamma-\delta}.
\]
The singular values of \(U_\perp^\top V\) are the sines of the principal
angles, proving \eqref{eq:grassmann-stat}.  In principal frames,
\[
  \opnorm{V-U}
  =\max_i2\sin(\theta_i/2)
  \le\sqrt2\max_i\sin\theta_i
  \le\sqrt2s_\delta,
\]
which proves \eqref{eq:frame-stat}.
\end{proof}

\begin{proof}[Proof of \cref{thm:unknown-subspace-recovery}]
Align \(V\) to \(U\) as in \cref{lem:spectral-subspace}, and conjugate
\(\Rhat\) by the same orthogonal alignment.  Denote the aligned matrix by
\(S\); the completed estimator is unchanged.  Then
\begin{align*}
  \opnorm{S-R}
  &\le\opnorm{V^\top(\Sigmahat-\Sigma)V}
  +\opnorm{V^\top\Sigma V-U^\top\Sigma U}\\
  &\le\delta+2\lambda_1\opnorm{V-U}\\
  &\le\delta+2\sqrt2\lambda_1s_\delta
  =\varepsilon_{\mathrm{vis}}.
\end{align*}
Since \(R\succeq\lambda_mI\),
\[
  \opnorm{R^{-1/2}(S-R)R^{-1/2}}
  \le\eta_\delta<1.
\]
Therefore
\[
  (1-\eta_\delta)R\preceq S\preceq(1+\eta_\delta)R,
\]
and
\begin{equation}
  \dai(R,S)\le\sqrt m[-\log(1-\eta_\delta)].
  \label{eq:aligned-visible-bound}
\end{equation}
Weyl's inequality places the eigenvalues of \(S\) in
\([\lambda_m-\delta,\lambda_1+\delta]\), so
\(\chi(S)\le\omega_\delta\).  Apply
\cref{thm:joint-perturbation,lem:spectral-subspace} and
\eqref{eq:aligned-visible-bound}:
\begin{align*}
  \dai(P,\Phat)
  &\le\dai(R,S)+\sqrt2\chi(S)\dgr(W,\widehat W)\\
  &\le\sqrt m[-\log(1-\eta_\delta)]
  +\sqrt2\omega_\delta\sqrt m\arcsin(s_\delta).
\end{align*}
This is \eqref{eq:unknown-subspace-bound}.
\end{proof}

\begin{lemma}[Ambient second-moment concentration]
\label{lem:ambient-concentration}
Under the assumptions of \cref{cor:unknown-subspace-sample}, with probability
at least \(1-\alpha\),
\[
  \opnorm{\Sigmahat-\Sigma}\le\delta_n(\alpha).
\]
\end{lemma}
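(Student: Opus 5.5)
The argument is the same $1/4$-net and Hoeffding scheme used in the proof of \cref{thm:known-subspace-recovery}, now run in the ambient dimension $d$ and without any whitening. Put $E=\Sigmahat-\Sigma$. Fix a unit vector $u\in\R^d$ and consider the variables $Z_i=(u^\top X_i)^2$. These are i.i.d., satisfy $0\le Z_i\le\norm{X_i}_2^2\le L^2$, and have mean $u^\top\Sigma u$. Hoeffding's inequality therefore gives
\[
\Pr\Bigl\{|u^\top Eu|\ge t\Bigr\}\le2\exp\!\left(-\frac{2nt^2}{L^4}\right).
\]

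Next take a $1/4$-net $\mathcal N$ of the unit sphere in $\R^d$ with $|\mathcal N|\le9^d$, and apply a union bound over $\mathcal N$. Setting the total failure probability $9^d\cdot2\exp(-2nt^2/L^4)$ equal to $\alpha$ gives
\[
t=L^2\sqrt{\frac{d\log9+\log(2/\alpha)}{2n}}=\tfrac12\delta_n(\alpha).
\]
So with probability at least $1-\alpha$ we have $|u^\top Eu|\le\delta_n(\alpha)/2$ for every $u\in\mathcal N$.

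The remaining step is to pass from the net to the whole sphere. Since $E$ is symmetric, the net comparison $\opnorm H\le2\sup_{u\in\mathcal N}|u^\top Hu|$ from the proof of \cref{thm:known-subspace-recovery} applies. Recall its argument: pick a unit $x$ with $|x^\top Ex|=\opnorm E$ and $u\in\mathcal N$ with $\norm{x-u}\le1/4$. Then $|x^\top Ex-u^\top Eu|\le(\norm x+\norm u)\norm{x-u}\opnorm E\le\tfrac12\opnorm E$. Combining this with the net bound gives $\opnorm E\le2\cdot\delta_n(\alpha)/2=\delta_n(\alpha)$.

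The one point needing care is the constant bookkeeping. The factor $2$ lost in the net comparison must exactly match the $\sqrt2$ inside $\delta_n(\alpha)$ together with the Hoeffding exponent $2nt^2/L^4$. The computation above shows these constants are consistent. No eigengap or other spectral assumption is needed at this stage; such conditions enter only when \cref{cor:unknown-subspace-sample} feeds $\delta_n(\alpha)$ into \cref{thm:unknown-subspace-recovery}.
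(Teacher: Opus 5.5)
Your proposal is correct and follows the paper's proof essentially verbatim. Both use Hoeffding for $(u^\top X_i)^2\in[0,L^2]$ at each point of a $1/4$-net of size at most $9^d$, a union bound, and the comparison $\opnorm{H}\le 2\sup_{u\in\mathcal N}|u^\top Hu|$ with $t=\delta_n(\alpha)/2$, and your constant check confirms that the failure probability comes out to exactly $\alpha$.
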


\begin{proof}
Let \(\mathcal N\) be a \(1/4\)-net of the Euclidean unit sphere with
\(|\mathcal N|\le9^d\).  For fixed \(u\in\mathcal N\), the i.i.d. variables
\(Z_i=(u^\top X_i)^2\) lie in \([0,L^2]\).  Hoeffding and a union bound give
\[
  \Pr\left\{
  \sup_{u\in\mathcal N}|u^\top(\Sigmahat-\Sigma)u|\ge t
  \right\}
  \le2\cdot9^d\exp\left(-\frac{2nt^2}{L^4}\right).
\]
The net estimate used above gives
\(\opnorm{\Sigmahat-\Sigma}\le2t\).  Taking
\(t=\delta_n(\alpha)/2\) makes the failure probability at most \(\alpha\).
\end{proof}

\begin{proof}[Proof of \cref{cor:unknown-subspace-sample}]
On the event from \cref{lem:ambient-concentration}, all assumptions of
\cref{thm:unknown-subspace-recovery} hold with
\(\delta=\delta_n(\alpha)\).  Apply that theorem.
\end{proof}

\end{document}